\renewcommand\footnotetextcopyrightpermission[1]{} 
\newtheorem{theorem}{Theorem}
\newtheorem{lemma}{Lemma}
\newtheorem{definition}{Definition}
\newtheorem{corollary}{Corollary}
\newtheorem{example}{Example}
\newtheorem{remark}{Remark}
\newcommand{\mycomment}[1]{\noindent [#1]}
\newcommand{\fb}[1]{\mycomment{{\color{red}{FB:~#1}}}}
\newcommand{\ed}[1]{\mycomment{{\color{green}{ED:~#1}}}}
\newcommand{\distr}[1]{\mathbb D(#1)}
\DeclareMathOperator*{\argmin}{arg\,min}
\newcommand{\last}[1]{\text{last}\left(#1\right)}
\newcommand{\paths}[1]{\text{Paths}\left(#1\right)}
\newcommand{\expect}[4]{\mathbb E^{#3}_{#1,#2}\left(#4\right)}
\newcommand{\Augment}[1]{\mbox{Augment}}
\newcommand{\clip}[3]{\mbox{clip}\left({#1,#2,#3}\right)}
\title[\textsc{ProSh}: Probabilistic  Shielding\\ for Model-free Reinforcement Learning]{\textsc{ProSh}: Probabilistic  Shielding\\ for Model-free Reinforcement Learning}
\author{Edwin Hamel-De le Court$^*$}
\affiliation{
  \institution{Imperial College}
  \city{London}
  \country{United Kingdom}}
\email{e.hamel-de-le-court@imperial.ac.uk}
\author{Gaspard Ohlmann$^*$}
\affiliation{
  \institution{}
  \city{Mulhouse}
  \country{France}}
\email{gaspard.ohlmann@outlook.com}
\author{Francesco Belardinelli}
\affiliation{
  \institution{Imperial College}
  \city{London}
  \country{United Kingdom}}
\email{francesco.belardinelli@imperial.ac.uk}
\begin{abstract}
Safety is a major concern in reinforcement learning (RL): we aim at developing RL systems that not only perform optimally, but are also safe to deploy by providing formal guarantees about their safety. To this end, we introduce Probabilistic Shielding via Risk Augmentation (\textsc{ProSh}), a model-free algorithm for safe reinforcement learning under cost constraints. \textsc{ProSh} augments the Constrained MDP state space with a risk budget and enforces safety by applying a shield to the agent's policy distribution using a learned cost critic. The shield ensures that all sampled actions remain safe in expectation. We also show that optimality is preserved when the environment is deterministic. Since \textsc{ProSh} is model-free, safety during training depends on the knowledge we have acquired about the environment. We provide a tight upper-bound on the cost in expectation, depending only on the backup-critic accuracy, that is \emph{always} satisfied during training. Under mild, practically achievable assumptions, \textsc{ProSh} guarantees safety even at training time, as shown in the experiments.
\end{abstract}
\keywords{Safe Reinforcement Learning, Formal Methods, Shielding, Probabilistic Temporal Logic, Stochastic Systems}
\newcommand{\BibTeX}{\rm B\kern-.05em{\sc i\kern-.025em b}\kern-.08em\TeX}
\def\@makefnmark{\hbox{\textsuperscript{\normalsize\@thefnmark}}}
\begin{document}


\pagestyle{fancy}
\fancyhead{}


\maketitle 

$*$ \emph{These authors contributed equally to this work.}

\section{Introduction}

A key challenge in AI is designing agents that learn to act optimally in unknown environments \cite{sutton2018reinforcement}. Reinforcement Learning (RL) -- particularly when combined with deep neural networks -- has made impressive strides in domains such as games \cite{DBLP:journals/corr/MnihKSGAWR13}, robotics \cite{SurveyRobotics}, and autonomous driving \cite{KHJMRALBS19}. However, ensuring safety during both training and deployment remains a critical barrier to real-world adoption. This has led to a growing interest in Safe RL, where agents must optimize rewards under constraints, e.g., budgeted costs or formal safety requirements.
This paper focuses on safe learning in Constrained Markov Decision Processes (CMDPs), where agents maximize expected discounted reward, while keeping cumulative expected cost below a given threshold. Traditional methods, such as Lagrangian approaches \cite{Achiam2019BenchmarkingSE,Stooke2020ResponsiveSF,Ray2019BenchmarkingSC}, allow to converge toward a safe policy, but do not provide formal guarantees on safety, neither during training nor on the provided policy.

To tackle this problem, we introduce \textsc{ProSh}: a novel probabilistic shielding method that ensures safety also at training time, in model-free RL. Unlike classic shielding methods that restrict unsafe actions based on a  model of the environment, \textsc{ProSh} operates on distributions, augments the CMDP with a risk budget, and uses a learned cost critic to guide safe exploration. Crucially, \textsc{ProSh} does not assume access to a simulator or environment abstraction, and is compatible with continuous environments.


We also show that optimizing over shielded policies in the augmented space is sufficient for constrained optimality in the base CMDP in the deterministic setting. We implement \textsc{ProSh} as an off-policy deep RL algorithm and evaluate it on standard Safe RL benchmarks. Notably, \textsc{ProSh} turns out to be safe during training in the experiments.
To summarize, the contributions of the paper are as follows.
\begin{enumerate}
    \item We introduce \textsc{ProSh}, a shielding method for CMDP that assumes no knowledge of the environment's dynamics.
    \item We provide formal safety guaranties during training, depending only on the accuracy of the learned cost critic.
    \item We show that optimizing across shielded policies in the Risk-Augmented MDP suffices for constrained optimality (in the deterministic case).
    \item We implement \textsc{ProSh} as a DRL algorithm and evaluate its optimality and safety on relevant benchmarks. In particular, \textsc{ProSh} leads to significantly less cost violations in expectation, even in early training.
\end{enumerate}

Due to space restrictions, the proofs of all the results appear in the supplementary material, which also includes the code for the experiments.


\paragraph{Related Work.} 
We refer to \cite{DBLP:journals/corr/abs-2205-10330} for a survey on Safe Reinforcement Learning, while here we focus on the works most closely related to our contribution. 
Policy-based methods are arguably the most popular approaches to Safe RL. They usually consist of extending known RL algorithms (e.g., PPO \cite{SPPO}, TRPO \cite{DBLP:journals/corr/SchulmanLMJA15}, or SAC \cite{HSAC}) with safety constraints, either by using a Lagrangian approach \cite{Achiam2019BenchmarkingSE, Stooke2020ResponsiveSF}, changing the objective function \cite{DBLP:conf/aaai/LiuDL20}, or by modifying the update process \cite{DBLP:journals/corr/abs-2002-06506}.
Several of these algorithms have become widely used for benchmarking against newer Safe RL methods, and are implemented in state-of-the-art Safe RL frameworks \cite{Achiam2019BenchmarkingSE,ji2023safetygymnasiumunifiedsafereinforcement,ji2023omnisafeinfrastructureacceleratingsafe}.
%

\textbf{Shielding}
restricts the agent’s actions during training and deployment to ensure safety \cite{ABENTShielding,DBLP:conf/concur/0001KJSB20,DBLP:conf/atal/Elsayed-AlyBAET21,DBLP:conf/ijcai/YangMRR23}. Introduced in \cite{ABENTShielding} using LTL safety formulas, shielding has been extended to probabilistic settings in \cite{DBLP:conf/concur/0001KJSB20}, although without formal guaranties. Formal safety under probabilistic constraints has been proved in \cite{HBG25}. More recent work allows shielding without prior models: \cite{SEHTF20} uses a learned safety critic to restrict actions, while \cite{GB24} combines shielding with a learned dynamics model. A survey on shielding methods is given in \cite{10039301}. Compared to prior approaches, we focus on the general case of model-free, continuous environments with probabilistic constraints, while prior approaches typically make further assumptions.


\textbf{Lagrangian approaches}
\cite{Achiam2019BenchmarkingSE,Stooke2020ResponsiveSF,Ray2019BenchmarkingSC} convert CMDPs into unconstrained problems by introducing a dual variable to penalize cost violations in the reward objective. These methods are effective in both discrete and continuous action spaces and are compatible with policy gradient algorithms (e.g. TRPO \citep{DBLP:journals/corr/SchulmanLMJA15}, PPO \citep{SPPO}). Given enough iterations and tuning, they provide good tradeoffs between cost violations and reward, but cannot guaranty safety at any time, which limits their use in safety-critical scenarios.

\textbf{State augmentation techniques}
with parameters representing how far the agent is from being unsafe have also been studied. In \cite{DBLP:journals/corr/abs-2102-11941}, the MDP is state-augmented with Lagrange multipliers. In Saute-RL \cite{DBLP:conf/icml/SootlaCJWMWA22}, the states are augmented with the remaining safety budget, which is
used to reshape the objective. We also use an augmentation approach, but we not only augment states but actions too, allowing agents more freedom in the future probabilistic repartition of the remaining safety budget.

\textbf{Q-learning} 
is a key model-free RL algorithm for discrete environments \citep{watkins1992q}, known to converge in the tabular case under mild assumptions. However, extending it to function approximation —especially with neural networks—introduces challenges \citep{Mnih2015HumanlevelCT}. Stabilization techniques such as double Q-learning \citep{hasselt2010double}, dueling networks \citep{Wang2016DuelingNA}, and distributional methods \citep{Bellemare2017ADP} help address instability. Classic Q-learning is  prone to overestimation bias \citep{thrun1993issues,hasselt2010double}. To mitigate these issues, we adopt a TD3-like method \citep{fujimoto2018td3}.
Throughout, we assume the backup $Q$-value approximates a fixed point of the Bellman operator—an assumption we discuss in the paper.

\textbf{Trust-region approaches} such as CPO \citep{Achiam2019BenchmarkingSE}, FOCOPS \cite{zhang2020first}, and Safe Policy Iteration \citep{satija2020constrained} are On-policy algorithms that rely on providing local safety bounds that hold within a small neighborhood of the current policy. This is useful to bound constraint violations of the next policy updates if the trust region is small, but the accuracy of the bound depends heavily on the size of the trust region. In addition, these methods do not provide a formal guaranty that the policy update step is safe, even late in the training.

\textbf{Lyapunov-based methods} rely on enforcing constraint satisfaction by projecting updates to satisfy a Lyapunov decrease condition defined relative to a fixed safe baseline policy \citep{chow2018lyapunov,DBLP:journals/corr/abs-1901-10031}. The guarantees provided in \citep{chow2018lyapunov,DBLP:journals/corr/abs-1901-10031} depend on accurate safety critics, local linearizations around the current policy, and a well-behaved backup policy. We rely on restricting the agent's actions to enforce safety, but we use a state-augmentation technique rather than a fixed Lyapunov function, which yields explicit, model-free training-time bounds that depend only on the backup critic’s approximation error. Furthermore, our backup policy is learned jointly with the main actor and used as a fallback within the shield, rather than as a fixed reference that defines the feasible set of actions.
\section{Background}
In this section we provide the background on Reinforcement Learning (RL) and 
the notations that will be used in the rest of the paper. This enables us to define the two key problems that we analyse: the Reinforcement Learning Problem (RLP)~\cite{10.5555/3312046}  and the Constrained RLP
(CRLP)~\cite{achiam2017constrained,garcia2015comprehensive}.

\subsection{Key Concepts}


A \textbf{Markov Decision Processes}
 (MDP) is a tuple $\mathcal
M=\langle S,A,P,s_{i}, r\rangle$, where $S$ is a
set of \textit{states}, with $s_i$ as initial state\footnote{This can be assumed wlog compared to a model with an \emph{initial probability distribution} since it is always possible to add a new initial state to such a model with an action from this initial state whose associated probability distribution is the aforementioned initial probability distribution.}; $A$ is a mapping that associates every state
$s\in S$ to a nonempty finite set $A(s)$ of \emph{actions}; for $\mathcal{SA} = \{ (s,a),~a\in A(s) \}$, $P: \mathcal{SA} \to \distr{S}$ is a
\textit{transition probability function} that maps every state-action
pair $(s,a) \in \mathcal{SA}$ to a probability
distribution over $S$; 
and $r:\mathcal{S}\mathcal{A}\mapsto \mathbb R$ is the
\emph{reward function}. 
An MDP is finite if the sets of
states and actions are finite.
Finally, a \emph{Constrained MDP} (CMDP) 
is an MDP 
additionally equipped with a \emph{cost function} $c:\mathcal{S}\mathcal{A}\to \mathbb{R}_+$.

\textbf{Paths.}
A finite (resp.~infinite) {\em path} 
is a finite
(resp.~infinite) word $\zeta=s_0 a_0 \ldots s_{n-1}a_{n-1}s_n$
(resp.~$\zeta=s_0 a_0\ldots s_n a_n\ldots$), such that for every 
$k\leq n$ (resp.~for every
$k$), $s_{k-}$ is a state in $S$, $a_{k-1}$ is an action in $A(s_{k-1})$, and $s_k$ is in the support of $P(s_{k-1},a_{k-1})$. 

\textbf{Policies.}
A {\em policy} $\pi$ 
is a
mapping that associates every finite path $\zeta$
to an action of the probability distribution $\distr{A(\last{\zeta})}$. A policy  is {\em memoryless} if $\pi(\zeta)$ only depends on $\last{\zeta}$; it is {\em deterministic} if for any finite path $\zeta$,
$\pi(\zeta)$ is a Dirac distribution; it is \emph{flipping} if for any finite path $\zeta$,
$\pi(\zeta)$ is a mixture of two Dirac distributions. 
%
Throughout the paper, we let $
\sum_{a\in A(s)} \lambda_a a$ denote 
the probability distribution corresponding to sampling action $a$ with probability $\lambda_a$. As an example, if $\pi_1$ and $\pi_2$ are two deterministic policies on $\mathcal{M}$, the policy defined for any $s\in \mathcal{S}$ as $\pi(s) = 0.5\pi_1(s) + 0.5 \pi_2(s) $ denotes the flipping policy taking in $s$ the actions $\pi_1(s)$ and $\pi_2(s)$ with equal probability.

For any policy
$\pi$ of $\mathcal M$ and  state $s\in S$, let $\mathcal
M^s_\pi$ be the {\em Markov chain} induced by $\pi$ in $\mathcal M$
starting from state $s$ (c.f.~\cite{baier2008principles}). let $\mathcal M_\pi$ denote
$\mathcal M^{s_{i}}_\pi$ and $P_\pi$ the transition
function of $\mathcal M_\pi$. We denote the usual probability measure
induced by the Markov chain $\mathcal M^s_\pi$ on $\paths{\mathcal M}$
by $\text{prob}^{s}_{\mathcal M,\pi}$. We refer to \cite{baier2008principles,BSSstochastic} for full details.

\textbf{Discounted expectations.}
For any random variable\\
$X:\paths {\mathcal M}\mapsto \mathbb R$, let $\expect{\mathcal
  M}{\pi}{s}{X}$ be the expectation of $X$ w.r.t.~the
probability distribution $\text{prob}^{s}_{\mathcal M,\pi}$, and let
$\expect{\mathcal M}{\pi}{}{X}$ denote\\ $\expect{\mathcal
  M}{\pi}{s_{i}}{X}$. 
Given a path $\zeta = s_0a_0\dots s_n a_n \dots$,
we denote the {\em discounted cumulative reward} along $\zeta$ as $\mathcal{R}(\zeta) = \sum_{t\in\mathbb N} \gamma^t_r r(s_t,a_t)$, and its {\em discounted cumulative cost} as  $\mathcal{C}(\zeta) = \sum_{t\in \mathbb{N}} \gamma^t_c c(s_t,a_t)$ , where
$\gamma_r$ and $\gamma_c$ are the discount factors for reward $r$ and cost $c$ respectively. For any policy $\pi$ if a CMDP $\mathcal{M}$, we denote the \textbf{discounted cumulative reward} of $\pi$ as
\[
\mathcal{R}_\mathcal{M}^s(\pi) = \mathop{\mathbb{E}}_{\zeta \sim \pi} \mathcal{R}^s(\zeta) = \mathop{\mathbb{E}}_{(s_t,a_t)_t\sim \pi,s_0=s} \sum_{t\in \mathbb{N}} \gamma^t_r r(s_t,a_t),
\]
and the \textbf{discounted cumulative cost} of $\pi$ as 
\[
\mathcal{C}^s_\mathcal{M}(\pi)  = \mathop{\mathbb{E}}_{\zeta \sim \pi} \mathcal{C}^s(\zeta) = \mathop{\mathbb{E}}_{(s_t,a_t)_t \sim \pi,s_0=s} \sum_{t\in \mathbb{N}} \gamma_c^t c(s_t,a_t).
\]

We let $\mathcal R_{\mathcal M}(\pi)=\mathcal R^{s_i}_{\mathcal M}(\pi)$ and $\mathcal C_{\mathcal M}(\pi)=\mathcal C^{s_i}_{\mathcal M}(\pi)$, and omit $\mathcal M$ when there is no ambiguity.

\textbf{Q-values} 
are functions \( Q: \mathcal{S}\times\mathcal{A} \to \mathbb{R} \) that estimates the expected cost of taking action \( a \) in state \( s \), and following some policy thereafter. In our context, we are especially interested in Q-values approximating the \emph{minimal discounted cost}, defined as
\[
Q_b^*(s,a) := \min_{\pi:\pi(s) = a} \mathcal{C}_{\mathcal{M}}^s(\pi).
\]
We denote by \( Q_b \) any approximation of \( Q_b^* \), and say it is an {\em \( \varepsilon \)-approximation} if \( \|Q_b - Q_b^*\|_\infty \leq \varepsilon \).

\subsection{Reinforcement Learning Problems}

This work is motivated by two central optimization problems in RL. The first, classic Reinforcement Learning problem (RLP), focuses on optimizing the expected reward~\cite{sutton2018reinforcement}. The second, constrained RLP (CRLP), introduces explicit safety requirements in the form of cost constraints~\cite{garcia2015comprehensive,achiam2017constrained}. Our method \textsc{ProSh} is designed specifically for CRLP, and aims to provide provably safe probabilistic guarantees even in the absence of a known model.

\begin{definition}[RL Problems] \label{RL_problems}
    Let $\mathcal{M}$ be an MDP, $\gamma_r$ the reward discount factor, and 
    $\Pi$ the set of policies over $\mathcal{M}$. 
    \[
(RLP): 
    \text{Find policy $\pi^*\in \Pi$ such that }
    \mathcal{R}(\pi^*)= \max_{\pi \in \Pi} \mathcal{R}(\pi)
    \]
%
%
%
Further, 
let
$\mathcal{M}'$ be a CMDP, $\gamma_r$ and $\gamma_c$ the discount factors for rewards and costs respectively, and $d>0$ be a cost threshold . 

Denote 
    $\Pi_{\leq d}$ the set of policies $\pi$ such that $\mathcal{C}(\pi)\leq d$. 
    %
    \[
    \text{ (CRLP) :
    Find policy $\pi^*\in \Pi_{\leq d}$  such that }  
    \mathcal{R}(\pi^*) = \max_{\pi \in \Pi_{\leq d}} \mathcal{R}(\pi)
    \]
    
\end{definition}
Notice that Definition~\ref{RL_problems} is not restricted to memoryless policies, as is usually the case \cite{achiam2017constrained, DBLP:journals/corr/abs-2002-06506, Achiam2019BenchmarkingSE}. This is because 
in the case of two different discount factors for the reward and the cost, optimal memoryless policies for CRLP are not guaranteed to exist \cite{Altman1999ConstrainedMD}. Furthermore, in practical implementations, 
the cost discount factor is often higher than the reward discount factor since constraint satisfaction is paramount in CRLP. The additional difficulty does not break the optimality of the method we provide in the deterministic case.

\paragraph{Shielding and State-Augmentation challenges: a motivation example.}


We illustrate the limitations of traditional shielding techniques through the simple CMDP $\mathcal M_1$ shown in Fig.~\ref{ex-1}, with $\gamma_r = \gamma_c = 1$ and cost threshold $d = 0.5$. The goal is to solve the constrained RL problem (CRLP) for $\mathcal M_1$:
%

%
\begin{figure}[H]
\centering
\begin{tikzpicture}[->, >=stealth, shorten >=1pt, semithick,
    state/.style={circle, draw, minimum size=20pt, inner sep=1pt},
    action/.style={rectangle, draw, minimum size=16pt, inner sep=2pt}
]

\node[state]    (s0) at (0,0)   {$s_0$};

\node[action]   (a0) at (1,  0.8) {$a_0$};
\node[state]    (s1) at (3.4,  0.8) {$s_1$};
\node[action]   (a1) at (1, -0.8) {$a_1$};
\node[state]    (s2) at (3.4, -0.8) {$s_2$};

\node[action]   (a2) at (5.1,  0.8) {$a_2$};
\node[action]   (a3) at (5.1, -0.8) {$a_3$};

\node[state]    (s4) at (6.8, 0)   {$s_4$};  

\path (s0) edge (a0);
\path (a0) edge node[pos=0.55, yshift=8pt] {$r=1$} 
                  node[pos=0.55, yshift=-8pt] {$c=1$} (s1);

\path (s0) edge (a1);
\path (a1) edge node[pos=0.55, yshift=8pt] {$r=0$} 
                  node[pos=0.55, yshift=-8pt] {$c=0$} (s2);

\path (s1) edge (a2);
\path (a2) edge node[pos=0.55, yshift=20pt, xshift=10pt] {$r=0$} 
                  node[pos=0.55, yshift=8pt,xshift=10pt] {$c=0$} (s4);

\path (s2) edge (a3);
\path (a3) edge node[pos=0.55, yshift=-8pt,xshift=10pt] {$r=0$} 
                  node[pos=0.55, yshift=-20pt,xshift=10pt] {$c=0$} (s4);

\end{tikzpicture}
\caption{CMDP $\mathcal M_1$ with cost threshold $d = 0.5$ and $\gamma_r= \gamma_c= 1$.}
\label{ex-1}
\end{figure}
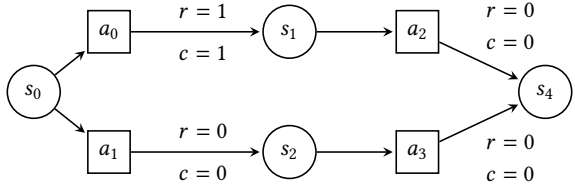

The optimal safe policy $\pi^*$ takes action $a_0$ and $a_1$ with equal probability, yielding a cost $\mathcal{C}(\pi^*)=0.5$ and a reward $\mathcal{R}(\pi^*)=0.5$. This solution cannot be found using hard safety enforcement such as classic shielding~\cite{ABENTShielding}, which would block $a_0$. Nor does a state-augmentation based approach such as \cite{DBLP:conf/icml/SootlaCJWMWA22}, as path $s_0 a_0s_1a_2s_4$ has a cumulative cost above the threshold. We claim that {\em achieving safe optimality requires reasoning over expectations}.

\section{\textsc{ProSh}: Theoretical Foundations}\label{sec-PSRA}

We now present the theoretical backbone of \text{ProSh}, a scalable shielding mechanism for CRLP that provides formal safety guarantees. %
Our approach uses \emph{risk-augmentation}, similarly to ~\cite{HBG25}, whereby each state 
is paired with a risk budget representing the expected discounted cost that the agent is allowed to incur from that point onward. Unlike traditional shielding approaches, \textsc{ProSh} considers probabilistic expectations that the main actor chooses, and enforces these two structural properties:
\begin{itemize}
    \item \textbf{Approximate consistence.} The combined risk after one step is, in expectancy, approximately equal to the risk of the current step.
    
    \item \textbf{Approximate realizability.} For every augmented state, there exists a policy whose expected cumulative cost is approximately less than or equal to the current risk. 
\end{itemize}

Our goal is to construct policies that satisfy the safety constraint \emph{throughout training and deployment}, even in the absence of a known model.
To this end, we augment the CMDP with a dynamically updated notion of risk.

\subsection{Risk-Augmented CMDPs}

Hereafter, for any $Q$-value $Q_b$ of a CMDP $\mathcal M$, for any state $s$ and action $a\in A(s)$, we let $Q_b(s)$ denote 
$\gamma_c\min_{a\in A(s)}(Q_b(s,a)) $.

\begin{definition}[Risk-augmented CMDP]\label{def-risk-aug}
For any CMDP $\mathcal M=(S,A,s_{i},P,R,C,d,\gamma_r,\gamma_c)$ with non-negative costs, maximal cost $c_{\max} = ||c||_{L^\infty(\mathcal{A}\mathcal{S})}$, and $Q$-value $Q_b$, we define $\overline{\mathcal M}^{Q_b}_d$ as the MDP with $C_{\max} = \frac{c_{max}}{(1-\gamma_c)}$, such that
\begin{itemize}
\item States : $\overline{S}= \left\{(s, x) \mid s \in S, x \in \left[-C_{\max};C_{\max}\right]\right\}$, with initial state $\overline{s}_{i}=(s_{i},d)$;
\item Actions : $\overline{A}(s,x)=\left\{(a,y)\mid a\in A(s), y\in \left[-C_{\max};C_{\max}\right]\right\}$;

\item Rewards : $\overline{r}((s,x),(a,y))=r(s,a)$;
\item Costs : $\overline{c}((s,x),(a,y))=c(s,a)$
\item Transition Probability Function $\overline{P}$: for $\overline{s}=(s,x)$,  $\overline{s}'=(s',x')$ in $\overline{S}$, and $\overline{a}=(a,y)\in \overline{A}(\overline{s})$, we have 
\[\overline{P}\left( \overline{s},\overline{a},\overline{s}' \right) =
    \begin{cases}
        0 &~\text{ if } x'\neq y-Q_b(s,a)+Q_b(s')\\
        P(s,a,s') &\text{ if }x'= y-Q_b(s,a)+Q_b(s')
    \end{cases}
\]
\end{itemize}
\end{definition}

The risk-augmented CMDP introduces a new coordinate for states and actions:  the \emph{risk}. The policy can then choose any risk for its next actions, and the transition function $\overline{P}$ simply ensures that the corresponding risk is spread coherently to each state in the non-deterministic case.
We will introduce later the conditions on the policy to ensure that the second parameter $x$ indeed plays the role of a \emph{running budget}, recording how much cumulative discounted cost the agent is allowed to incur.

We now define a subset of policies on $\overline{\mathcal{M}}$, the {\em valued policies}. 

\begin{definition}[Valued policy]
A policy $\bar\pi$ on the risk-augmented CMDP $\overline{\mathcal{M}}$
is 
\emph{valued} if there exists a mapping
\(
y_{\bar\pi}\colon\overline{\mathcal{S}}\to\bigl(A(s)\rightarrow\mathbb{R}\bigr)
\)
such that, for every augmented state $(s,x)\in\overline{\mathcal{S}}$, we can write with the notation $(a,x)_\delta = \delta_{a,x}$, the Dirac Delta distribution choosing the action $(a,x)$ with probability $1$,
\[
\bar\pi(s,x) 
=\sum_{a\in A(s)} P_{\bar\pi}(a\mid s,x)\;
      \bigl(a,\;y_{\bar\pi}(s,x)(a)\bigr)_\delta
\]
In other words, to any underlying state–risk pair $(s,x)$, the policy
first assigns a risk value $r_{\bar\pi}(s,x)(a)$ to every available
action $a\in A(s)$. 
\end{definition}

We call $y_{\bar\pi}$ a {\em valuation function}, and we let $\overline{\Pi}_{val}$ denote the set of all valued policies.

Asking the agent to attach a risk value to every possible action is not as restrictive as it looks. Indeed, we will show that considering only the set of valued policies is sufficient for optimality.

\subsection{Shielded Policies, Safety and Optimality}\label{sec:theory}

Among valued policies, we focus on those that keep the risk budget
synchronized with the CMDP dynamics. We call those the $Q_b$-shielded policies. In the rest of the paper, for any $Q$-value $Q_b$, we let $\pi_b(Q_b)(s)=\argmin_{a\in A(s)} Q_b(s,a)$, and when there is no ambiguity, we write only $\pi_b$ for $\pi_b(Q_b)$. This action is the safest according to the estimation $Q_b(s,a)$ and will be used to decrease the expected cost when necessary.
%
%
\begin{definition}[$Q_b$-Shielded policies] 
    Let $\mathcal{M}$ be a CMDP, and let $Q_b(\mathcal{M})$ be any $Q$-valued function defined on $\mathcal{A}\mathcal{S}$. Denote $\overline{\mathcal{M}}$ the corresponding augmented CMDP. A policy $\bar \pi \in \overline \Pi_{val}$ is said to be $Q_b$-\emph{shielded} if for any $(s,x)\in \bar{ \mathcal{S}}$, if $x\geq Q_b(s)$, there exist $\lambda$ and $\{y_a\}_{a\in A(s)}$, with $y_a\geq Q_b(s,a)$ for all $a\in A(s)$, such that 
%
\[\label{QbShielded-decomp}
\left\{
\begin{aligned}
    \bar{\pi}(s,x)=&  (1-\lambda)\sum_a P_{\bar \pi}((a,y_a)\mid (s,x)) (a,y_a)_\delta \\
    &+\lambda (\pi_b(s),Q_b(s,\pi_b(s)))_\delta,\\
    x \geq \gamma_c (1&-\lambda) \sum_a P_\pi((a,y_a)\mid (s,x)) y_a + \gamma_c \lambda Q_b(s,\pi_b(s)),
\end{aligned}
\right.
\]
    
    and if $x< Q_b(s)$, there exists $z\leq x$ such that 
\begin{equation*}
    \begin{cases}
    \bar{\pi}(s,x)=(\pi_b(s),z),\\
    x\geq \gamma_c z
    \end{cases}
\end{equation*}
\end{definition}

Intuitively, the policies are shielded when they always allow a budget for the actions larger than the estimated minimal cost of taking this action. Moreover, the total budget allowed for the actions is, in expectancy, smaller than the current budget.

We now define the corresponding Shield-Map, which takes any memoryless valued policy $\bar \pi$ of $\overline{\mathcal{M}}$ and outputs a $Q_b$-shielded policy.

\begin{definition}[Shield-map]\label{def:shield-map}
    Let $\bar \pi$ be a memoryless valued policy of the augmented MDP $\overline{\mathcal{M}}$. We define the {\em shield-map} $\Xi$ 
    such that, for any state $(s,x)$ of $\mathcal M$, 
    if we let $ \tilde{y}_a=\max(\tilde{y}_a,Q_b(s,a))$, and
    \[
    \tilde \pi (s,x) = \sum_{a\in A(s)} P_{\bar \pi}(a\mid s,x) (a,\tilde y_a)_\delta,\quad t=\gamma_c \sum_{a\in A(s)} P_{\bar \pi}(a\mid s,x)\tilde y_a,
    \]
    we have
    \[
    \Xi(\bar \pi)(s,x) = 
    \begin{cases}
        \tilde \pi(s,x) & \text{if }t \leq x\\
        (\pi_b(s),x/\gamma_c)_\delta & \text{if }x<Q_b(s)\\
        (1-\lambda) \tilde \pi(s,x)_\delta + \lambda (\pi_b(s),Q_b(s,\pi_b(s)))_\delta & \text{otherwise,}\\
    \end{cases}
    \]
    where 
    \[
    \lambda=\frac{\gamma_c t-x}{\gamma_c t -\gamma_c Q_b(s,\pi_b(s))}.
    \]
\end{definition}

\noindent
\textit{Intuitively}, the shield map blends the original policy with the estimated safest action $\pi_b$ so that the resulting policy always satisfies the $Q_b$-shielded condition. When the policy already respects this safety constraint, no modification is applied. If estimation errors lead to a state where any distribution would exceed the available budget, the shield falls back to the safest estimated action $\pi_b$ and allocates the entire budget to it.

Because the constraint in the definition mirrors the Bellman equation
for discounted costs, the risk $x$ is indeed an approximate upper bound of the \emph{expected future cost} of the shielded policy that starts from the augmented state $(s,x)$. The next theorem makes this intuition precise.


\begin{theorem}[Safety Bounds for $Q_b$-shields]\label{thm:safety}\label{thm:safe_nondet}
Let $\mathcal{M}$ be a CMDP with cost discount factor $\gamma_c$, and
let $Q_b^{*}$ be its optimal state–action cost function. Assume that $Q_b$ is a $Q$-value, and let $\Delta_b = ||Q_b-Q_b^*||_{L^\infty(\mathcal{S}\mathcal{A})}$.  

For any policy
$\bar\pi\in\overline\Pi_{\mathrm{val}}$ that is $Q_b$-shielded, the expected
discounted cost from the augmented state $(s_0,x_0)$ satisfies
\[
\mathcal{C}^{(s_0,x_0)}(\bar\pi)
\;\le\;
x_0+\frac{2\Delta_b}{1-\gamma_c},
\qquad
\text{whenever } x_0\ge \frac{Q_b(s_0)}{\gamma_c}.
\]

\end{theorem}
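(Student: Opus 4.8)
The plan is to exhibit the risk coordinate as an (approximate) running upper bound on the expected discounted cost-to-go of $\bar\pi$, and to make this precise through a telescoping / near-supermartingale argument. Fix a $Q_b$-shielded valued policy $\bar\pi$ and unroll a trajectory $(s_0,x_0)(a_0,y_0)(s_1,x_1)\cdots$ in $\overline{\mathcal M}$; by Definition~\ref{def-risk-aug} the risk coordinate is forced to evolve as $x_{t+1}=y_t-Q_b(s_t,a_t)+Q_b(s_{t+1})$, where recall $Q_b(s)=\gamma_c\min_aQ_b(s,a)$. Throughout I condition on the history $h_t$ ending at $(s_t,x_t)$.

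First I would establish a branch invariant: starting from $x_0\ge Q_b(s_0)/\gamma_c$ (and recalling that $Q_b\ge 0$ may be assumed, since $Q_b^{*}\ge 0$ by non-negativity of costs), the trajectory never enters the fall-back regime $x<Q_b(s)$. Indeed, in the main regime a $Q_b$-shielded policy assigns to the sampled action a risk $y_t\ge Q_b(s_t,a_t)$, so $x_{t+1}=y_t-Q_b(s_t,a_t)+\gamma_c\min_aQ_b(s_{t+1},a)\ge \gamma_c\min_aQ_b(s_{t+1},a)=Q_b(s_{t+1})$; hence the main regime is self-preserving, and along the whole trajectory I may use both $y_t\ge Q_b(s_t,a_t)$ almost surely and the consistency inequality $\gamma_c\,\mathbb E[y_t\mid h_t]\le x_t$.

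Next comes the per-step cost estimate. Non-negativity gives $c(s,a)\le Q_b^{*}(s,a)$ (it is the leading term of $\mathcal C^s(\pi)$ for any $\pi$ with $\pi(s)=a$, the rest being non-negative), and the Bellman identity $Q_b^{*}(s,a)=c(s,a)+\gamma_c\,\mathbb E_{s'\sim P(s,a)}[\min_{a'}Q_b^{*}(s',a')]$ combined with $\|Q_b-Q_b^{*}\|_\infty\le\Delta_b$ lets me compare the ``budget consumed'' $y_t-x_{t+1}$ against the true cost: $c(s_t,a_t)\le y_t-\mathbb E_{s_{t+1}\sim P(s_t,a_t)}[x_{t+1}]+(1+\gamma_c)\Delta_b$, the error being one $\Delta_b$ from the $Q_b$-versus-$Q_b^{*}$ gap at $(s_t,a_t)$ plus a $\gamma_c\Delta_b$ from the same gap inside $\min_{a'}Q_b(s_{t+1},a')$. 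I would then multiply this inequality by $\gamma_c^{t}$, sum over $t\ge 0$, take expectations under $\mathrm{prob}^{(s_0,x_0)}_{\overline{\mathcal M},\bar\pi}$, and apply the tower rule together with $\gamma_c\,\mathbb E[y_t\mid h_t]\le x_t$ at each step. The $y$- and $x$-terms collapse against neighbouring terms; what survives is the initial contribution $x_0$, a boundary term in $\gamma_c^{t}\mathbb E[x_{t+1}]$ that tends to $0$ because the risk coordinate of $\overline{\mathcal M}$ is confined to $[-C_{\max},C_{\max}]$ and $\gamma_c<1$, and a geometric series of error terms $\sum_{t\ge 0}\gamma_c^{t}(1+\gamma_c)\Delta_b\le\frac{2\Delta_b}{1-\gamma_c}$. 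Reassembling yields $\mathcal C^{(s_0,x_0)}(\bar\pi)\le x_0+\frac{2\Delta_b}{1-\gamma_c}$.

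The main obstacle is the bookkeeping of the discount factor: $\overline{\mathcal M}$ mixes already-discounted quantities ($Q_b(s)=\gamma_c\min_aQ_b(s,a)$) with undiscounted ones ($Q_b(s,a)$), so one must track carefully which quantity the risk coordinate represents at each step and arrange the telescoping (equivalently, the weight of $x_t$ in the auxiliary potential $\sum_{k<t}\gamma_c^{k}c(s_k,a_k)+(\text{weight}_t)\,x_t$) so that all intermediate $x$-terms cancel exactly and precisely $x_0$ is left; this is also where the constant $1+\gamma_c\le 2$ in front of $\Delta_b$ is spent. A secondary subtlety is the measure-theoretic justification of the conditional expectations and of the $t\to\infty$ limit, which again rests on the uniform boundedness of $c$ and of the risk coordinate (dominated convergence), and on checking that the fall-back branch of the $Q_b$-shielded definition is genuinely unreachable under the hypothesis $x_0\ge Q_b(s_0)/\gamma_c$.
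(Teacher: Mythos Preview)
Your telescoping / supermartingale route is genuinely different from the paper's. The paper argues by a finite-horizon induction: it sets $C_n(s,x)=\gamma_c D_{n-1}(s,x)$ (with $D_{n-1}$ the $(n{-}1)$-step expected discounted cost), proves by induction on $n$ that $C_n(s,x)\le x+f_n(\Delta_b)$ whenever $x\ge Q_b(s)$, with $f_n=\sum_{k=1}^{n}2\Delta_b\gamma_c^{k}$, and then passes to the limit. The paper also carries a companion bound for the fall-back branch $x<Q_b(s)$ through the same induction, whereas your branch-invariant observation lets you discard that case from the outset; this is a real simplification relative to the paper's argument, and your per-step inequality $c(s_t,a_t)\le y_t-\mathbb E[x_{t+1}\mid\cdot]+(1+\gamma_c)\Delta_b$ is exactly the content of the paper's ``$A\le\sum_a\gamma_cP_ay_a+2\gamma_c\Delta_b$'' computation, repackaged.

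There is, however, a bookkeeping slip exactly where you flag the danger. From $\gamma_c\,\mathbb E[y_t\mid h_t]\le x_t$ you get $\gamma_c^{t}\mathbb E[y_t]\le\gamma_c^{t-1}\mathbb E[x_t]$, so
\[
\sum_{t\ge 0}\bigl(\gamma_c^{t}\mathbb E[y_t]-\gamma_c^{t}\mathbb E[x_{t+1}]\bigr)
\;\le\;\sum_{t\ge 0}\bigl(\gamma_c^{t-1}\mathbb E[x_t]-\gamma_c^{t}\mathbb E[x_{t+1}]\bigr)
\;=\;\frac{x_0}{\gamma_c}-\lim_{T\to\infty}\gamma_c^{T-1}\mathbb E[x_T],
\]
and what survives is $x_0/\gamma_c$, not $x_0$. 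Your argument therefore actually proves $\mathcal C^{(s_0,x_0)}(\bar\pi)\le x_0/\gamma_c+\tfrac{2\Delta_b}{1-\gamma_c}$ under $x_0\ge Q_b(s_0)$, which is precisely the form the paper establishes in its appendix (the main-text statement differs from the appendix by this same $\gamma_c$ factor). A minor secondary point: ``$Q_b\ge 0$ may be assumed since $Q_b^{*}\ge 0$'' is not quite right, as $Q_b$ is an arbitrary approximation; but all you actually need to launch the branch invariant is $x_0\ge Q_b(s_0)$, and from there it propagates exactly as you describe.
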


Theorem~\ref{thm:safety} shows that an $\varepsilon$-accurate $Q_b$ is enough to keep any $Q_b$-shielded policy inside the budget, up to a small additive slack. Moreover, the policy we consider in the augmented CMDP corresponds to policies in the original CMDP that have the same cost and reward, as stated in the next resilts. 

\begin{definition}[Projection onto the base CMDP]\label{def:projection}
For any $Q_b$-shielded policy $\bar\pi\in\overline\Pi_{\mathrm{val}}$ of $\overline{\mathcal{M}}$, we define the backward projection $T(\bar \pi)$ as the memoryful policy tracking a single scalar variable $m$ as 

\begin{itemize}
    \item In state $s$ and with current memory $m-x$, sample $(a, x') \sim \bar\pi(s, x)$ and execute $a_t$ in $\mathcal{M}$.
    \item As state $s'$ is reached in $\mathcal{M}$, update the memory $m\leftarrow x'-Q_b(s,a)+Q_b(s')$.
\end{itemize}
\end{definition}

We can now prove the following preservation result.
\begin{theorem}[Preservation] \label{thm:projection} 
The cost and reward or $T(\bar \pi)$ in Def.~\ref{def:projection} satisfy
\[
\mathcal{C}_{\mathcal{M}}(T(\pi))
   =\mathcal{C}_{\bar{\mathcal{M}}}(\bar\pi),
\qquad
\mathcal{R}_{\mathcal{M}}(T(\pi))
   =\mathcal{R}_{\bar{\mathcal{M}}}(\bar\pi).
\]

\end{theorem}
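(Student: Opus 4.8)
The plan is to exhibit an explicit coupling between the runs of $\bar\pi$ in $\overline{\mathcal{M}}$ and the runs of $T(\bar\pi)$ in $\mathcal{M}$, and then observe that costs and rewards agree along coupled runs term by term. Concretely, consider the natural projection $\rho$ sending an augmented path $\bar\zeta=(s_0,x_0)(a_0,y_0)(s_1,x_1)(a_1,y_1)\cdots$ to the base path $\rho(\bar\zeta)=s_0a_0s_1a_1\cdots$, dropping every risk component. I claim that the pushforward $\rho_{\#}\,\text{prob}^{\bar s_i}_{\overline{\mathcal{M}},\bar\pi}$ on $\paths{\mathcal{M}}$ coincides with $\text{prob}^{s_i}_{\mathcal{M},T(\bar\pi)}$. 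Granting this, since $\overline{r}((s,x),(a,y))=r(s,a)$ and $\overline{c}((s,x),(a,y))=c(s,a)$ by Definition~\ref{def-risk-aug}, for every augmented path $\bar\zeta$ we have $\mathcal{R}(\bar\zeta)=\mathcal{R}(\rho(\bar\zeta))$ and $\mathcal{C}(\bar\zeta)=\mathcal{C}(\rho(\bar\zeta))$ summand by summand; integrating these path-wise identities against the two equal measures yields $\mathcal{R}_{\overline{\mathcal{M}}}(\bar\pi)=\mathcal{R}_{\mathcal{M}}(T(\bar\pi))$ and $\mathcal{C}_{\overline{\mathcal{M}}}(\bar\pi)=\mathcal{C}_{\mathcal{M}}(T(\bar\pi))$ at once (convergence is not an issue since costs are bounded and $\gamma_c<1$, and rewards are treated as usual).

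To prove the measure identity I would work on finite cylinders and induct on the path length $n$, carrying the invariant: the probability that $\bar\pi$ generates the augmented prefix $(s_0,x_0)(a_0,y_0)\cdots(s_n,x_n)$ equals the probability that $T(\bar\pi)$ generates the base prefix $s_0a_0\cdots s_n$, and the scalar memory held by $T(\bar\pi)$ upon reaching $s_n$ equals $x_n$. The base case holds since $\bar s_i=(s_i,d)$ and $T(\bar\pi)$ is initialized with memory $d$. For the step at $(s_n,x_n)$ with matched memory $x_n$: since $\bar\pi$ is valued, $\bar\pi(s_n,x_n)=\sum_a P_{\bar\pi}(a\mid s_n,x_n)\,(a,\,y_{\bar\pi}(s_n,x_n)(a))_\delta$, and by construction $T(\bar\pi)$ at $s_n$ with memory $x_n$ samples $(a,x')\sim\bar\pi(s_n,x_n)$ and executes $a$, hence plays each $a$ with the same probability $P_{\bar\pi}(a\mid s_n,x_n)$ while recording $x'=y_{\bar\pi}(s_n,x_n)(a)=:y_n$. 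Moreover $\overline{P}((s_n,x_n),(a,y_n),\cdot)$ has $S$-marginal $P(s_n,a,\cdot)$ and assigns to each successor $s_{n+1}$ in the support the unique budget $x_{n+1}=y_n-Q_b(s_n,a)+Q_b(s_{n+1})$, which is exactly the memory update of $T(\bar\pi)$. Multiplying matched action probabilities by matched transition probabilities propagates the invariant to length $n+1$; since the cylinders generate the $\sigma$-algebra on $\paths{\mathcal{M}}$, the two measures coincide by the standard extension argument.

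The step that needs the most care is purely the bookkeeping around the memory variable: one must present $T(\bar\pi)$ as a genuine stochastic process on $\mathcal{M}$ whose ``state $\times$ memory'' chain is isomorphic to the induced Markov chain $\overline{\mathcal{M}}_{\bar\pi}$, and verify that this isomorphism survives the case split in the definition of $T$ (it does, because $T$ merely reads off the sampled pair $(a,x')$ and never inspects which branch produced it; indeed only valuedness of $\bar\pi$, not $Q_b$-shieldedness, is actually used here). Everything else --- measurability of $\rho$, the term-by-term equality of the discounted series, and the passage from cylinder equality to equality of the full path measures --- is routine.
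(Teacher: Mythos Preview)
Your proof is correct and follows the natural coupling argument one expects here: identify the $(s,\text{memory})$ process generated by $T(\bar\pi)$ with the Markov chain $\overline{\mathcal{M}}_{\bar\pi}$, check by induction on cylinders that the pushforward of the augmented path measure under the projection $\rho$ is the base path measure, and conclude by the pointwise identities $\overline{r}=r\circ\rho$ and $\overline{c}=c\circ\rho$. The paper does not actually supply a proof of this theorem in its appendix; it treats the statement as routine and only invokes it by reference elsewhere (for instance inside the proof of the polytope lemma used for optimality). So you are filling in exactly the omitted argument, and your remark that only valuedness of $\bar\pi$ (not $Q_b$-shieldedness) is used is accurate.
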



Theorem \ref{thm:projection} 
states that a $Q_b$-shielded policy keeps its cost and reward when mapped back to the original CMDP.  This 
allows us to transfer our safety results from $\overline{\mathcal{M}}$ to $\mathcal{M}$.
It remains  to
show that \emph{some} $Q_b$-shielded policy attains the constrained optimum.  The next
theorem precisely answers this question. Recall that flipping policies associate to every state a mixture of two Dirac distributions.

    \newcommand{\rr}{\mathcal{E}}
    
\begin{theorem}[Optimality of the shielded policies]\label{thm:optimality}
    Let $\mathcal{M}$ be a deterministic CMDP with safety threshold $d$ and initial state $s_i$,
    $Q_b$ be a Q-value,
    $\Delta_b=||Q_b-Q_b^*||_{L^\infty(\mathcal{S}\mathcal{A})}$. 
    
    Further, let $\Pi$ the set of all policies of $\mathcal{M}$, $\overline \Pi_{sh}$ be the set of shielded policies of $\overline{\mathcal{M}}$, and $\overline \Pi^f$ be the set of valued flipping policies.
    
    The, we have the following     for $\rr=  \frac{2\Delta_b\gamma_c}{1-\gamma_c}$:

    \[
    \max_{\pi \in \Pi,~\mathcal{C}(\pi)\leq d} \mathcal{R}(\pi) \leq \max_{\bar \pi \in \Xi(\overline \Pi^f),~x_0\leq \gamma_c d+\rr} \mathcal{R}^{(s_i,x_0)}(\bar \pi),
    \]
    \[
    \max_{\bar \pi \in \Xi(\overline \Pi^f),~x_0\leq \gamma_c d+\rr} \mathcal{R}^{(s_i,x_0)}(\bar \pi) \leq \max_{\pi \in \overline \Pi_{sh},~x_0\leq \gamma_c d+\rr} \mathcal{R}^{(s_i,x_0)}(\bar \pi),
    \]
\end{theorem}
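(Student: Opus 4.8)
I would prove the two inequalities separately. For the second one, note first that by construction the shield-map $\Xi$ sends every memoryless valued policy to a $Q_b$-shielded one, so $\Xi(\overline{\Pi}^f)\subseteq\overline{\Pi}_{sh}$; the only point needing care in Definition~\ref{def:shield-map} is that its first branch ($t\le x$) can only occur when $x\ge Q_b(s)$, since the clipped risks dominate $Q_b(s,\cdot)$ and hence $t\ge\gamma_c\min_aQ_b(s,a)=Q_b(s)$, so $t\le x$ forces $x\ge Q_b(s)$. As both sides of the second inequality maximise $\bar\pi\mapsto\mathcal R^{(s_i,x_0)}(\bar\pi)$ over policies with $x_0\le\gamma_c d+\mathcal E$ and the right-hand feasible set $\overline{\Pi}_{sh}$ contains the left-hand one $\Xi(\overline{\Pi}^f)$, the inequality is immediate.

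For the first inequality I would fix a policy $\pi\in\Pi$ with $\mathcal C(\pi)\le d$ whose reward is within $\varepsilon$ of $\max_{\mathcal C(\pi')\le d}\mathcal R(\pi')$, and try to produce $\bar\pi\in\overline{\Pi}^f$ and $x_0\le\gamma_c d+\mathcal E$ with $\mathcal R^{(s_i,x_0)}(\Xi(\bar\pi))\ge\mathcal R(\pi)$, then let $\varepsilon\to 0$. The first step is a reduction: since $\mathcal M$ is deterministic, every policy induces a distribution over deterministic paths from $s_i$, and $\mathcal C,\mathcal R$ are affine in that distribution, so every achievable pair $(\mathcal C(\pi'),\mathcal R(\pi'))$ lies in the compact convex hull $K$ of $\{(\mathcal C(\zeta),\mathcal R(\zeta)):\zeta\text{ a deterministic path from }s_i\}$. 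The pair $(\mathcal C(\pi),\mathcal R(\pi))$ lies on the boundary of the planar body $K$ within the half-plane $\mathcal C\le d$, so by a Carathéodory argument on $\partial K$ — the usual convexity argument behind the sufficiency of flipping policies, cf.\ \cite{Altman1999ConstrainedMD} — it is a convex combination $\alpha(\mathcal C(\zeta_1),\mathcal R(\zeta_1))+(1-\alpha)(\mathcal C(\zeta_2),\mathcal R(\zeta_2))$ of at most two deterministic paths with $\alpha\mathcal C(\zeta_1)+(1-\alpha)\mathcal C(\zeta_2)\le d$ and $\alpha\mathcal R(\zeta_1)+(1-\alpha)\mathcal R(\zeta_2)\ge\mathcal R(\pi)$.

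Next I would lift $(\zeta_1,\zeta_2,\alpha)$ to a memoryless valued flipping policy $\bar\pi$ on $\overline{\mathcal M}$: follow the common prefix of $\zeta_1,\zeta_2$, flip with bias $\alpha$ at the first point of divergence, then follow $\zeta_1$ or $\zeta_2$ deterministically, propagating the augmented coordinate so that at an underlying state $s$ on a given tail the budget equals $\gamma_c$ times the time-shifted discounted residual cost of that tail. Concretely, the risk attached to the action $a$ taken at a non-branching state $(s,x)$ is $x/\gamma_c$, so the budget pushed to the successor $s'$ is $x/\gamma_c-\bigl(Q_b(s,a)-Q_b(s')\bigr)$, with the analogous $\alpha$-weighted assignment at the flip chosen so that one-step consistency holds with equality. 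If $\Delta_b=0$, the Bellman identity $Q_b(s,a)-Q_b(s')=c(s,a)$ makes the three defining conditions of a $Q_b$-shielded policy transparent: the budgets stay in $[0,\gamma_c c_{\max}/(1-\gamma_c)]$, realizability $x/\gamma_c\ge Q_b(s,a)$ holds because a tail's residual cost is at least the minimal cost from $s'$, one-step consistency holds with equality and $\lambda=0$, and the initial budget is exactly $\gamma_c(\alpha\mathcal C(\zeta_1)+(1-\alpha)\mathcal C(\zeta_2))\le\gamma_c d$. For $\Delta_b>0$ I would inflate each step-risk by at most $2\Delta_b$ to restore $x/\gamma_c\ge Q_b(s,a)$; these inflations propagate backward as a geometric series adding at most $2\Delta_b\gamma_c/(1-\gamma_c)=\mathcal E$ to the initial budget, so $x_0\le\gamma_c d+\mathcal E$ and $\bar\pi$ is $Q_b$-shielded. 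A $Q_b$-shielded valued policy is a fixed point of $\Xi$, so $\Xi(\bar\pi)=\bar\pi\in\Xi(\overline{\Pi}^f)$, and Preservation (Theorem~\ref{thm:projection}, applied from the start state $(s_i,x_0)$ with the projection $T$ of Definition~\ref{def:projection} initialised at memory $x_0$) gives $\mathcal R^{(s_i,x_0)}(\Xi(\bar\pi))=\mathcal R_{\mathcal M}(T(\bar\pi))=\alpha\mathcal R(\zeta_1)+(1-\alpha)\mathcal R(\zeta_2)\ge\mathcal R(\pi)$, as required.

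The hard part will be this lifting step: the per-step risks must be chosen so that realizability, one-step consistency with $\lambda=0$ (so the shield never falls back to $\pi_b$ and $\Xi$ genuinely acts as the identity), boundedness of the budget in $[-C_{\max},C_{\max}]$, and the global bound $x_0\le\gamma_c d+\mathcal E$ hold simultaneously — and the last two force a careful accounting of how the $\Delta_b$-deviations from the Bellman identity accumulate as the budget is propagated backward along the (possibly aperiodic) tails, which is exactly where the $\gamma_c$ factor and the slack $\mathcal E$ in the statement originate. A secondary subtlety is that $\bar\pi$ must be genuinely memoryless on $\overline{\mathcal M}$, since valued policies are memoryless by definition: this holds because the budget coordinate already distinguishes the two tails wherever their residual costs differ, while two tails meeting at a state with equal residual cost can be merged onto the higher-reward continuation at no loss in cost or reward.
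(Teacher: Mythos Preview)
Your argument for the second inequality is fine and is essentially what the paper relies on implicitly: $\Xi$ lands in the $Q_b$-shielded policies, so maximising over the larger set $\overline\Pi_{sh}$ dominates.

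For the first inequality, the paper takes a genuinely different route. It does \emph{not} reduce to a pair of deterministic paths. Instead it first proves a separate lemma (Lemma~\ref{lem:polytyope} in the appendix) showing that the augmentation $\overline{\mathcal M}^{Q_b^\star}$ built with the \emph{exact} optimal cost $Q_b^\star$ already admits an optimal \emph{memoryless} flipping $Q_b^\star$-shielded policy $\bar\pi^\star$ from $(s_i,d)$. The lemma works state-by-state: at each augmented state $(s,x)$ the set of feasible distributions $z\in\Delta(A(s))$ satisfying $\gamma_c\sum_a z_a\,y(a)\le x$ for some valuation $y\ge Q_b^\star(s,\cdot)$ is a convex polytope whose extreme points are mixtures of at most two Diracs; the paper builds an auxiliary MDP $\hat{\mathcal M}$ whose actions are these extreme points, and invokes the lower-semicontinuous MDP existence theory of \cite{BSSstochastic} (Corollary~9.17.2) to obtain an optimal memoryless deterministic policy on $\hat{\mathcal M}$, which \emph{is} a memoryless flipping $Q_b^\star$-shielded policy on $\overline{\mathcal M}^{Q_b^\star}$. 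The transfer from $Q_b^\star$ to $Q_b$ is then an explicit budget shift: one defines $\bar\pi(s,x)$ on $\overline{\mathcal M}^{Q_b}$ from $\bar\pi^\star(s,x-\mathcal E)$ with action-risks $z_j=y_j+\mathcal E+Q_b(s,a_j)-Q_b(s_j)-c(s,a_j)$, checks directly that $(s,x)\mapsto(s,x+\mathcal E)$ is an isomorphism of the induced Markov chains (so rewards coincide), and verifies that $\bar\pi$ is $Q_b$-shielded exactly when $\mathcal E\ge 2\Delta_b\gamma_c/(1-\gamma_c)$.

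What your approach buys is avoiding the auxiliary MDP and the cited existence result; what the paper's approach buys is that memorylessness comes for free from that result, and the $Q_b^\star\to Q_b$ transfer is a clean Markov-chain isomorphism rather than a per-step inflation with an error to track. The ``secondary subtlety'' you flag is in fact the crux of the path-based route: Carath\'eodory on the $(\mathcal C,\mathcal R)$-plane only hands you two generally \emph{memoryful} deterministic paths, and promoting their $\alpha$-mixture to a \emph{memoryless} valued policy on $\overline{\mathcal M}$ is precisely where the difficulty lies. A single path can self-intersect at the same augmented state infinitely often, the two paths can cross each other at infinitely many augmented states, and your merge-to-higher-reward heuristic needs a termination or fixed-point argument you have not supplied. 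Once one tries to make this rigorous one is effectively re-proving that reward maximisation over $Q_b$-shielded policies on $\overline{\mathcal M}$ admits a memoryless optimiser, which is exactly what the paper's polytope lemma delivers via standard theory. So your reduction is not wrong, but it does not sidestep the hard step; the paper's state-level construction does.
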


We now summarize the 
results obtained in this section, which represent the theoretical backbone of \textsc{ProSh}.

\paragraph{From constrained to unconstrained optimisation.}
Let $\Pi_{val}$ the set of valued policies (potentially unsafe) of the augmented MDP, and $\Pi_{\Xi}=\Xi(\Pi_{val})$, then for $\Delta_b = ||Q_b-Q_b^*||_{L^\infty(\mathcal{S}\mathcal{A})}$, 
\begin{itemize}
    \item The set of shielded policies $\Pi_{\Xi}$, starting with a risk up to $x_0=d+2\Delta_b$ is enough to reach optimality, i.e.,
    \[
    \max_{\bar \pi \in \bar \Pi,~x_0\leq d+2\Delta_b} \mathcal{R}^{(s_i,x_0)}(\Xi(\bar\pi)) \geq \max_{\pi \in \Pi} \mathcal{R}(\pi).
    \]
    \item Any shielded policy $\Xi(\bar \pi)\in \Pi_{\Xi}$ starting with a risk up to $x_0=d+2\Delta_b$ has a cost satisfying
    \[
    \mathcal{C}^{(s_i,x_0)}(\Xi(\bar \pi)) \leq d+2\Delta_b + \frac{\Delta_b}{1-\gamma_c}.
    \]
\end{itemize}

Hence, we can optimize among all policies of the augmented MDP and apply the shield. Alternatively, one can choose a dynamic $x_0 < d$ to ensure safety during training. This can be combined with any approach ensuring $\Delta_b\rightarrow 0$, or making it sufficiently small. 

\section{Safe RL through Probabilistic Shielding}

In this section, we show how the theoretical foundations in Section \ref{sec-PSRA} are used to derive learning algorithms for CRLP. We first present a general approach and derive the corresponding safety guarantees. We then propose our implementation of the \textsc{ProSh} algorithm. 

\subsection{Learning Algorithms}


In this section we present a minimal version of the training loop for the main actor in Algorithm~\ref{alg:prosh-correct}. Then, in Sec.~\ref{sec:ProSh-TD3} we provide a complete implementation. \textsc{ProSh} behaves like "classical" shielding \cite{ABENTShielding}, but the shield acts on distribution rather than actions. The shield $\Xi$ uses the values provided by the critic $Q_b$ and combines it with the backup action to obtain a safe distribution.

\begin{algorithm}[H]
\caption{\textsc{ProSh}: Main Actor only}
\label{alg:prosh-correct}
\begin{algorithmic}[1]
\State \textbf{Input:} cost budget $d$, margin $\delta$, discount $\gamma_c$
\State Initialise main actor $\bar \pi_r^{\psi}$, backup critic $Q_b^{\theta}$
\For{each episode}
    \State $s \gets \overline{s}_{i},\; x \gets d - \delta$
    \While{not done}
        \State \textbf{Shield:} $\mu_{\text{safe}} \gets \Xi_{Q_b}(\bar \pi_r^{\psi})(s, x)$
        \State Sample $(a,y) \sim \mu_{\text{safe}}$
        \State Execute $(a,y)$, observe $(s',x')$, reward $r$, cost $c$
        \State Store transition $((s,x), (a, y), r, c, (s', r'))$
        \State $s \gets s'$, $r \gets r'$
    \EndWhile
    \State \textbf{Update:}
    \begin{itemize}
        \item Update $\pi_\theta$ using shielded distributions from memory
    \end{itemize}
\EndFor
\State \textbf{return} projected policy $\tilde\pi_r^{\psi}$ (via Thm.~\ref{thm:projection})
\end{algorithmic}
\end{algorithm}


The shield precedes sampling, so the exploration is budget-safe.
The choice of the RL update for \(\theta\) is open (e.g., PG, PPO, A2C). Additionally, the backup critic \(Q_b\) can be learned, either in parallel, off-policy, or even
pre-computed.

The previous results in Sec.~\ref{sec-PSRA} allows us to derive the following key safety guarantees for Algorithm~\ref{alg:prosh-correct}.
%
\begin{theorem}[Safety and near-Optimality]
    \begin{itemize}
        \item[(i)] At any step of the algorithm, the output policy $\bar \pi$ satisfies
    \[
    \mathcal{C}_{\bar{\mathcal{M}}}(\bar \pi) \leq r_0 + \frac{2 \gamma_c \Delta_b}{1-\gamma_c},\quad \Delta_c=||Q_b-Q_b^*||_\infty
    \]
        \item[(ii)] The algorithm is asymptotically optimal in deterministic environments as $\Delta_b \rightarrow 0$. More precisely, let $\mathcal{M}$ be a constrained deterministic environment modeled by a CMDP, and for any $\eta>0$ and $\Delta_b$ small enough, there exists $\bar \pi \in \overline{\Xi}_{sh}$ with cost at most $\gamma_c d+\mathcal{E}$ for $\mathcal{E}=\frac{2\Delta_b\gamma_c}{1-\gamma_c}$, 
        such that
    \[
    \mathcal{R}(\bar \pi) \geq \max_{\pi \in \Pi} \mathcal{R}(\pi) - \eta
    \]
    \end{itemize}
\end{theorem}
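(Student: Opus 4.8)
The plan is to read both items off the structural results of Section~\ref{sec-PSRA}, the one new observation being that \textsc{ProSh} only ever \emph{executes} $Q_b$-shielded policies. For item~(i), I would first note that, regardless of the current actor parameters $\psi$, the actions sampled in Algorithm~\ref{alg:prosh-correct} come from $\mu_{\text{safe}} = \Xi_{Q_b}(\bar\pi_r^{\psi})(s,x)$, and that $\Xi$ outputs a memoryless valued policy meeting the $Q_b$-shielded conditions: the three branches of Definition~\ref{def:shield-map} are exactly the three cases in the definition of a $Q_b$-shielded policy, with $\lambda$ chosen so that the coherence inequality holds (in fact with equality). Since the critic $Q_b^{\theta}$ is held fixed within an episode, the run lives in the well-defined augmented CMDP $\overline{\mathcal M}^{Q_b}_{d}$ started at $(s_i, d-\delta)$, so $r_0 = d - \delta$. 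I would then invoke Theorem~\ref{thm:safety} with $x_0 = r_0$ — its precondition $x_0 \geq Q_b(s_0)/\gamma_c$ being precisely the requirement that the current critic considers some action at $s_i$ affordable — to get $\mathcal{C}_{\bar{\mathcal M}}(\bar\pi) \leq r_0 + \frac{2\gamma_c\Delta_b}{1-\gamma_c}$, uniformly over training time; by Theorem~\ref{thm:projection} the same bound holds for the corresponding behaviour in $\mathcal M$. (If the precondition fails, no approximately-safe policy exists and the bound is vacuous.)

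For item~(ii) I would chain the two inequalities of Theorem~\ref{thm:optimality}: with $\mathcal{E} = \frac{2\Delta_b\gamma_c}{1-\gamma_c}$,
\[
\max_{\pi\in\Pi,\;\mathcal{C}(\pi)\leq d}\mathcal{R}(\pi)\;\leq\;\max_{\bar\pi\in\overline\Pi_{sh},\;x_0\leq\gamma_c d+\mathcal{E}}\mathcal{R}^{(s_i,x_0)}(\bar\pi),
\]
which yields a shielded policy $\bar\pi^{\star}$ with initial risk at most $\gamma_c d + \mathcal{E}$ and reward within $\eta/2$ of the constrained optimum $\max_{\pi\in\Pi,\;\mathcal{C}(\pi)\leq d}\mathcal{R}(\pi)$ (which is what the right-hand side of item~(ii) denotes; in the deterministic case this optimum is attained by a valued flipping policy, by the construction behind Theorem~\ref{thm:optimality}). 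Since $\Xi$ fixes shielded policies, $\bar\pi^{\star} = \Xi(\bar\pi^{\star})$ lies in the algorithm's search space, so under the standing assumptions — an actor class rich enough to represent (or $\eta/2$-approximate) valued flipping policies, the RL update converging to a near-maximizer of $\mathcal{R}(\Xi(\bar\pi_r^{\psi}))$, and $\Delta_b$ small — the learned $\bar\pi$ satisfies $\mathcal{R}(\bar\pi) \geq \mathcal{R}(\bar\pi^{\star}) - \eta/2$. Its cost bound follows from Theorem~\ref{thm:safety} applied to $\bar\pi$ (shielded, with $x_0 \leq \gamma_c d + \mathcal{E}$), and Theorem~\ref{thm:projection} transfers reward and cost to the projected policy $T(\bar\pi)$ returned in $\mathcal M$; letting $\Delta_b \to 0$ sends $\mathcal{E} \to 0$ and gives the asymptotic statement.

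The genuinely non-mechanical point is the honest formulation of item~(ii): because Algorithm~\ref{alg:prosh-correct} leaves both the actor update and the training of $Q_b$ abstract, ``asymptotic optimality'' must be stated relative to convergence of those two components, matching the paper's blanket assumption on the backup critic. The remaining steps are routine bookkeeping — checking the branch-by-branch correspondence between $\Xi$ and the $Q_b$-shielded conditions, the feasibility precondition $d-\delta \geq Q_b(s_i)/\gamma_c$ in item~(i), and propagating the additive slack through Theorems~\ref{thm:safety}, \ref{thm:projection} and~\ref{thm:optimality} — although one may have to state the cost bound in item~(ii) with a constant slightly larger than $\mathcal{E}$ for the chain of inequalities to close exactly.
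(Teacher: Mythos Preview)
Your proposal is correct and matches the paper's approach: the paper does not give a standalone proof of this theorem, treating it as an immediate corollary of Theorems~\ref{thm:safety}, \ref{thm:projection}, and~\ref{thm:optimality}, and your argument correctly identifies and chains those ingredients (the key observation being that $\Xi$ always outputs a $Q_b$-shielded policy, so Theorem~\ref{thm:safety} applies at every step). Your extra care in spelling out the convergence and actor-expressivity assumptions behind item~(ii), and in flagging the feasibility precondition $r_0 \geq Q_b(s_i)/\gamma_c$ for item~(i), goes slightly beyond what the paper makes explicit but is entirely in line with its intent.
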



Since our guarantees involve the difference $\Delta_b$ and the assumption $\Delta_b \rightarrow 0$, we discuss how restrictive this assumption is.
\begin{remark}[On assumption $\boldsymbol{\Delta_b\!\to 0}$]
\leavevmode
We present several cases in which $\Delta_b\rightarrow 0$.

\begin{itemize}
    \item{Tabular update:}
      With exact Q-learning and sufficient exploration, the classical
      result of \cite{watkins1992q} gives
      $\Delta_b\!\xrightarrow{\text{a.s.}}\!0$.
      \item{Neural networks:}  Recent results       \citep{agarwal2020finite,xie2021bellman}
 show that over-parameterised deep Q-learning
      converges to the Bellman fixed point when the optimiser drives
      training error to zero and the network remains within a
      neighbourhood of its initialization.
      \item{Batch (fitted) Q-evaluation:}  
      In the agnostic setting, fitted Q-iteration with
      minimax regression loss enjoys finite-sample
      $\mathcal{O}(1/\!\sqrt{n})$ guarantees
      \citep{liu2024information}, so $\Delta_b$ vanishes as the data
      set grows.
\end{itemize}
%
Hence, in all cases above, the safety slack
$\kappa(\Delta_b)=\gamma_c\Delta_b/(1-\gamma_c)$ converges to zero,
\textsc{ProSh} is safe up to a controlled vanishing coefficient, and approaches true constrained optimality.
\end{remark}

\subsection{\textsc{ProSh}-TD3} \label{sec:ProSh-TD3}

In this section we introduce \textsc{ProSh}-TD3, an implementation of ProSH based on TD3 \cite{fujimoto2018td3}, for continuous spaces. 
Hereafter we describe 
the four key components of \textsc{ProSh}-TD3:
the network architecture, the  implementation of the shield, the actor/critic pairs training, and the exploration strategy.

\textbf{Networks Architecture.}
The backup actor-critic pair follows the TD3 architecture. The backup critic has two output heads, $Q_b^{\theta_1}$ and $Q_b^{\theta_2}$, and takes as input a state-actor pair of the original environment $\mathcal M$. The backup actor $\pi_b^{\phi}$ is deterministic, and takes as input a state in $\mathcal M$ and outputs an action.
The main actor-critic pair also follows the TD3 architecture. The main critic also has two heads, $\overline{Q}_r^{\xi_1}$ and $\overline{Q}_r^{\xi_2}$, but takes as input a state-action pair of the augmented environment $\overline{\mathcal M}$. The main actor $\overline{\pi}^{\psi}_r$ takes as input a state in the augmented environment and outputs a flipping policy in the augmented environment. Furthermore, we also maintain \emph{target} networks $Q_b^{\theta^{targ}_1}$ and $Q_b^{\theta^{targ}_2}$, etc, which are Polyak averages of the original networks, that we use for computing training targets, in order to stabilize training.

\textbf{Shield Implementation.}
We implement the shield using the Shield-map 
in Def.~\ref{def:shield-map} as a fully differentiable layer after the output of the main actor. More precisely, if $\overline{s}=(s,x)$ and $\overline{\pi}^{\psi}_r(\overline{s})=\rho (a_1,y'_1)+ (1-\rho) (a_2,y'_2)$, we implement $\Xi$ as \[\Xi(\overline{\pi}^{\psi})(\overline{s})=(1-\lambda)\rho(a_1,y'_1)+(1-\lambda)(1-\rho) (a_2,y'_2)+\lambda\overline{a}_3\]
where $y'_1=\min(y_1,Q_b^{\theta_1}(s,a_1))$, $y'_2=\min(y_2,Q_b^{\theta_1}(s,a_2))$, $\overline{a}_3=\left(\overline{\pi}^{\psi}_r(\overline{s}),Q_b^{\theta_1}(s,\pi^{\phi}_b(s)\right)$,
and \[\lambda = \text{clip}\left(\frac{\rho y'_1+(1-\rho) y'_2-x}{\text{relu}\left(\rho y'_1+(1-\rho) y'_2-Q_b^{\theta_1}(s,\pi_b(s))\right)+\eta},0,1\right)\]
where $\eta$ is chosen sufficiently small ($\sim10^-8$).

\textbf{Actor-critic Pairs Training.}
We train the actor-critic pairs with batch sampling from a replay buffer in which we store every transition made in the augmented CMDP. For the backup critic, we use the training target $y_b(c,s')$, proposed in \cite{HH20}, equal to       
\[
\begin{aligned}
c+\gamma_c  \Bigg[& \beta \min_{j\in\{1,2\}} Q_b^{\theta_j^{targ}}\left(s',\pi_b^{\psi^{targ}}(s')+\epsilon\right) \\&+\frac{1-\beta}{2} \sum_{j\in\{1,2\}} Q_b^{\theta_j^{targ}}\left(s',\pi_b^{\psi^{targ}}(s')+\epsilon\right)\Bigg]
\end{aligned}
\]
where $\epsilon$ is a small gaussian noise and $\beta$ is a hyperparameter. Compared to the standard TD3 training target, this target helps combat the underestimation bias of TD3, improving the backup critic accuracy.
The backup actor and the main critic are updated with the standard TD3 training losses.
The main actor is updated taking into account the shield:
\[\mathcal L(\psi)=-\mathbb E_{\overline{s}\sim\mathcal D} \left[\sum_{k\in\{1,2,3\}} \lambda_k \overline{Q}_r^{\xi_1}\left(\overline{s},\overline{a}_k\right) \right], \text{ where }\]
\[\Xi(\overline{\pi}_r^{\psi})(\overline{s})=\sum_{k\in\{1,2,3\}}\lambda_k\overline{a}_k.\]

Note that the gradients can flow through $\lambda_k$ and $\overline{a}_k$ in the main actor training loss as they are computed in a fully differentiable way.

\textbf{Exploration.}
For a better estimation of the backup critic, we alternate between \emph{main} episodes and \emph{hybrid} episodes. In the \emph{main} episodes, we sample from the (shielded) main actor. In the \emph{hybrid} episodes, we explore using the shielded main actor until a certain step, after which we explore using the backup actor. All the actions of the backup actor are converted to risk-augmented actions by associating the risk of the current observed state. The corresponding backup actor is always $Q_b$-shielded by construction, so the sampling is safe during training.

\section{Experimental Evaluation}

\subsection{Experimental setup}

We implement \textsc{ProSh}-TD3 using the Stable-Baselines 3 implementation of TD3\footnote{https://github.com/DLR-RM/stable-baselines3}. We evaluate our implementation of \textsc{ProSh}-TD3 on six environments of the Safety Gymnasium benchmark suite: SafetyHalfCheetahVelocity, SafetyHopperVelocity, SafetyPointCircle, SafetyPointGoal, SafetyCarCircle, and SafetyCarGoal are from the Navigation Suite. More details on the environments are provided in the comments on the experimental results. We benchmark against diverse Constrained RL algorithms, two of them being Off-Policy Algorithms (TD3-Lagrangian and PID-Lagrangian), and three being On-Policy Algorithms (CPO, FOCOPS, PPO-Saute). Some algorithms are safer, such as PPO-Saute, while others impose less restrictions and seek higher rewards, such as TD3-Lagrangian. We included them despite not always being directly comparable -- PPO-Saute being in most cases the only algorithm that is safe during training -- to present the overall performances of \textsc{ProSh} against states of the art algorithms, both in terms of safety and 
rewards. On each environment, the algorithms are run using three different seeds, and the mean and standard deviation are computed. Finally, the algorithms were implemented in the Omnisafe infrastructure \cite{ji2023omnisafeinfrastructureacceleratingsafe} and use the hyperparameters provided by Omnisafe for each environment.




\subsection{Experimental Results}

\paragraph{Overall Performance.} Across all environments, \textsc{ProSh} maintains a high level of safety, with only occasional low-magnitude violations. PPO-Saute also achieves consistent safety, although this often coincides with reduced performance and lower rewards. The Off-Policy algorithms TD3-Lagrangian and PID-TD3 tend to exhibit unsafe behavior in most settings, while the On-Policy methods FOCOPS and CPO show less stable safety profiles, with costs that fluctuate even after extended training. Taken together, the results suggest that \textsc{ProSh} provides the best trade-off between safety and performance in the evaluated environments, especially when ensuring safety both at training and deployment time is a necessity.

\paragraph{The Velocity Suite} In 
SafetyHopperVelocity and SafetyHalfCheetahVelocity, the agent has to move as quickly as possible, while adhering to a velocity constraint. 

\begin{figure}[h]
\begin{center}
    \includegraphics[width=0.48\textwidth]{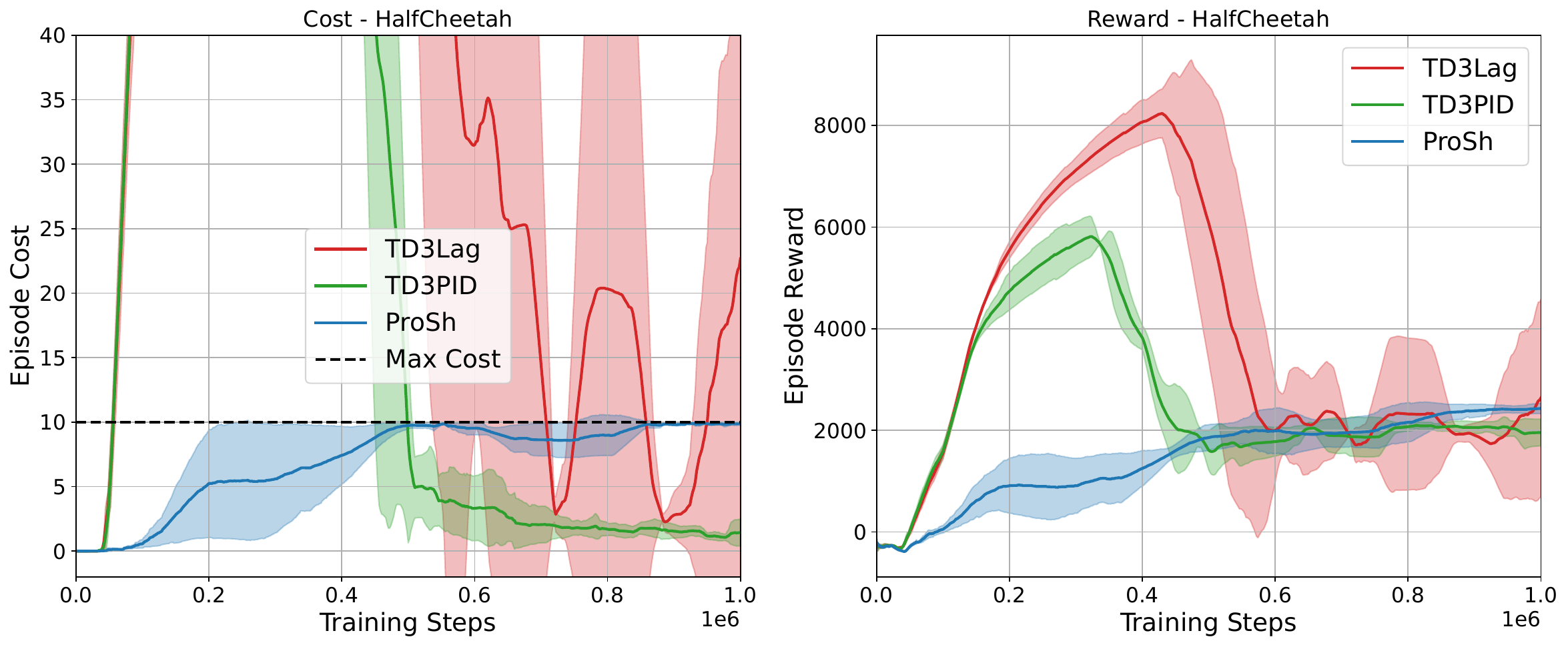}
    \caption{Comparison with Off-Policy Algorithms on Half Cheetah. Cost threshold $d=10$.}
\end{center}
\end{figure}
\begin{center}
    \includegraphics[width=0.48\textwidth]{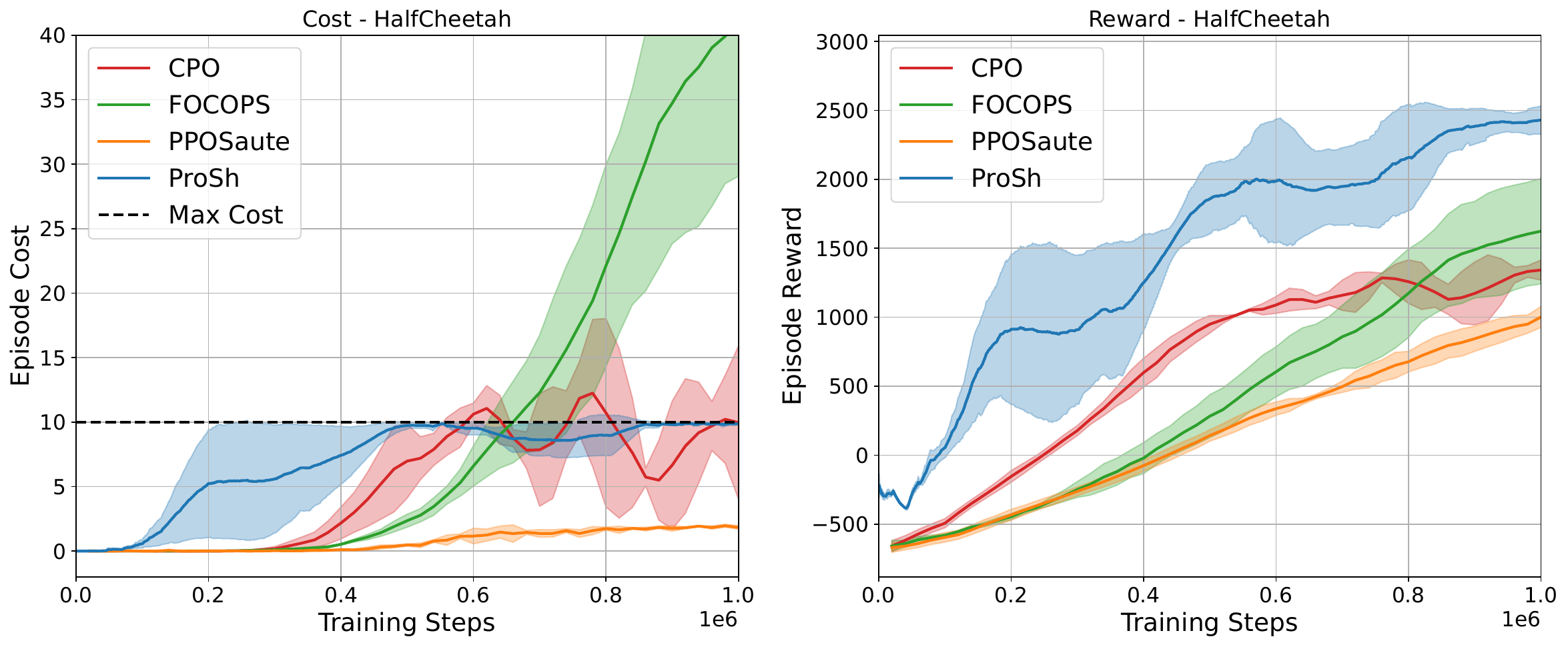}
\end{center}
\vspace{-0.5cm}
\begin{figure}[H]
    \caption{Comparison with On-Policy Algorithms on Half Cheetah. Cost threshold $d=10$.}
\end{figure}

\newpage

\textbf{Experimental Results: Safety Half Cheetah.} \textsc{ProSh} achieves absolute safety and stays under the cost constraint at all times. PID-TD3 displays comparable performance. TD3-Lag fails to be safe and does not achieve higher reward. \textsc{ProSh} displays higher reward than all the on-policy algorithms (CPO, FOCOPS, PPOSaute). Finally, FOCOPS fails to be safe and CPO violates the constraint after several steps.
\begin{center}
    \includegraphics[width=0.48\textwidth]{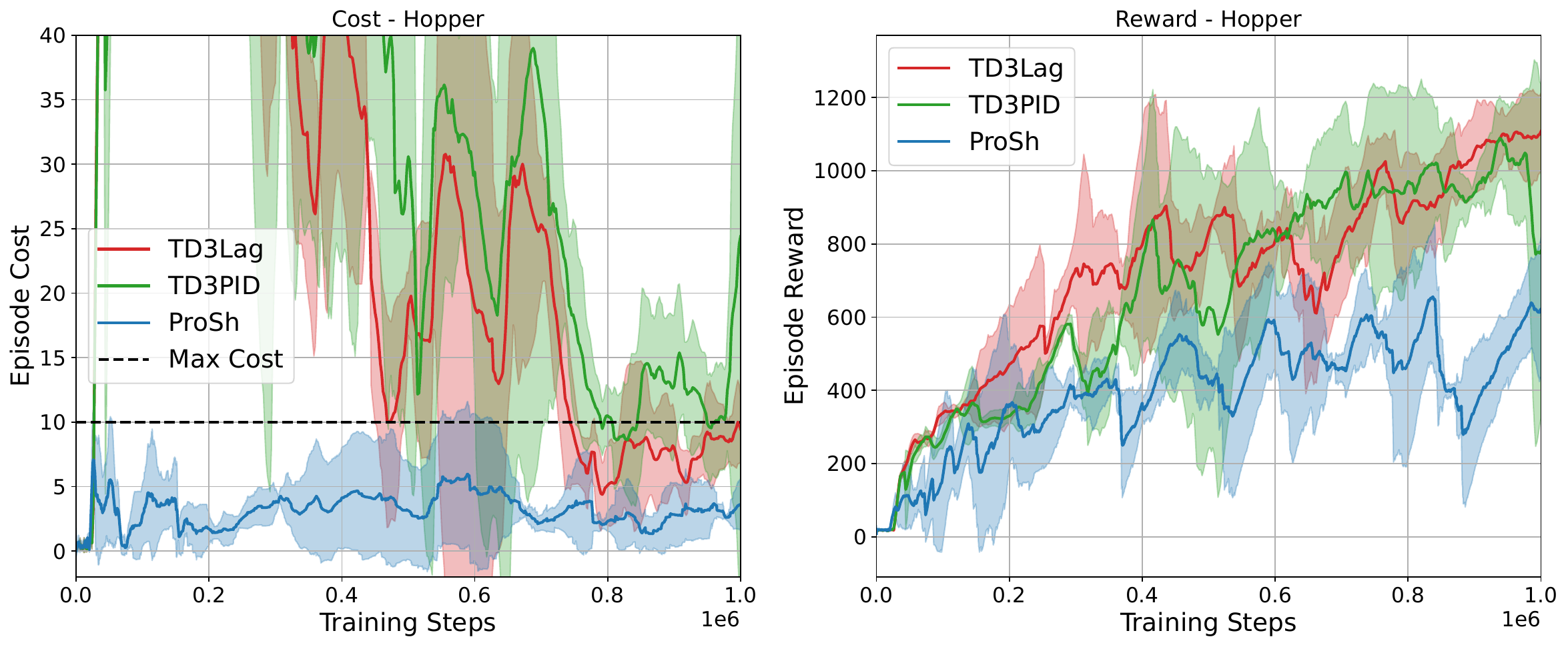}
\end{center}
\vspace{-0.5cm}
\begin{figure}[h]
    \caption{Comparison with Off-Policy Algorithms on Hopper. Cost threshold $d=10$.}
\end{figure}
\begin{center}
    \includegraphics[width=0.48\textwidth]{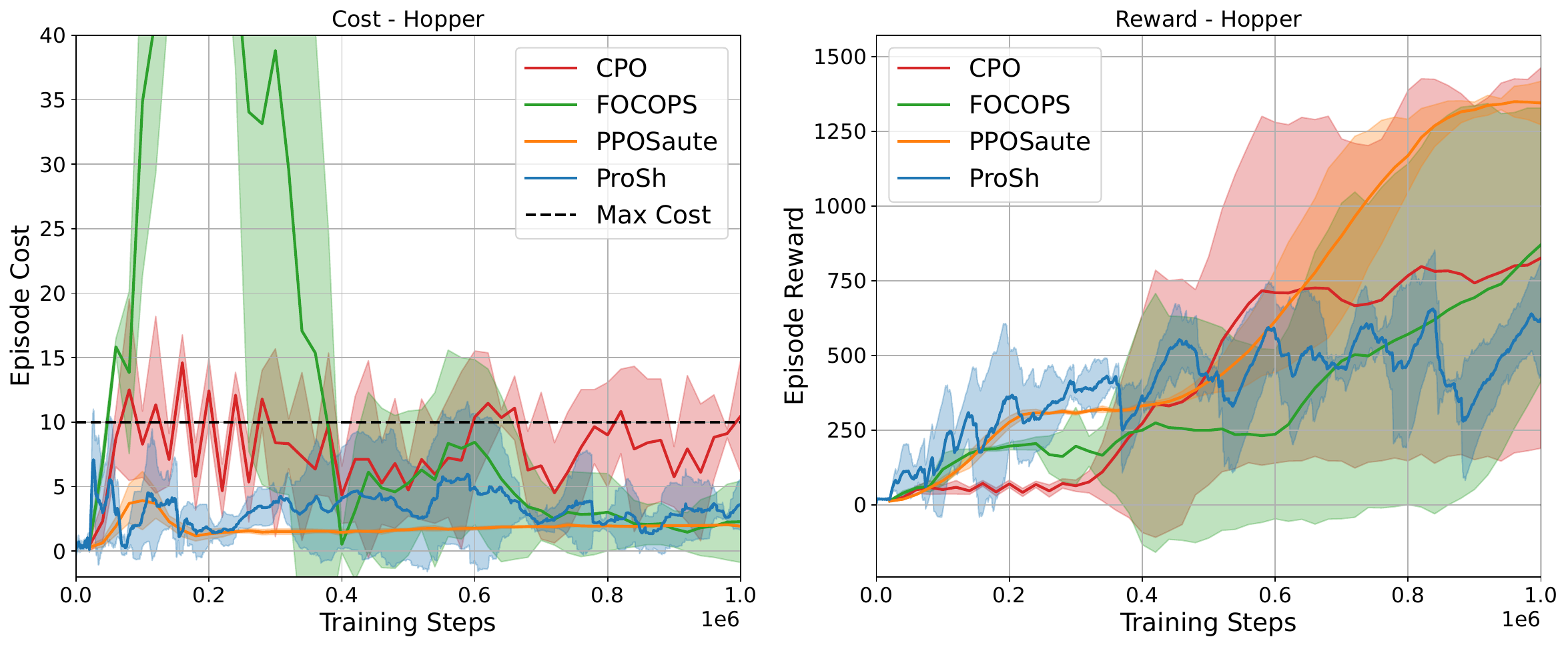}
\end{center}
\vspace{-0.5cm}
\begin{figure}[h]
    \caption{Comparison with On-Policy Algorithms on Hopper. Cost threshold $d=10$.}
\end{figure}

\textbf{Experimental Results: Safety Hopper.} \textsc{ProSh} achieves absolute safety and stays under the cost constraint at all times. The off-policy algorithms (TD3-Lag, PID-TD3) fail to be safe. FOCOPS and CPO also fail to be safe (including the standard deviation), while having comparable reward to \textsc{ProSh}. PPO-Saute is safe and has the highest reward on this environment.

\paragraph{The Navigation Suite} We consider four environments:
SafetyCarCircle, SafetyCarGoal, SafetyPointCircle, and SafetyPointGoal. The agent controls either a simple robot (Point), which is constrained to the 2D plane, with one actuator for turning and another for moving forward or backward; or a slightly more complicated robot (Car), which has two independently-driven parallel wheels and a free-rolling rear wheel that require coordination to properly navigate. Its goal is either to circle around the center of a circular area while avoiding going outside the boundaries (Circle), or to navigate to the Goal’s location while circumventing Hazards (Goal).

\begin{center}
    \includegraphics[width=0.48\textwidth]{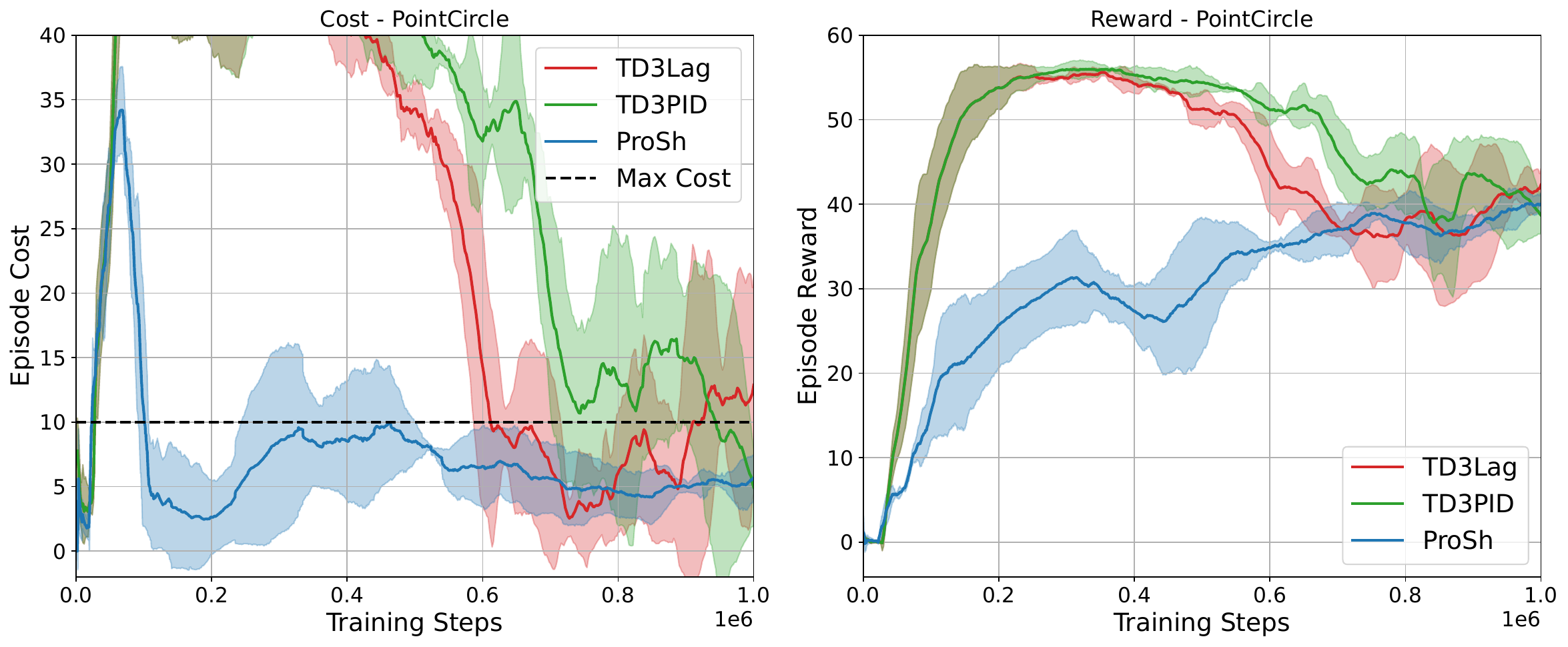}
\end{center}
\vspace{-0.5cm}
\begin{figure}[h]
    \caption{Comparison with Off-Policy Algorithms on Point Circle. Cost threshold $d=10$.}
\end{figure}
\begin{center}
    \includegraphics[width=0.48\textwidth]{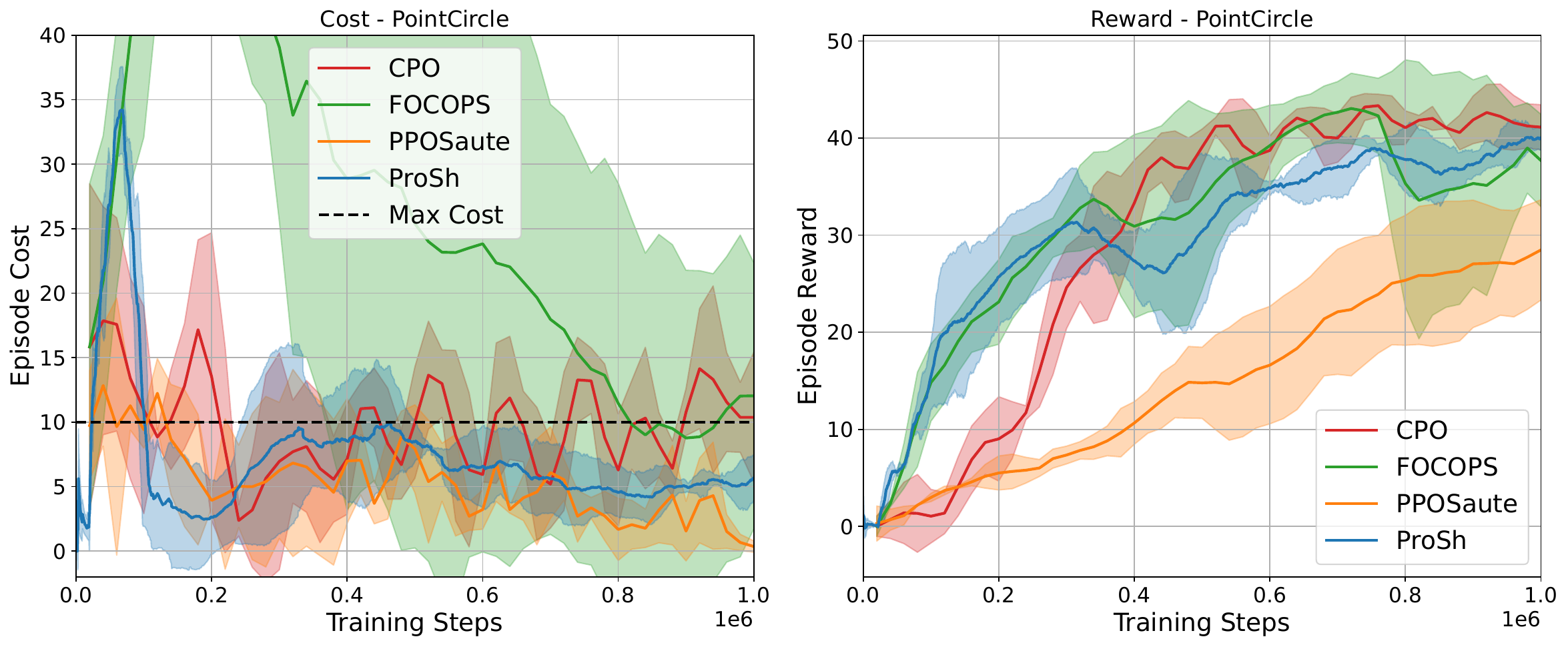}
\end{center}
\vspace{-0.5cm}
\begin{figure}[h]
    \caption{Comparison with On-Policy Algorithms on Point Circle. Cost threshold $d=10$.}
\end{figure}

\textbf{Experimental Results: Safety Point Circle.} \textsc{ProSh} achieves absolute safety and stays under the cost constraint at all times after a few steps. The off-policy algorithms (TD3-Lag, PID-TD3) fail to be safe and do not achieve higher rewards. FOCOPS and CPO also fail to be safe, while having comparable reward to \textsc{ProSh}. PPO-Saute is the only other safe algorithm, but it has lower performance.
\begin{center}
    \includegraphics[width=0.48\textwidth]{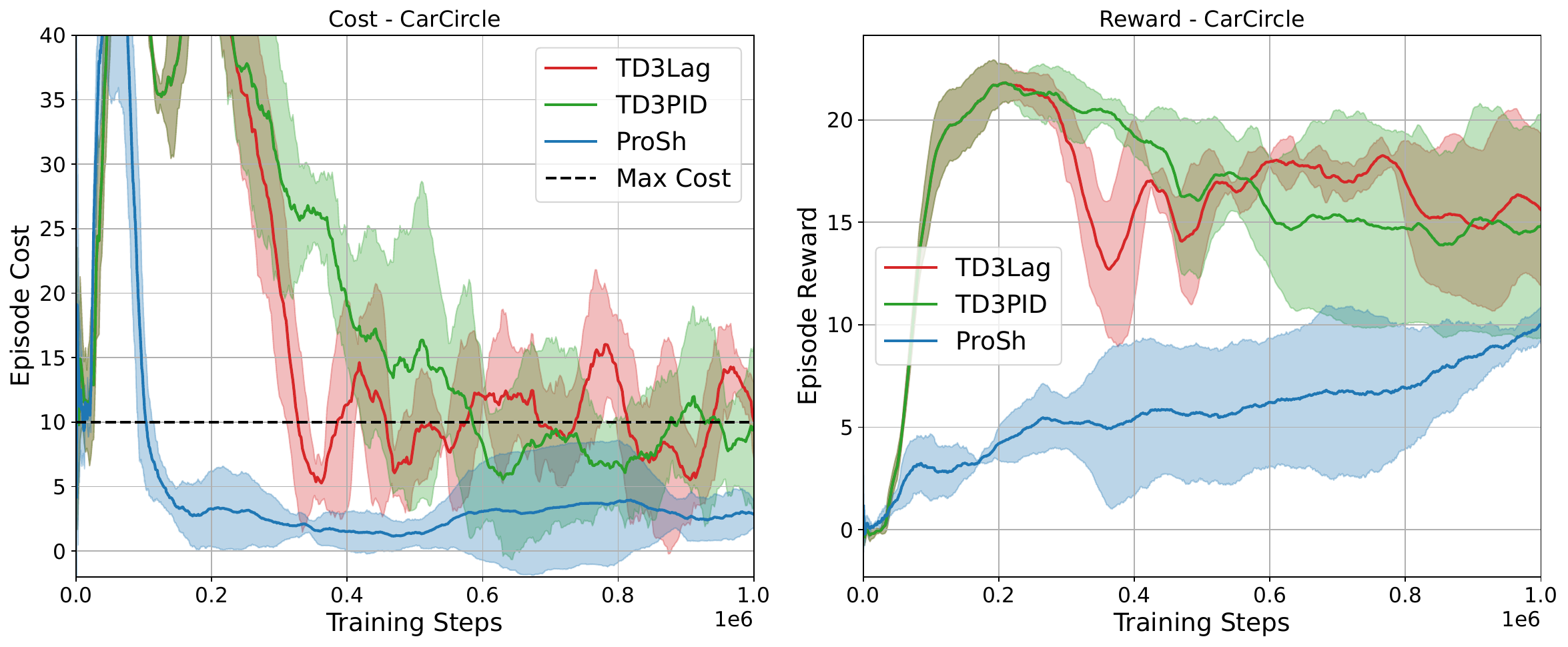}
\end{center}
\vspace{-0.5cm}
\begin{figure}[h]
    \caption{Comparison with Off-Policy Algorithms on Car Circle. Cost threshold $d=10$.}
\end{figure}
\begin{center}
    \includegraphics[width=0.48\textwidth]{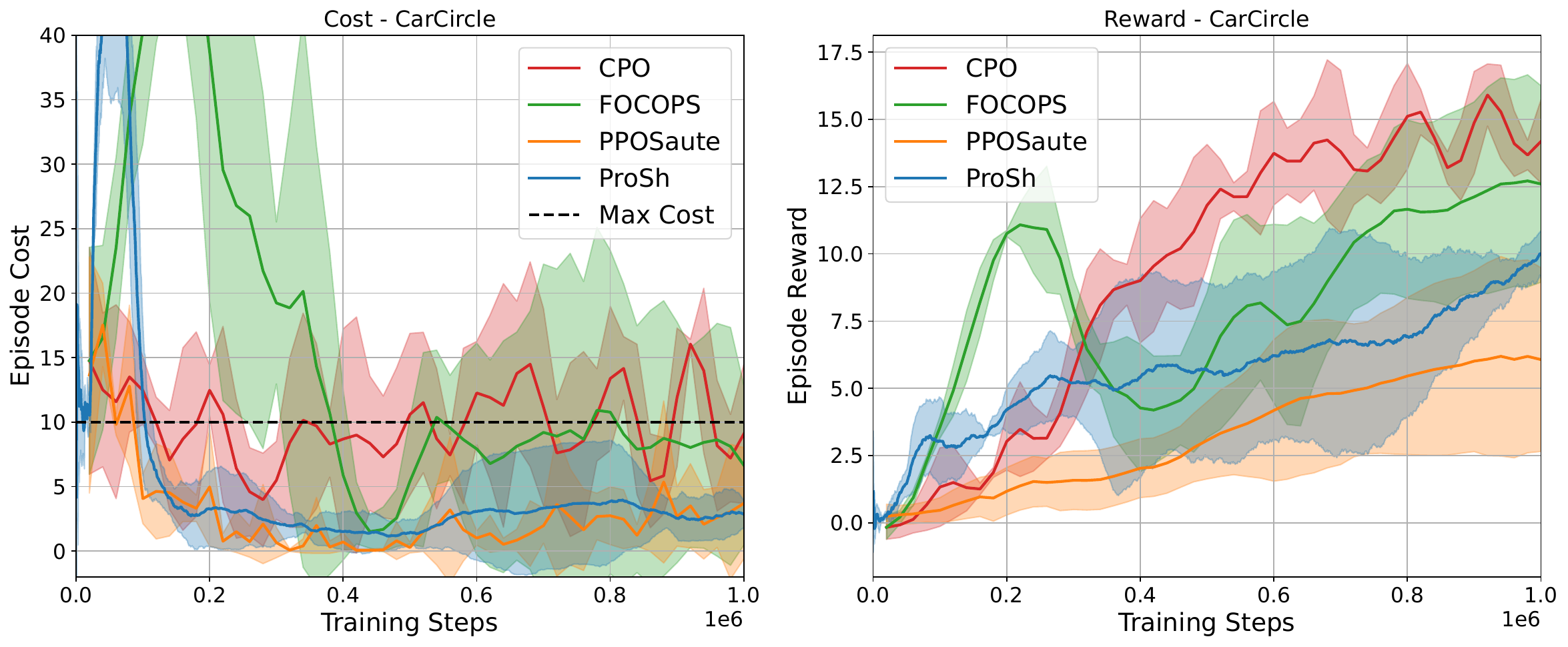}
\end{center}
\vspace{-0.5cm}
\begin{figure}[h]
    \caption{Comparison with On-Policy Algorithms on Car Circle. Cost threshold $d=10$.}
\end{figure}

\textbf{Experimental Results: Safety Car Circle.} \textsc{ProSh} achieves absolute safety and stays under the cost constraint at all times after a few steps. The off-policy algorithms (TD3-Lag, PID-TD3) are not safe. FOCOPS and CPO are also not safe and have very slightly greater rewards compared to \text{ProSh}. PPO-Saute is the only other safe algorithm, but has lower performance.

\begin{center}
    \includegraphics[width=0.48\textwidth]{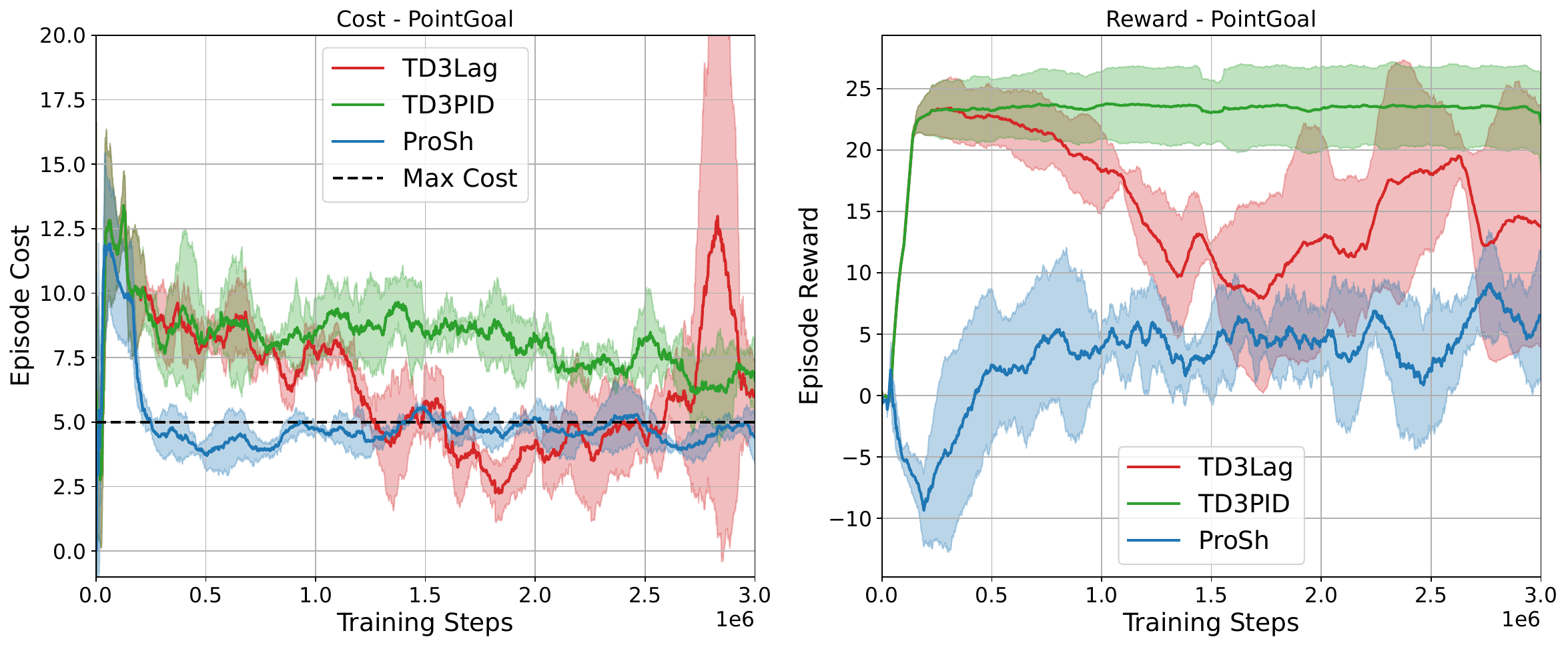}
\end{center}
\vspace{-0.5cm}
\begin{figure}[h]
    \caption{Comparison with Off-Policy Algorithms on Point Goal. Cost threshold $d=5$.}
\end{figure}
\begin{center}
    \includegraphics[width=0.48\textwidth]{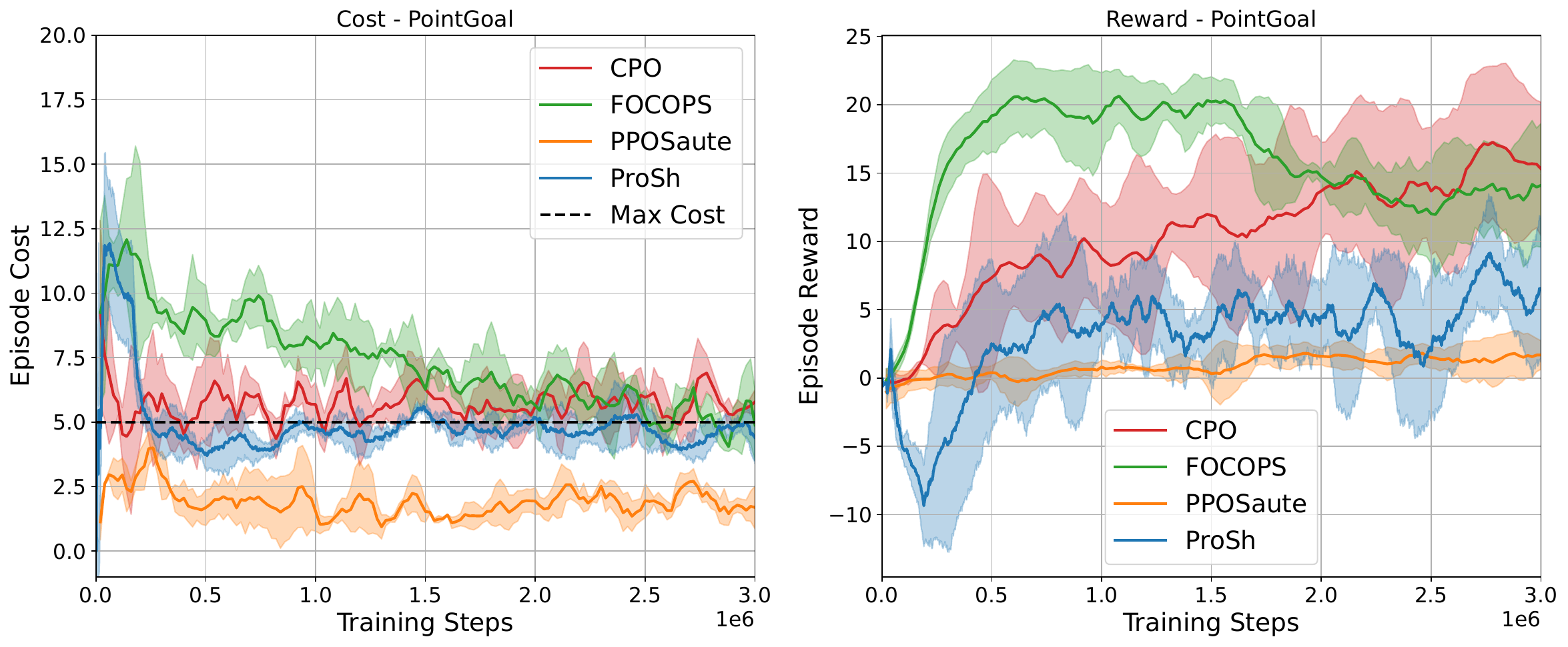}
\end{center}
\vspace{-0.5cm}
\begin{figure}[h]
    \caption{Comparison with On-Policy Algorithms on Point Goal. Cost threshold $d=5$.}
\end{figure}

\textbf{Experimental Results: Safety Point Goal.} Every algorithm is unsafe except PPO-Saute, that only achieves a tiny reward. \textsc{ProSh} is much safer than the other Algorithms, with a cost that evolves smoothly below the limit on average. Off-policy algorithms largely exceed the cost threshold, while on-policy algorithms stay closer to the threshold but violate the constraint even after several steps. In this environment, the higher safety of \textsc{ProSh} comes at the cost of diminished reward compared to CPO and FOCOPS.
\begin{center}
    \includegraphics[width=0.48\textwidth]{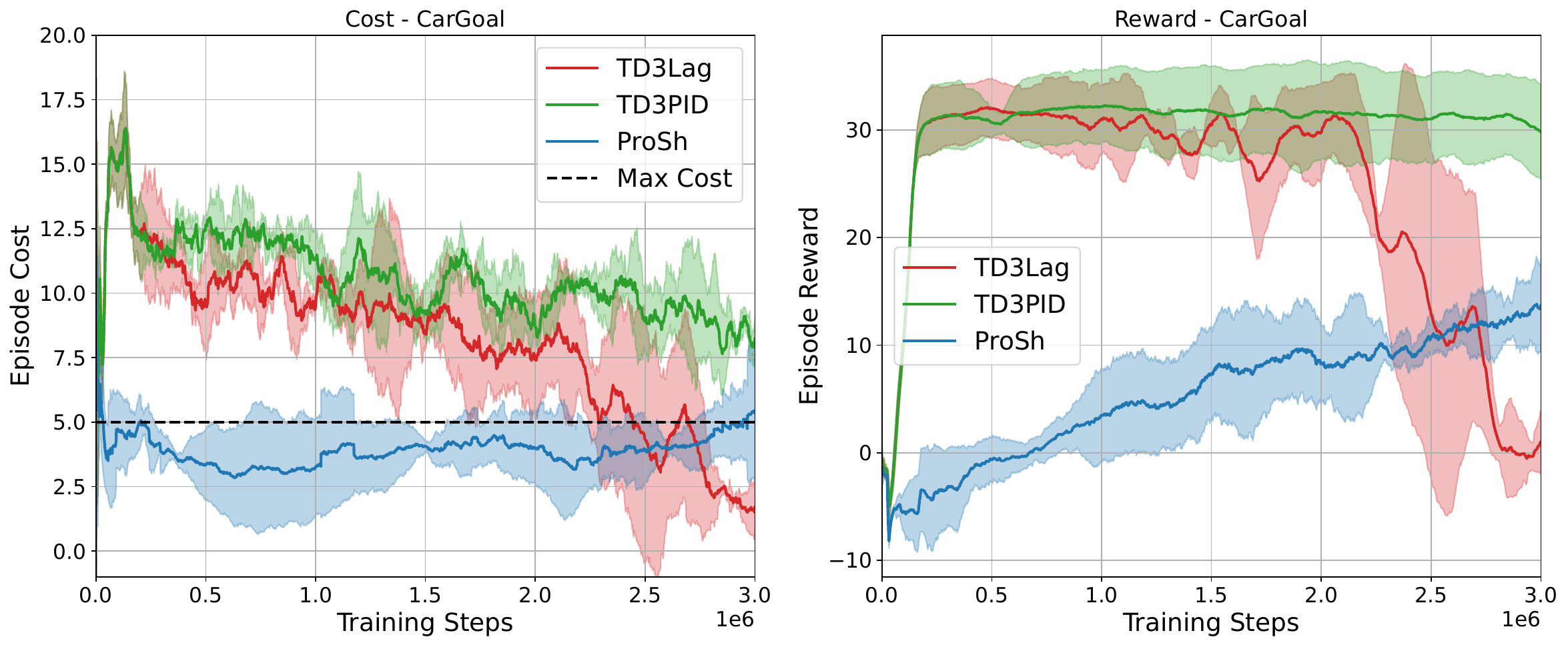}
\end{center}
\vspace{-0.5cm}
\begin{figure}[h]
    \caption{Comparison with Off-Policy Algorithms on Car Goal. Cost threshold $d=5$.}
\end{figure}
\begin{center}
    \includegraphics[width=0.48\textwidth]{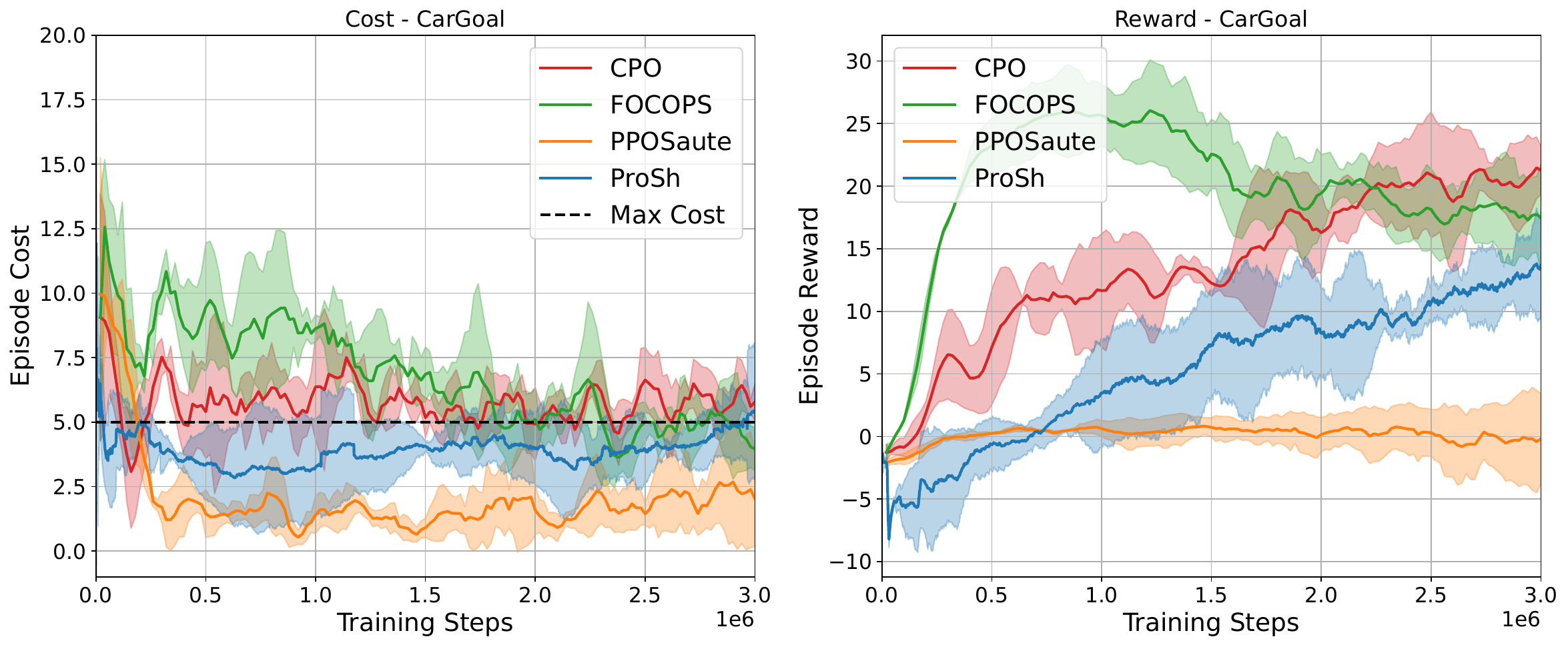}
\end{center}
\vspace{-0.5cm}
\begin{figure}[h]
    \caption{Comparison with On-Policy Algorithms on Car Goal. Cost threshold $d=5$.}
\end{figure}

\textbf{Experimental Results: Safety Car Goal.} PPO-Saute is always safe but only achieves a tiny reward. Among the others, \textsc{ProSh} is the safest algorithm, violating only the cost limit if the standard deviation is taken into account. The two off-policy algorithms TD3-Lag and TD3-PID are completely unsafe, while the two on-policy algorithms CPO and FOCOPS are unsafe even after many steps. Their rewards is slightly higher than \textsc{ProSh}, which showcases the best tradeoff between safety and reward.

\section{Conclusion}

We introduced \textsc{ProSh}, a model-free algorithm for safe reinforcement learning based on probabilistic shielding in a risk-augmented state space. Our method enforces strict safety during training, with formal guarantees depending only on the approximation quality of the backup critic $Q_b$. While this assumption is practically achievable in various settings, future work could leverage a dynamic risk margin that adapts to the critic's accuracy over time, ensuring safety even in early training stages. 

Our theoretical results focus on deterministic environments. To address the stochastic case, a \emph{risk-worthiness} function could be learned to guide risk allocation among successor states.

Finally, while our approach prioritized theoretical guarantees over fine-tuned optimization, the experimental results already demonstrate the soundness and promise of the core shielding mechanism. We expect performance to further improve with more refined implementations.

\paragraph{Acknowledgments.} The research described in this paper was partially supported by the EPSRC (grant number EP/X015823/1) and by the Moro-Barry family.

\newpage

\bibliographystyle{ACM-Reference-Format}   
\bibliography{bibliography}

\clearpage
\newpage
\onecolumn

\appendix

\section{Technical Appendices and Supplementary Material}
\subsection{ProSH-TD3 implementation}

We provide additional details on the implementation of \textsc{ProSH-TD3} and the theoretical guarantees that ensure its safety during training. First, we describe the full training procedure, including the update rules for both the main and backup actor–critic pairs. We then present the exact loss functions used to train each component of the algorithm. Finally, we prove that \textsc{ProSH-TD3} sampling remains safe even in the presence of exploration noise and hybrid training episodes.

\subsubsection{Pseudo-code and training}

The complete algorithm is outlined in Algorithm~\ref{alg:prosh-correct2}, including the training of the main actor and the backup actor, using normal episode where the main actor is used for sampling, and hybrid episode where the backup actor is used for sampling after several steps of the main actor.

\begin{algorithm}[H]
\caption{\textsc{ProSh}-TD3}
\label{alg:prosh-correct2}
\begin{algorithmic}[1]
\State \textbf{Input:} cost budget $d$, margin $\delta$, discount factors $\gamma,\gamma_c$, $hybrid\_delay$, $policy\_delay$
\State Initialize actor-critic pairs $Q_b^{\theta_i}, \pi_b^{\phi}, \overline{Q}_r^{\xi_i},\overline{\pi}_r^{\psi}$, $ep\_number \gets 0$, $L\gets []$
\For{each episode}
    \State $s \gets s_{\text{init}},\; r \gets d - \delta$, $step\_ind \gets 0$
    \If{$ep\_number\;\bmod\; hybrid\_delay = 0$}
        \State $switch\_step \gets \mathrm{rand}(0, \mathrm{mean}(L))$
    \Else
        \State $switch\_step \gets +\infty$
    \EndIf

    \While{not done}
    \If{$step\_ind \leq switch\_step$}
        \State Sample $\overline{a} \sim \Xi\left(\overline{\pi}_r^{\psi}\right)(s)$ and add noise to $\overline{a}$
        \State Execute $\overline{a}$, observe $\overline{s}'$, reward $r$, cost $c$
    \Else
        \State $(s,x) \gets \overline{s}$
        \State Sample $a \sim \pi_b^{\phi}$ and add noise to $a$
        \State Execute $\overline{a} = (a,x)$, observe $\overline{s}'$, reward $r$, cost $c$
    \EndIf
    \State store $(\overline{s}, \overline{a}, r, c, \overline{s}')$ in $\mathcal D$
    \State $\overline{s} \gets \overline{s}'$, $step\_ind\gets step\_ind+1$
    \State Sample from $\mathcal D$ and update critic networks by minimizing losses $\mathcal L(\theta_i)$, $\mathcal L(\xi_i) $
    \If{$ep\_number \;\bmod\; policy\_delay=0$}
        \State update actor networks by minimizing losses $\mathcal L(\phi)$, $\mathcal L(\psi) $
    \EndIf
    \EndWhile
    \State append $step\_ind$ to $L$, $ep\_number\gets ep\_number +1$
\EndFor
\State \textbf{return} projected policy $\tilde\pi$ (via Thm.~\ref{thm:projection})
\end{algorithmic}
\end{algorithm}

\paragraph{Actor-critic pairs training}
We train the actor-critic pairs with batch sampling from a replay buffer in which we store every transition made in the augmented MDP. For the backup critic, we use the training target $y_b(c,s')$, proposed in \cite{HH20}, equal to       
$$c+\gamma_c \left[\beta \min_{j\in\{1,2\}} Q_b^{\theta_j^{targ}}\left(s',\pi_b^{\psi^{targ}}(s')+\epsilon\right)+\frac{1-\beta}{2} \sum_{j\in\{1,2\}} Q_b^{\theta_j^{targ}}\left(s',\pi_b^{\psi^{targ}}(s')+\epsilon\right)\right],$$
where $\epsilon$ is a small gaussian noise and $\beta$ is a hyperparameter. Compared to the standard TD3 training target, this target helps combat the underestimation bias of TD3 \cite{HH20}.

Both of the backup critic heads are learned by regressing to this target:
    $$\mathcal L(\theta_i)=\mathbb E_{((s,x),(a,y),r,c(s',x'))\sim \mathcal D} \left[\left(Q_b^{\theta_i}(s,a)-y_b(c,s')\right)^2\right].$$
The backup actor is updated with the standard TD3 actor loss:
$$\mathcal L(\phi)=-\mathbb E_{(s,x)\sim \mathcal D} \left[Q_b^{\theta_1}(s,\pi^\phi(s))\right].$$
The main critic is updated with a loss adapted from TD3, but taking into account that the shielded main actor outputs three actions with their respective probabilities:
$$\mathcal L(\xi_i)=\mathbb E_{(\overline{s},\overline{a},r,c,\overline{s}')\sim \mathcal D} \left[\left(\overline{Q}_r^{\xi_i}(\overline{s},\overline{a})-y_r(r,\overline{s}')\right)^2\right], \text{ where }$$
$$y_r(r,s')= r+\gamma \sum_{k\in \{1,2,3\}} \lambda_k\min_{j\in\{1,2\}} \overline{Q}_r^{\theta_j^{targ}}\left(\overline{s}',\overline{a}_k+\epsilon\right)
\text{ with } \Xi(\overline{\pi}_r^{\psi})(\overline{s}')=\sum_{k\in\{1,2,3\}}\lambda_k\overline{a}_k.$$
Similarly, the main actor is updated taking into account the shield:
$$\mathcal L(\psi)=-\mathbb E_{\overline{s}\sim\mathcal D} \left[\sum_{k\in\{1,2,3\}} \lambda_k \overline{Q}_r^{\xi_1}\left(\overline{s},\overline{a}_k\right) \right] \text{ where } \Xi(\overline{\pi}_r^{\psi})(\overline{s})=\sum_{k\in\{1,2,3\}}\lambda_k\overline{a}_k.$$

Note that the gradients can flow through the $\lambda_k$ and the $\overline{a}_k$ in the main actor training loss as they are computed in a fully differentiable way.



\subsubsection{Safe sampling guarantees}

The algorithm \ref{alg:prosh-correct2} includes noise for the exploration as well as exploration rounds to train the backup actor and the backup critic. We will present Theorem \ref{thm:noise} stating that the sampling is still safe and provide an upper-bound for the cost. 

\begin{theorem}[Safety of the shielding with noise]\label{thm:noise}
\label{safetyshieldingwnoise}
    We consider a MDP $\mathcal{M}$ and any two policies $\bar \pi_1$ and $\bar \pi_2$, and $\xi \in [0,1]$ a small number. Then, with $\bar \pi$ the policy defined as 
    \[
    \bar \pi(s) = (1-\xi) \bar \pi_1(s) + \xi \bar \pi_2(s)
    \]
    for all $s\in \mathcal{S}$, the discounted cost of the policy $\bar \pi$ satisfies
    \[
    C(\bar \pi) \leq C(\bar \pi_1) + \frac{\xi c_{\max}}{1-\gamma_c} \frac{1}{1-(1-\xi)\gamma_c}.
    \]
\end{theorem}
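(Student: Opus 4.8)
The plan is to bound the discounted cost of the mixture policy $\bar\pi$ by comparing it to the cost of $\bar\pi_1$ through a telescoping argument over the probability of ever "deviating" to $\bar\pi_2$. First I would fix an augmented state $s$ and unroll the expected discounted cost $C(\bar\pi)(s)$ one step at a time: at each step, with probability $1-\xi$ the policy behaves as $\bar\pi_1$ and with probability $\xi$ it behaves as $\bar\pi_2$. The key observation is that the worst case for the cost is realized when, upon deviating, all subsequent costs are maximal, i.e.\ bounded by $c_{\max}/(1-\gamma_c)$, the maximal possible discounted cumulative cost from any state.

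The main technical step is to set up the right recursion. Writing $\bar C = \sup_s C(\bar\pi)(s)$ and $\bar C_1 = \sup_s C(\bar\pi_1)(s)$, I would argue that taking one step under $\bar\pi$ incurs the immediate cost $c(s,a)$, then with probability $1-\xi$ continues with a state on which $\bar\pi$ is again applied (contributing $\gamma_c \bar C$ in the worst case, but more carefully $\gamma_c$ times the continuation cost), while with probability $\xi$ it may incur at most $\gamma_c c_{\max}/(1-\gamma_c)$ of additional discounted cost. The cleanest formulation is actually to compare step-by-step with $\bar\pi_1$: the cost of $\bar\pi$ differs from that of $\bar\pi_1$ only on the event that a $\bar\pi_2$-action is ever sampled; the probability that the first $k$ steps all follow $\bar\pi_1$ is $(1-\xi)^k$, so the probability that step $k$ is the first deviation is $(1-\xi)^k \xi$, and the extra discounted cost incurred from that point is at most $\gamma_c^{k+1} c_{\max}/(1-\gamma_c)$ (bounding $\bar\pi_2$'s future cost by $c_{\max}/(1-\gamma_c)$ and noting the $\bar\pi_1$-trajectory that was replaced had nonnegative cost). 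Summing the geometric series $\sum_{k\geq 0} (1-\xi)^k \xi \, \gamma_c^{k+1} c_{\max}/(1-\gamma_c) = \frac{\xi \gamma_c c_{\max}}{1-\gamma_c} \cdot \frac{1}{1-(1-\xi)\gamma_c}$ yields the claimed bound (up to the $\gamma_c$ factor, which I'd keep track of carefully — the statement as written omits it, so I would double-check whether the intended bound has $\gamma_c$ in the numerator or whether it is absorbed).

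The subtle point — and the main obstacle — is making the "first deviation" decomposition rigorous in the branching stochastic setting: after a deviation the process can still return to states where $\bar\pi_1$ and $\bar\pi$ agree, so one must be careful that the coupling between the $\bar\pi$-process and the $\bar\pi_1$-process is only broken at the first deviation and that costs before any deviation are identical under the natural coupling. I would handle this by a coupling argument: run both policies on the same randomness, with $\bar\pi$ flipping a biased coin each step; before the first time the coin lands on "$\bar\pi_2$", the two trajectories coincide exactly, so their accumulated discounted costs agree; after that time, the $\bar\pi$-cost continuation is bounded by $c_{\max}/(1-\gamma_c)$ while the $\bar\pi_1$-cost continuation is nonnegative, giving a pointwise bound on the difference of $\gamma_c^{K} c_{\max}/(1-\gamma_c)$ where $K$ is the (geometrically distributed) first deviation time. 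Taking expectations over $K$ and the trajectory finishes the proof. Formally, one can also phrase this as a fixed-point inequality $\bar C \leq \bar C_1 + \xi \gamma_c \frac{c_{\max}}{1-\gamma_c} + (1-\xi)\gamma_c (\bar C - \bar C_1)$ and solve for $\bar C - \bar C_1$, which is the most compact route and avoids explicit series manipulation.
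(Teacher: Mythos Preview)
Your approach is correct and takes a genuinely different route from the paper. The paper proceeds by a direct induction on the $n$-step truncated cost $C_n(\bar\pi,s)$: it unrolls the Bellman recursion one step, bounds the $\xi$-branch crudely by $\xi\gamma_c\,c_{\max}/(1-\gamma_c)$, applies the induction hypothesis on the $(1-\xi)$-branch, and recognises the resulting recursion as the partial sums of the geometric series $\sum_k (1-\xi)^k\gamma_c^k$. Your two routes are different in flavour: the coupling/first-deviation-time argument makes the geometric series appear as the law of the first time the exogenous $\xi$-coin lands on $\bar\pi_2$, which is conceptually cleaner and explains \emph{why} the product $(1-\xi)\gamma_c$ is the right contraction factor; the fixed-point inequality $\delta \le (1-\xi)\gamma_c\,\delta + \xi\,c_{\max}/(1-\gamma_c)$ for $\delta=\sup_s[C(\bar\pi)(s)-C(\bar\pi_1)(s)]$ is the most compact and avoids both induction and explicit series. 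All three arguments rely on the same two ingredients (nonnegativity of costs to drop the $\bar\pi_1$ continuation, and the uniform bound $c_{\max}/(1-\gamma_c)$ on any continuation), so none is strictly more general, but yours are shorter and less prone to index slips. One small correction: in your coupling version the first deviation at step $k$ already affects the step-$k$ cost, so the discount factor should be $\gamma_c^{k}$ rather than $\gamma_c^{k+1}$; with $\gamma_c^{k}$ the sum matches the stated bound exactly, resolving your parenthetical doubt about the missing $\gamma_c$.
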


We now consider the effective policy used during sampling, and not only the policy that the algorithm ouptuts. We aim to show that it also satisfies the safety guarantees. In practice, \textsc{ProSH} alternates between two modes: standard training episodes using the main actor, and hybrib episodes where the backup actor is used after several steps of the main actor. This alternation is used to improve the training of the backup critic $Q_b$. In both cases, a small noise is added.

For every augmented state $(s,x)$, both the main actor policy and the hybrid policy, without noise, can be written as
\[
\bar \pi (s,x) = \lambda (s,x) \bar \pi_1  + (1-\lambda(s,x)) \bar \pi_b,
\]

where $\bar \pi_1$ is the main actor and $\bar \pi_b$. Hence, the policy $\bar \pi$ is also $Q_b$-shielded, and its cost satisfies the estimate
\[
C(\bar \pi) \leq x_0 + \frac{2\Delta_b}{1-\gamma_c}.
\]

We now consider the final policy used for sampling with noise, and define
\[
\bar \pi_0 = (1-\xi) \bar \pi + \xi \bar \pi_{noise}.
\]

Using theorem \ref{thm:noise}, we have the upperbounds:
\[
C(\bar \pi_0) \leq C(\bar \pi) + \frac{1}{1-\gamma_c} \frac{\xi c_{\max}}{1-(1-\xi)\gamma_c}\leq x_0 + \frac{2\Delta_b}{1-\gamma_c}+ \frac{1}{1-\gamma_c} \frac{\xi c_{\max}}{1-(1-\xi)\gamma_c} .
\]
So the policy used for sampling is also safe, and the term induced by the noise can be removed by choosing
\[
x_0 \leq d - \frac{1}{1-\gamma_c} \frac{\xi c_{\max}}{1-(1-\xi)\gamma_c} .
\]

\newpage 

\subsection{Velocity and Navigation environments}

We evaluate our method on environments from the Safe Velocity and Safe Navigation suites of Safety-Gymnasium. Specifically, we use HalfCheetah, and Hopper (velocity tasks), and CarGoal, CarCircle, PointGoal, and PointCircle (navigation tasks). 

\textbf{Velocity Tasks.}
These tasks focus on moving forward quickly while remaining within a safe speed limit. The reward is proportional to forward progress, and the cost is binary: it equals 1 if the agent’s velocity exceeds $50\%$ of its training-time maximum, and 0 otherwise.
\begin{itemize}
\item HalfCheetah: A 2D robot with 9 links and 8 joints learns to run forward via torque control.
\item Hopper: A one-legged robot learns to hop forward by coordinating joint torques.
\end{itemize}

\textbf{Navigation Tasks.}
These tasks require agents to reach goals or follow paths while avoiding unsafe regions. The reward is based on proximity to the goal or path adherence, and the cost equals 1 upon contact with obstacles or hazardous zones.

\begin{itemize}
\item CarGoal vs. PointGoal: Both tasks require reaching a fixed target location.
\begin{itemize} 
\item CarGoal uses a wheeled robot with non-holonomic constraints, simulating a simplified autonomous vehicle.
\item PointGoal uses a free-moving point mass, which can change direction instantaneously.
\end{itemize}
\item CarCircle vs. PointCircle: The goal is to follow a predefined circular trajectory.
\begin{itemize}
\item CarCircle again uses the wheeled car agent, requiring continuous steering control.
\item PointCircle uses the point mass agent, allowing for simpler motion planning but still subject to the same cost constraints.
\end{itemize}
\end{itemize}

\begin{figure}[!h]
    \begin{subfigure}{0.24\textwidth}
        \includegraphics[width=\textwidth]{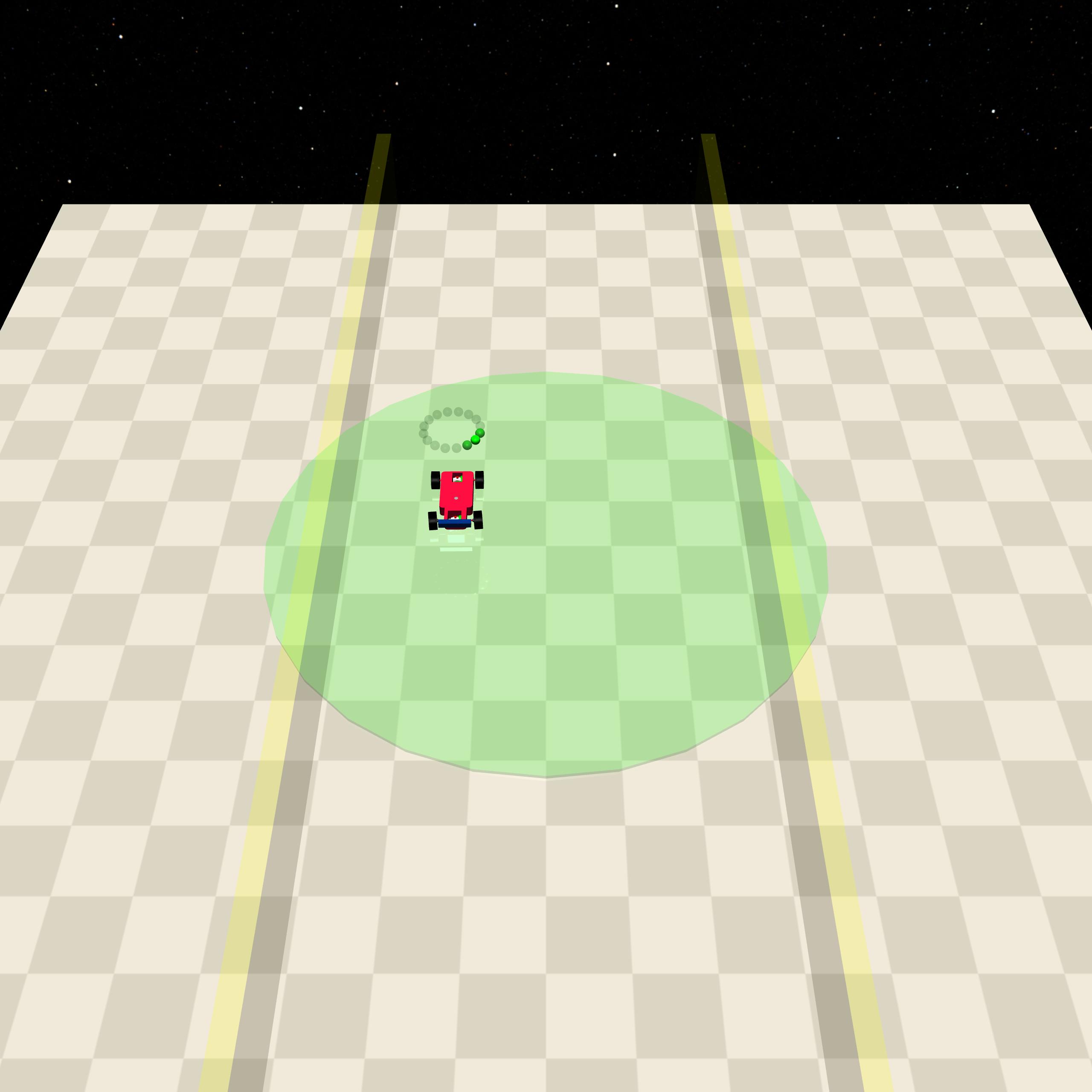}
        \caption{CarCircle}
    \end{subfigure}
    \begin{subfigure}{0.24\textwidth}
        \includegraphics[width=\textwidth]{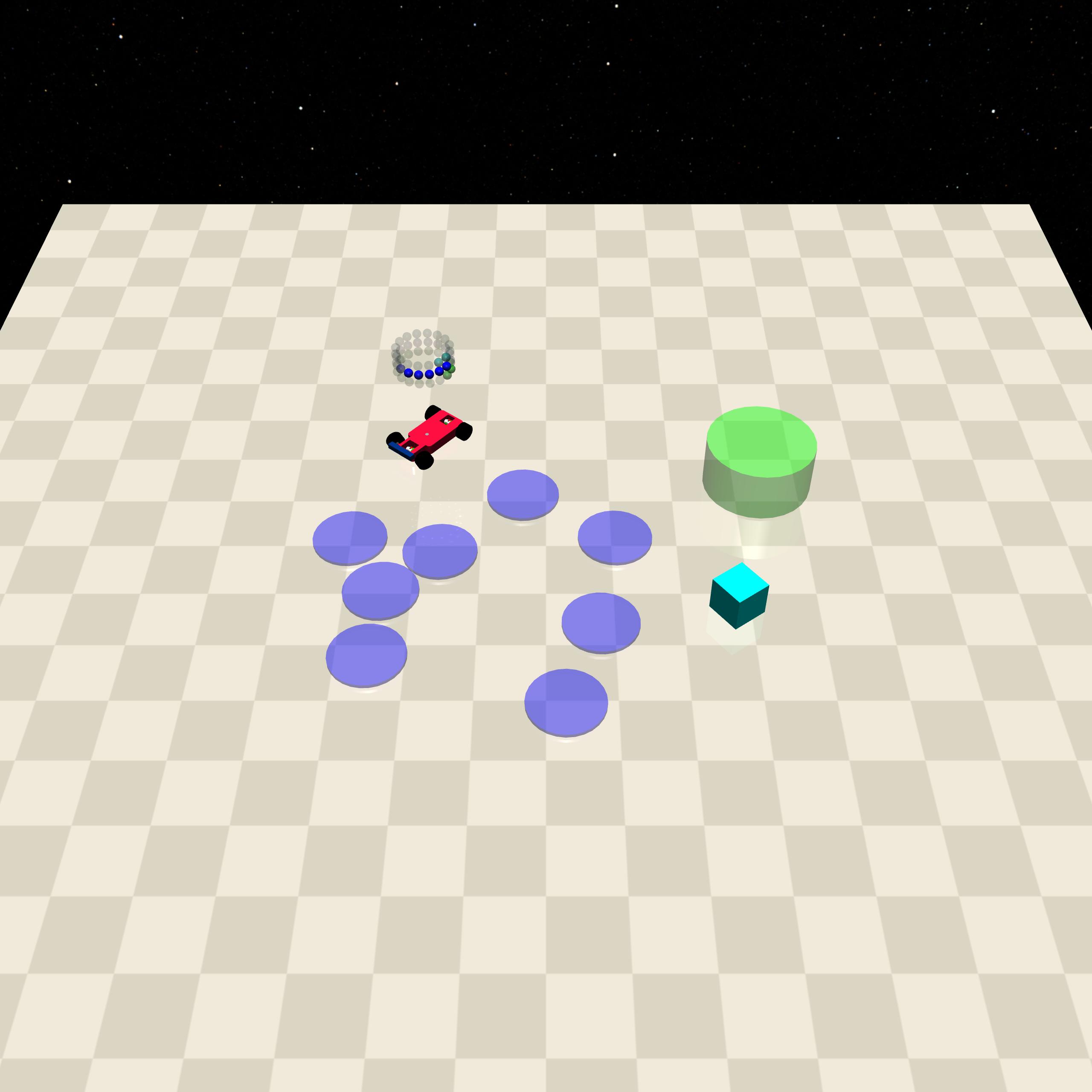}
        \caption{CarGoal}
    \end{subfigure}
    \begin{subfigure}{0.24\textwidth}
        \includegraphics[width=\textwidth]{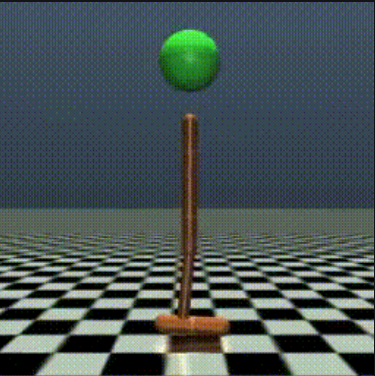}
        \caption{Hopper}
    \end{subfigure}
    \begin{subfigure}{0.24\textwidth}
        \includegraphics[width=\textwidth]{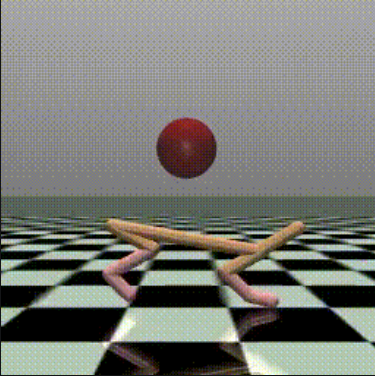}
        \caption{Half Cheetah}
    \end{subfigure}
    \caption{On the left, two environments of the Safe Navigation suite from Safety-Gymnasium, CarCircle and CarGoal. In CarCircle, the agent needs to circle around the center of the circle area while avoiding going outside the boundaries. In CarGoal, the agent needs to navigate to the Goal’s location while circumventing Hazards. In both case we choose $Level=1$. On the right, two environments of the Safe Velocity suite from Safety-Gymnasium, SafetyHopper and SafetyHalfCheetah. In both cases the agent has to move as quickly as possible while adhering to a velocity constraint. }
\end{figure}

\subsection{Experimental Setup}


\paragraph{Hyperparameters}
For PPO-Saute and TD3-Lag, we set $\gamma_c=0.995$, and we set $\gamma_{Saute}=0.995$ for PPO-Saute. For the other hyperparameters for TD3-Lag and PPO-Saute
, we use for each environments the default hyperparameters provided by Omnisafe \cite{ji2023omnisafeinfrastructureacceleratingsafe}. 
As for our algorithm, the set of hyperparameters used for every environment is given in Table \ref{table-hyperparam}. The more difficult the environment, the more the learning rate of the actors are decreased and the learning rate of the critics increased, as is standard for modern TD3 implementations.

\begin{table}[ht]
\begin{center}
\renewcommand{\arraystretch}{1.2}
\begin{tabular}{lcccccc}
\hline
\textbf{Hyperparameter} & \textbf{HalfCheetah} & \textbf{PointCircle} & \textbf{Hopper} & \textbf{CarCircle} & \textbf{PointGoal} & \textbf{CarGoal}\\
\hline
Main actor LR & 1e-4   & 1e-4   & 5e-4   & 5e-5  & 5e-5 & 5e-6  \\
Backup actor LR & 1e-4   & 1e-4   & 5e-4   & 5e-5 & 5e-6 & 5e-5 \\
Main critic LR & 1e-4   & 1e-4   & 1e-3   & 1e-3  & 1e-3 & 1e-4  \\
Backup critic LR & 1e-4   & 1e-4   & 1e-3   & 3e-4 & 3e-4 & 3e-4 \\
Reward Discount factor & 0.99   & 0.99   & 0.99   & 0.99 & 0.99 & 0.99 \\
Cost Discount factor & 0.995   & 0.995   & 0.995   & 0.995 & 0.995 & 0.995 \\
policy\_delay & 2   & 2   & 2   & 2 & 2 & 2\\
hybrid\_delay & 2   & 2   & 2   & 2 & 3 & 3\\
$\beta$ & 0.75   & 0.75   & 0.75   & 0.75 & 0.75 & 0.75\\
net\_arch & [400,300]   & [400,300]   & [400,300]   & [400,300] & [400,300] & [400,300] \\
Exploration Policy & $\mathcal{N}(0,0.2)$ & $\mathcal{N}(0,0.2)$ & $\mathcal{N}(0,0.2)$ & $\mathcal{N}(0,0.2)$ & $\mathcal{N}(0,0.2)$ &  $\mathcal{N}(0,0.2)$ \\
\hline
\end{tabular}
\end{center}
\caption{Hyperparameters for ProSH-TD3}\label{table-hyperparam}
\end{table}
In Table \ref{table-hyperparam}, $\beta$ is the hyperparameter in the backup critic target. 

\paragraph{Experimental Setup} 
All experiments were conducted on a PBS-managed high-performance computing cluster running Red Hat Enterprise Linux 8.5 (Ootpa). Each compute node is equipped with 2× AMD EPYC 7742 CPUs (128 cores total), 1 TB of RAM, and 8× NVIDIA Quadro RTX 6000 GPUs. We launched each experiment seed as an independent PBS job, allocating 1 CPU core, 12GB of RAM, and 1 GPU per job. A typical training run (1 million environment steps) took approximately 4 hours.

For PPO-Saute, CPO, FOCOPS, TD3-PID, and TD3-Lag, we used \texttt{Python} 3.10.0, \texttt{Omnisafe} version 0.5.0, \texttt{Torch} version 2.7.0 and \texttt{Cuda} 11.8.

For ProSH-TD3, we used \texttt{Python} 3.10.0, \texttt{Stable baselines3} version 2.6.0, \texttt{Torch} version 2.7.0 and \texttt{Cuda} 11.8.

\newpage

\newpage

\subsection{Theoretical Analysis}

In this section, we provide all the proofs for the theorem we have stated throughout the article. 

\subsubsection{Proof of Theorem~\ref{thm:safe_nondet}}

We recall Theorem \ref{thm:safe_nondet}:

\textbf{Theorem 1.} (Safety bound for $Q_b$-shields.)

\emph{
Let $\mathcal{M}$ be a CMDP with cost discount factor $\gamma_c$, and
let $Q_b^{*}$ be its optimal state–action cost function. Assume that $Q_b$ is a $Q$-value, and let $\Delta_b = ||Q_b-Q_b^*||_{L^\infty(\mathcal{S}\mathcal{A})}$.  For any policy
$\bar\pi\in\overline\Pi_{\mathrm{val}}$ that is $Q_b$-shielded, the expected
discounted cost from the augmented state $(s_0,x_0)$ obeys}
\[
\mathcal{C}^{(s_0,x_0)}(\bar\pi)
\;\le\;
\frac{x_0}{\gamma_c}+\frac{2\Delta_b}{1-\gamma_c},
\qquad
\text{whenever } x_0\ge Q_b(s_0).
\]

    In order to prove theorem~\ref{thm:safe_nondet}, we will first prove the following lemma.
\begin{lemma}\label{lemma:induction}
    Let $C_n(s,x)=\gamma_c D_{n-1}(s,x)$, where $D_n$ is the expected discounted cost of the $n$ first steps of the policy $\bar \pi$. Then, 
    \[
    \left\{
    \begin{aligned}
        &C_{n}(s,x) \leq x + f_{n}(\Delta_b),~\text{when }x\geq Q_b(s), \\
        &C_{n}(s,x) \leq Q_b(s) + g_n(\Delta_b),~\text{when }x < Q_b(s)
    \end{aligned}
    \right.
    \]
    as long as the sequences $f_n(\Delta_b)$ and $g_n(\Delta_b)$ satisfy $f_1(\Delta_b) \geq \gamma_c \Delta_b$, $g_1(\Delta_b) \geq \gamma_c \Delta_b$ and for every $n\geq 2$,
    \[
    \left\{
        \begin{aligned}
            &f_n(\Delta_b) \geq 2 \gamma_c \Delta_b + \gamma_c f_{n-1},\\
            &g_n(\Delta_b) \geq \max( 2\gamma_c \Delta_b + \gamma_c g_{n-1}(\Delta_b), 3 \gamma_c \Delta_b + \gamma_c f_{n-1}(\Delta_b)  ). 
        \end{aligned}
    \right.
    \]

\end{lemma}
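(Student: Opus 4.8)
I would argue by induction on $n$, the engine being the finite-horizon Bellman recursion
\[
C_{n+1}(s,x)=\gamma_c\,\mathbb{E}_{(a,y)\sim\bar\pi(s,x)}\!\big[c(s,a)\big]+\gamma_c\,\mathbb{E}\big[C_n(s',x')\big],
\]
which follows at once from $C_m(s,x)=\gamma_c D_{m-1}(s,x)$ and the one-step unrolling of $D_m$, the successor $(s',x')$ being distributed according to $\overline{P}$. Two preliminary facts are set up first. Since costs are non-negative, $Q_b^*\ge 0$, hence $\min_a Q_b(s,a)\ge-\Delta_b$ and $Q_b(s)=\gamma_c\min_a Q_b(s,a)\ge-\gamma_c\Delta_b$ for every $s$; this already settles the base case $C_1\equiv 0$ against both bounds whenever $f_1,g_1\ge\gamma_c\Delta_b$. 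Second, the Bellman optimality equation $Q_b^*(s,a)=c(s,a)+\gamma_c\,\mathbb{E}_{s'\sim P(s,a)}[\min_{a'}Q_b^*(s',a')]$ together with $\|Q_b-Q_b^*\|_\infty\le\Delta_b$ — invoked once inside the expectation (a $\gamma_c\Delta_b$ error) and once on $Q_b^*(s,a)$ itself (a $\Delta_b$ error) — yields the \emph{one-step comparison} $c(s,a)+\mathbb{E}_{s'\sim P(s,a)}[Q_b(s')]\le Q_b(s,a)+(1+\gamma_c)\Delta_b$, recalling $Q_b(s')=\gamma_c\min_{a'}Q_b(s',a')$. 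I will repeatedly use $\gamma_c(1+\gamma_c)\le 2\gamma_c$.

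\textbf{Inductive step, regime $x\ge Q_b(s)$.} Invoke the $Q_b$-shielded decomposition of $\bar\pi(s,x)$, writing $w_a=(1-\lambda)P_{\bar\pi}((a,y_a)\mid(s,x))$ for the effective weight of a non-backup action $a$ (its valuation satisfying $y_a\ge Q_b(s,a)$) and $\lambda$ for the backup action $(\pi_b(s),Q_b(s,\pi_b(s)))$, so that $\sum_a w_a+\lambda=1$ and the consistency inequality reads $\sum_a w_a y_a+\lambda Q_b(s,\pi_b(s))\le x/\gamma_c$. The key structural observation is that \emph{every} successor remains in this regime: along a non-backup branch $x'=y_a-Q_b(s,a)+Q_b(s')\ge Q_b(s')$ since $y_a\ge Q_b(s,a)$, and along the backup branch $x'=Q_b(s')$. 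Hence I may apply the first bound of the hypothesis in every branch; expanding $x'$, using the one-step comparison to replace each $c(s,a)+\mathbb{E}_{s'}[Q_b(s')]$ by $Q_b(s,a)+(1+\gamma_c)\Delta_b$ so that the $-Q_b(s,a)$ carried by $x'$ cancels, and summing over branches, I reach $C_{n+1}(s,x)\le\gamma_c\big(\sum_a w_a y_a+\lambda Q_b(s,\pi_b(s))\big)+\gamma_c(1+\gamma_c)\Delta_b+\gamma_c f_n\le x+2\gamma_c\Delta_b+\gamma_c f_n\le x+f_{n+1}$, the last two inequalities using consistency and the $f$-recursion respectively.

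\textbf{Inductive step, regime $x< Q_b(s)$.} Here the fallback clause gives $\bar\pi(s,x)=(\pi_b(s),z)_\delta$ with $z\le x$ and $x\ge\gamma_c z$, so the recursion collapses to $C_{n+1}(s,x)=\gamma_c c(s,\pi_b(s))+\gamma_c\,\mathbb{E}_{s'\sim P(s,\pi_b(s))}[C_n(s',x')]$ with $x'=z-Q_b(s,\pi_b(s))+Q_b(s')$. From $z\le x< Q_b(s)=\gamma_c Q_b(s,\pi_b(s))$ and $Q_b(s,\pi_b(s))\ge-\Delta_b$ I would derive the quantitative bound $x'< Q_b(s')+(1-\gamma_c)\Delta_b$; therefore, no matter which branch of the hypothesis applies at $(s',x')$, one has $C_n(s',x')\le Q_b(s')+(1-\gamma_c)\Delta_b+\max(f_n,g_n)$ uniformly in $s'$ (the $(1-\gamma_c)\Delta_b$ slack being charged only when the $f_n$-branch is forced, i.e.\ when $x'\ge Q_b(s')$). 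Plugging this in, bounding $\gamma_c c(s,\pi_b(s))+\gamma_c\mathbb{E}_{s'}[Q_b(s')]\le Q_b(s)+\gamma_c(1+\gamma_c)\Delta_b$ via the one-step comparison together with $\gamma_c Q_b(s,\pi_b(s))=Q_b(s)$, and collapsing the errors by $\gamma_c(1+\gamma_c)\Delta_b+\gamma_c(1-\gamma_c)\Delta_b=2\gamma_c\Delta_b$, I obtain $C_{n+1}(s,x)\le Q_b(s)+2\gamma_c\Delta_b+\gamma_c\max(f_n,g_n)$, which is at most $Q_b(s)+g_{n+1}$ because the two-branch recursion supplies both $2\gamma_c\Delta_b+\gamma_c g_n\le g_{n+1}$ and $2\gamma_c\Delta_b+\gamma_c f_n\le 3\gamma_c\Delta_b+\gamma_c f_n\le g_{n+1}$. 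This closes the induction.

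\textbf{Main obstacle.} I do not expect the algebra to be the difficulty — each step contributes at most two $\Delta_b$-terms that telescope cleanly into the prescribed $2\gamma_c\Delta_b$ increments. The care-demanding point is the placement of the successor augmented states relative to the threshold $Q_b(s')$: in the regime $x\ge Q_b(s)$ this is the purely qualitative fact that the shield's constraint $y_a\ge Q_b(s,a)$ forces $x'\ge Q_b(s')$, whereas in the regime $x< Q_b(s)$ it must be made quantitative as $x'< Q_b(s')+(1-\gamma_c)\Delta_b$, which is exactly where non-negativity of costs (through $Q_b^*\ge 0$, hence $Q_b(s,\pi_b(s))\ge-\Delta_b$) and the precise definition of $\overline{P}$ enter. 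Getting these membership statements right — and then verifying that the resulting error increments respect both recursions simultaneously — is the crux.
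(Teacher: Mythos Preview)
Your proof is correct and follows essentially the same inductive route as the paper: Bellman recursion for $C_n$, case split on whether $x\ge Q_b(s)$, exploitation of the $Q_b$-shielded structure to locate successors relative to the threshold $Q_b(s')$, and the Bellman optimality equation for $Q_b^*$ to control the $\Delta_b$-errors. The only differences are cosmetic --- you read the statement literally and take $C_1=\gamma_c D_0=0$ as a trivial base (the paper instead sets $C_0=0$ and computes $C_1$ explicitly, a harmless indexing discrepancy), and you package the key estimate as a single one-step comparison, yielding the slightly sharper successor bound $x'<Q_b(s')+(1-\gamma_c)\Delta_b$ in the fallback regime versus the paper's $x'<Q_b(s')+\Delta_b$.
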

\begin{proof}

    For $\bar \pi$ a $Q_b$-shielded policy of the augmented CMDP, we define the sequence  $C_n(s,x)$ as (we omit the dependency on $\bar \pi$)
    \[
    \begin{aligned}
    C_n(s,x) =& (1-\lambda) \sum_{a \in A(s)} P_{\bar \pi}((a,y_a)|(s,x)) \sum_{s'} P(s',a) \gamma_c( c(s,a)+ C_{n-1}(s',x')) \\
    &+ \lambda \sum_{s'} P(s'|\pi_b(s)) \gamma_c ( c(s,\pi_b(s)) + C_{n-1}(s',Q_b(s') ),\quad C_0(s,x) = 0,
    \end{aligned}
    \]
    where $x'=y_a-Q_b(s,a)+Q_b(s')$. Note that $C_n(s,x)=\gamma_c D_{n-1}(s,x)$, where $D_{n-1}(s,x)$ is the average discounted expected cost of the policy $\bar \pi$. We used the fact that $P((s',x')|(a,y_a)) = P(s'|a)$ when $x'=y_a-Q_b(s,a)+Q_b(s')$. Finally, throughout this proof, when there is no ambiguity, we will denote $P_a = P((a,y_a)|(s,x))$. 

    We will show by induction that
    \[
    \left\{
    \begin{aligned}
    &C_{n}(s,x) \leq x + f_{n}(\Delta_b),~x\geq Q_b(s),\\
    &C_{n}(s,x) \leq Q_b(s) + g_{n}(\Delta_b),~x<Q_b(s).
    \end{aligned}
    \right.
    \]

    \textbf{Base case, $n=1$, $x\geq Q_b(s)$:}

    In this case, we have by definition that 
    \[
    \bar \pi(s,x) = (1-\lambda) \sum_{a\in A(s)} P_a (a,y_a) + \lambda (\pi_b(s),Q_b(s,\pi_b(s))),
    \]
    where 
    \[
    x \geq \gamma_c (1-\lambda) \sum_a P_a y_a + \gamma_c \lambda Q_b(s,\pi_b(s)).
    \]
    The cost function $C_1(s,x)$ satisfies by definition
    \[
    \begin{aligned}
    C_1(s,x) =& (1-\lambda) \sum_{a} P_a \sum_{s'} P(s'|a) \gamma_c (c(s,a)+C_0(s',x')) \\&+ \lambda \gamma_c \sum_{s'} P(s'|\pi_b(s)) (c(s,\pi_b(s))+C_0(s',y') ),
    \end{aligned}
    \]
    so, using $C_0=0$,
    \[
    C_1(s,x) = (1-\lambda) \sum_{a} P_a \sum_{s'} P(s'|a) \gamma_c c(s,a) + \lambda \gamma_c \sum_{s'} P(s'|\pi_b(s)) c(s,\pi_b(s)).
    \]
    Now, any policy taking action $a$ has a cost at least equal to $\sum_{s'} P(s'|a) c(s,a)$, so $Q_b^*(s,a)\geq \sum_{s'} P(s'|a) c(s,a)$. Similarly, $Q_b^*(s,\pi_b(s)) \geq \sum_{s'} P(s'|\pi_b(s)) c(s,\pi_b(s))$. Using $||Q_b^*-Q_b||_{\infty} \leq \Delta_b$ gives
    \[
    \sum_{s'} P(s',a) c(s,a) \leq Q_b(s,a) + \Delta_b,\quad \sum_{s'} P(s'|\pi_b(s)) c(s,\pi_b(s)) \leq Q_b(s,\pi_b(s)) + \Delta_b.
    \]
    Hence, we get 
    \[
    C_1(s,x) \leq (1-\lambda) \gamma_c \sum_a P_a Q_b(s,a) + \lambda \gamma_c Q_b(s,\pi_b(s)) + \gamma_c \Delta_b.
    \]
    Since for any $a\in A(s)$, $y_a\geq Q_b(s,a)$, we obtain
    \[
    C_1(s,x) \leq (1-\lambda) \gamma_c \sum_a P_a y_a + \lambda \gamma_c Q_b(s,\pi_b(s)) + \gamma_c \Delta_b. 
    \]
    Using the weight condition, we finally obtain
    \[
    C_1(s,x) \leq x + \gamma_c \Delta_b.
    \]

    \textbf{Base case, $n=1$, $x<Q_b(s)$:}

    In this situation, we have by the definition of a $Q_b$-shielded policy that
    \[
    \bar \pi (s,x) = (\pi_b(s),z),\quad \gamma_c z \leq x.
    \]
    In this case, the cost function $C_1$ satisfies
    \[
    C_1(s,x) = \sum_{s'} P(s'|\pi_b(s)) \gamma_c ( C_0(s',Q_b(s') )+c(s,\pi_b(s))) = \sum_{s'} P(s'|\pi_b(s)) \gamma_c c(s,\pi_b(s)) .
    \]
    Any policy taking $s'$ with probability $1$ would have a cost of at least $\sum_{s'} P(s'|\pi_b(s)) c(s,\pi_b(s)) $, which means 
    \[
    Q_b^*(s,\pi_b(s)) \geq \sum_{s'} P(s'|\pi_b(s)) c(s,\pi_b(s)), 
    \]
    so
    \[
    Q_b(s,\pi_b(s)) \geq \sum_{s'} P(s'|\pi_b(s)) c(s,\pi_b(s)) -\Delta_b. 
    \]
    Consequently, 
    \[
    C_1(s,x) \leq \gamma_c (Q_b(s,\pi_b(s)) + \Delta_b) = Q_b(s) + \gamma_c \Delta_b. 
    \]
    
    \textbf{ Induction step, $x\geq Q_b(s)$: }

    With the same notation, $\bar \pi$ now satisfies
    \[
    \bar \pi(s,x) = (1-\lambda) \sum_a P_a (a,y_a) + \lambda (\pi_b(s),Q_b(s,\pi_b(s))),
    \]
    where 
    \[
    x \geq \gamma_c (1-\lambda) \sum_a P_a y_a + \gamma_c \lambda Q_b(s,\pi_b(s)).
    \]
    By definition, the cost functional $C_n$ satisfies
    \[
    \begin{aligned}
    C_n(s,x)&= (1-\lambda) \sum_a P_a \sum_{s'} P(s'|a) \gamma_c (c(s,a) + C_{n-1}(s',x')) \\
    &+ \lambda \sum_{s'} P(s'|\pi_b(s)) \gamma_c (c(s,\pi_b(s))+C_{n-1}(s',Q_b(s'))),
    \end{aligned}
    \]
    where $x' = y_a - Q_b(s,a) + Q_b(s')$. Since for all $a\in A(s)$, $y_a \geq Q_b(s,a)$, then $x' \geq Q_b(s')$, so we can apply the induction hypothesis and find
    \[
    C_{n-1}(s',x') \leq x' + f_{n-1}(\Delta_b),\quad C_{n-1}(s',Q_b(s')) \leq Q_b(s') + f_{n-1}(\Delta_b).
    \]
    Hence, $C_{n}(s,x)$ satisfies
    \[
    C_n(s,x) \leq (1-\lambda) A + \lambda B + \gamma_c f_{n-1}(\Delta_b),
    \]
    where 
    \[
    A = \sum_a P_a \sum_{s'} P(s',a) \gamma_c (c(s,a) + x' ),
    \]
    \[
    B = \sum_{s'} P(s'|\pi_b(s)) \gamma_c (c(s,\pi_b(s)) + Q_b(s')).
    \]
    Now, we have 
    \[
    Q_b^*(s,a) = c(s,a) + \sum_{s'} P(s'|a) Q_b^*(s') \geq c(s,a) + \sum_{s'} P(s'|a) (Q_b(s') - \Delta_b),
    \]
    so $Q_b(s,a) \geq c(s,a) + \sum_a P_a (s'|a) - 2 \Delta_b$. Using additionally $x'=y_a - Q_b(s,a) + Q_b(s')$, we obtain
    \[
    \begin{aligned}
    A &\leq \sum_a P_a \sum_{s'} P(s'|a) \gamma_c (c(s,a) + y_a - Q_b(s,a) + Q_b(s')) \\
    &\leq \sum_a P_a \sum_{s'} P(s'|a) \gamma_c Q_b(s') - \sum_{a} P_a \sum_{s'} P(s'|a) \gamma_c Q_b(s') \\
    &+ \sum_{a}P_a \sum_{s'} P(s',a) \gamma_c y_a +  2 \gamma_c \Delta_b \\
    & \leq \sum_a \gamma_c P_a y_a + 2 \gamma_c \Delta_b.
    \end{aligned}
    \]
    We reason in a similar way for the second term $B$. We have 
    \[
    Q_b^*(s,\pi_b(s)) \geq c(s,\pi_b(s)) + \sum_{s'} P(s'|\pi_b(s)) (Q_b(s')-\Delta_b),
    \]
    so $Q_b(s,\pi_b(s)) \geq c(s,\pi_b(s))+  \sum_{s'} P(s'|\pi_b(s)) Q_b(s')- 2 \Delta_b$. Using this inequality gives for $B$
    \[
    B \leq \sum_{s'} P(s'|\pi_b(s)) \gamma_c (c(s,\pi_b(s))+ Q_b(s')) \leq \gamma_c Q_b(s,\pi_b(s)) + 2 \gamma_c \Delta_b.
    \]
    Finally, we get 
    \[
    C_n(s,x) \leq (1-\lambda) \sum_a \gamma_c P_a y_a + (1-\lambda) 2 \gamma_c \Delta_b + \lambda \gamma_c Q_b(s,\pi_b(s)) + 2 \lambda \gamma_c \Delta_b + \gamma_c f_{n-1}(\Delta_b).
    \]
    Using the shielding condition on the weights that $\bar \pi$ satisfies, we finally obtain
    \[
    C_n(s,x) \leq x + 2 \gamma_c \Delta_b + \gamma_c f_{n-1}(\Delta_b) = x + f_{n}(\Delta_b).
    \]

    \textbf{Induction step, $x<Q_b(s)$:}
    In this case, $\bar \pi$ is given by
    \[
    \bar \pi (s,x) = (\pi_b(s),z),\quad \gamma_c z \leq x.
    \]
    This means that the cost functional satisfies 
    \[
    C_n(s,x) = \sum_{s'} P(s'|\pi_b(s)) \gamma_c ( c(s,\pi_b(s)) + C_{n-1}(s',x')),\quad x' = z - Q_b(s,\pi_b(s))+Q_b(s').
    \]
    Since $z\leq x / \gamma_c$, and $x < Q_b(s)$, we obtain $z < Q_b(s)/\gamma_c$. As a result,
    \[
    x' < Q_b(s)/\gamma_c - Q_b(s,\pi_b(s)) + Q_b(s').
    \]
    Now, $Q_b(s,\pi_b(s)) \geq Q_b^*(s,\pi_b(s)) - \Delta_b$, and $Q_b^*(s) \leq \gamma_c Q_b^*(s,\pi_b(s))$ as it is defined as the minimum over the actions, so 
    \[
    Q_b(s,\pi_b(s)) \geq Q_b^*(s)/\gamma_c - \Delta_b.
    \]
    Overall, 
    \[
    x' < Q_b(s') + \Delta_b.
    \]
    Two situations are possible here. Either $x' < Q_b(s')$, or $x' \geq Q_b(s')$. We start with $x'< Q_b(s')$.

    In this case, we obtain
    \[
    C_{n-1}(s',x') \leq Q_b(s') + f_{n-1}(\Delta_b),
    \]
    so we get for $C_n(s,x)$
    \[
    C_n(s,x) \leq \sum_{s'} P(s'|\pi_b(s)) \gamma_c (c(s,\pi_b(s)) + Q_b(s')) + \gamma_c f_{n-1}(\Delta_b).
    \]
    Now, $Q_b(s') \leq Q_b^*(s') + \Delta_b$, and $\sum_{s'} P(s',\pi_b(s)) \gamma_c (c(s,\pi_b(s)) + Q_b^*(s')) = \gamma_c Q_b^*(s,\pi_b(s))$, so
    \[
    C_n(s,x) \leq \gamma_c Q_b^*(s,\pi_b(s)) +\gamma_c \Delta_b + \gamma_c f_{n-1}(\Delta_b).
    \]
    Finally, using $Q_b^*(s,\pi_b(s)) \leq Q_b(s,\pi_b(s)) + \Delta_b$, 
    \[
    C_n(s,x) \leq \gamma_c Q_b(s,\pi_b(s)) + 2 \gamma_c +\gamma_c f_{n-1}(\Delta_b) =\leq Q_b(s) + 2 \gamma_c + \gamma_c f_{n-1}(\Delta_b) \leq Q_b(s) + f_{n}(\Delta_b).
    \]

    Now, in the other situation, we have $x' \geq Q_b(s')$. But we also have $x' \leq Q_b(s') + \Delta_b$. Hence, we can write
    \[
    C_{n-1}(s',x') \leq x' + f_{n-1}(\Delta_b) \leq Q_b(s') + \Delta_b + f_{n-1}(\Delta_b).
    \]
    So 
    \[
    \begin{aligned}
    C_n(s,x) \leq& \sum_{s'} P(s'|\pi_b(s)) \gamma_c (c(s,\pi_b(s)) + x' + f_{n-1}(\Delta_b)) \\
    &\leq \sum_{s'} P(s'|\pi_b(s)) \gamma_c (c(s,\pi_b(s)) 
    +  Q_b(s') +\Delta_b + f_{n-1}(\Delta_b))\\
    &\leq \sum_{s'} P(s'|\pi_b(s)) \gamma_c (c(s,\pi_b(s)) + Q_b^*(s') + 2 \Delta_b + f_{n-1}(\Delta_b)) \\
    &\leq \gamma_c Q_b^*(s,\pi_b(s)) + 2 \gamma_c \Delta_b + \gamma_c f_{n-1}(\Delta_b) \\
    &\leq \gamma_c Q_b(s,\pi_b(s)) + 3 \gamma_c \Delta_b + \gamma_c f_{n-1}(\Delta_b) = Q_b(s) + 3 \gamma_c \Delta_b + \gamma_c f_{n-1}(\Delta_b).
    \end{aligned}
    \]
    \end{proof}
    We can now go on with the proof of theorem \ref{thm:safe_nondet}.
    \begin{proof}
        Assume the assumptions of the theorem are satisfied. We use lemma \ref{lemma:induction} and obtain
        \[
    \left\{
    \begin{aligned}
        &C_{n}(s,x) \leq x + f_{n}(\Delta_b),~\text{when }x\geq Q_b(s), \\
        &C_{n}(s,x) \leq Q_b(s) + g_n(\Delta_b),~\text{when }x < Q_b(s)
    \end{aligned}
    \right.
    \]
    as long as the sequences $f_n(\Delta_b)$ and $g_n(\Delta_b)$ satisfy $f_1(\Delta_b) \geq \gamma_c \Delta_b$, $g_1(\Delta_b) \geq \gamma_c \Delta_b$ and for every $n\geq 2$,
    \[
    \left\{
        \begin{aligned}
            &f_n(\Delta_b) \geq 2 \gamma_c \Delta_b + \gamma_c f_{n-1},\\
            &g_n(\Delta_b) \geq \max( 2\gamma_c \Delta_b + \gamma_c g_{n-1}(\Delta_b), 3 \gamma_c \Delta_b + \gamma_c f_{n-1}(\Delta_b)  ). 
        \end{aligned}
    \right.
    \]
    We can choose $f_n(\Delta_b) = \sum_{k=1}^n 2 \Delta_b \gamma_c^k$ and $g_n(\Delta_b) = \sum_{k=1}^n 3 \Delta_b \gamma_c^k$, and we obtain for every $n$
        \[
    \left\{
    \begin{aligned}
        &C_{n}(s,x) \leq x + \sum_{k=1}^n 2 \Delta_b \gamma_c^k,~\text{when }x\geq Q_b(s), \\
        &C_{n}(s,x) \leq Q_b(s) + \sum_{k=1}^n 3 \Delta_b \gamma_c^k,~\text{when }x < Q_b(s)
    \end{aligned}
    \right.
    \]
    Subsequently, we obtain for the finite expected discounted cost 

        \[
    \left\{
    \begin{aligned}
        &D_{n-1}(s,x) \leq \frac{x}{\gamma_c} + \sum_{k=1}^n 2 \Delta_b \gamma_c^{k-1}\leq \frac{x}{\gamma_c} + 2 \Delta_b \frac{1}{1-\gamma_c},~\text{when }x\geq Q_b(s), \\
        &D_{n-1}(s,x) \leq \frac{Q_b(s)}{\gamma_c} + \sum_{k=1}^n 3 \Delta_b \gamma_c^{k-1}\leq \frac{Q_b(s)}{\gamma_c} + 3 \Delta_b \frac{1}{1-\gamma_c},~\text{when }x < Q_b(s).
    \end{aligned}
    \right.
    \]
    Since $D_n$ is Cauchy, we can just take the limit and obtain 
        \[
    \left\{
    \begin{aligned}
        &\mathcal{C}(s,x) \leq \frac{x}{\gamma_c} +  \frac{2 \Delta_b}{1-\gamma_c},~\text{when }x\geq Q_b(s), \\
        &\mathcal{C}(s,x) \leq \frac{Q_b(s)}{\gamma_c} +  \frac{3 \Delta_b}{1-\gamma_c},~\text{when }x < Q_b(s).
    \end{aligned}
    \right.
    \]
        
    \end{proof}
    
    \subsubsection{Proof of theorem \ref{thm:optimality}}

    We recall Theorem \ref{thm:optimality}.

    \textbf{Theorem 3.} (Optimality of the shielded policies)
    \emph{
    Let $\mathcal{M}$ be a deterministic CMDP with safety threshold $d$ and initial state $s_i$,
    $Q_b$ be a Q-value,
    $\Delta_b=||Q_b-Q_b^*||_{L^\infty(\mathcal{S}\mathcal{A})}$. Then, if we let $\Pi$ the set of all policies of $\mathcal{M}$, $\overline \Pi_{sh}$ be the set of shielded policies of $\overline{\mathcal{M}}$, and $\overline \Pi^f$ be the set of valued flipping policies, we have}
    \[
        \max_{\pi \in \Pi,~\mathcal{C}(\pi)\leq d} \mathcal{R}(\pi) \leq \max_{\bar \pi \in \Xi(\overline \Pi^f),~x_0\leq \gamma_c d+\rr} \mathcal{R}^{(s_i,x_0)}(\bar \pi) \leq \max_{\pi \in \overline \Pi_{sh},~x_0\leq \gamma_c d+\rr} \mathcal{R}^{(s_i,x_0)}(\bar \pi),
    \]
    \emph{for} $\rr=  \frac{2\Delta_b\gamma_c}{1-\gamma_c}$.

    Before proving Theorem \ref{thm:optimality}, we first need to introduce the following lemma.
    
    \begin{lemma}\label{lem:polytyope}
        For any deterministic MDP $\mathcal M$ and safety threshold $d$, there exists a $Q_b^\star$-shielded flipping memoryless policy $\overline{\pi}^*$ of $\overline{\mathcal M}^{Q_b^\star}$ such that 
        \[
        \mathcal{R}(\bar \pi^*) = \max_{\pi \in \Pi,~\mathcal{C}(\pi) \leq d} \mathcal{R}(\pi), 
        \]
        and starting at initial state $(s_0,d)$. Subsequently, $\bar \pi^*$ also satisfies 
        \[
        \mathcal{C}(\bar \pi^*) \leq d.
        \]
        
    \end{lemma}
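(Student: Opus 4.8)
The plan is to use the polytope structure of the deterministic CMDP to peel off an optimal safe policy of a very rigid shape, and then lift it into $\overline{\mathcal M}^{Q_b^\star}$ by letting the risk coordinate carry the discounted cost-to-go of each branch of that policy. Write $\mathrm{OPT}=\max_{\pi\in\Pi,~\mathcal C(\pi)\le d}\mathcal R(\pi)$; if the CRLP is infeasible there is nothing to prove, so assume $\min_a Q_b^\star(s_0,a)\le d$, whence $Q_b^\star(s_0)=\gamma_c\min_a Q_b^\star(s_0,a)\le d$. \textbf{Step 1 (polytope reduction).} The set of infinite paths of $\mathcal M$ from $s_0$ is compact in the product topology (a closed subset of $(S\times A)^{\mathbb N}$, compact since $S,A$ are finite) and $\zeta\mapsto(\mathcal R(\zeta),\mathcal C(\zeta))$ is continuous, so $V=\{(\mathcal R(\zeta),\mathcal C(\zeta)):\zeta\text{ a path from }s_0\}$ is compact and $K=\conv{V}\subseteq\mathbb R^2$ is compact and convex. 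Every policy satisfies $(\mathcal R(\pi),\mathcal C(\pi))=\mathbb E_{\zeta\sim\pi}(\mathcal R(\zeta),\mathcal C(\zeta))\in K$, and conversely every point of $K$ is realised by randomising at $s_0$ over finitely many deterministic memoryful policies, so the achievable set of $(\mathcal R,\mathcal C)$ pairs is exactly $K$. Maximising the linear functional $\mathcal R$ over the compact convex slice $K\cap\{\mathcal C\le d\}$ is attained at an extreme point of the slice, which in $\mathbb R^2$ is a convex combination of at most two extreme points of $K$, i.e.\ of at most two elements of $V$; hence there are paths $\zeta_1,\zeta_2$ from $s_0$ and $\alpha\in[0,1]$ with $\alpha\mathcal R(\zeta_1)+(1-\alpha)\mathcal R(\zeta_2)=\mathrm{OPT}$ and $\alpha\mathcal C(\zeta_1)+(1-\alpha)\mathcal C(\zeta_2)\le d$.

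\textbf{Step 2 (lifting and conclusion).} Write $\zeta_i=s_0\,a_0^i\,s_1^i\,a_1^i\cdots$ and let $R_t^i=\sum_{k\ge t}\gamma_c^{\,k-t}c(s_k^i,a_k^i)\in[0,C_{\max}]$ be the cost-to-go of $\zeta_i$ from time $t$. In the deterministic case $Q_b^\star$ obeys the Bellman identity $Q_b^\star(s,a)=c(s,a)+Q_b^\star(\mathrm{succ}(s,a))$, and since the tail of $\zeta_i$ from $s_t^i$ is a feasible witness, $R_t^i\ge Q_b^\star(s_t^i,a_t^i)\ge Q_b^\star(s_t^i)/\gamma_c$. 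I would build a memoryless valued policy $\bar\pi^\star$ that follows the $\alpha$-mixture along the common prefix of $\zeta_1,\zeta_2$ with valuation $\alpha R_t^1+(1-\alpha)R_t^2$ on the shared action, flips with weights $\alpha,1-\alpha$ at the augmented state where the branches first diverge, thereafter follows branch $i$ with valuation $y_t^i:=R_t^i$, and on every unvisited augmented state falls back to $(\pi_b(s),z)$ with $z=\max(x,x/\gamma_c)$. A short induction using only the two facts above shows that the augmented states along branch $i$ are exactly $(s_t^i,\gamma_c R_t^i)$ with $\gamma_c R_t^i\in[0,C_{\max}]$ and $\gamma_c R_t^i\ge Q_b^\star(s_t^i)$, that each prescribed valuation is $\ge Q_b^\star(s,a)$, and that the coherence inequality of the $Q_b$-shielded condition holds — in fact with equality, $x=\gamma_c\sum_a P_a y_a$ — the one at the start reducing to $d\ge\gamma_c(\alpha\mathcal C(\zeta_1)+(1-\alpha)\mathcal C(\zeta_2))$, which follows from Step~1 since $\gamma_c\le1$. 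Thus $\bar\pi^\star\in\overline{\Pi}_{\mathrm{val}}$ is a valued flipping policy that is $Q_b^\star$-shielded and starts at $(s_0,d)$. Its backward projection $T(\bar\pi^\star)$ of Definition~\ref{def:projection} runs $\zeta_1$ with probability $\alpha$ and $\zeta_2$ with probability $1-\alpha$, so Theorem~\ref{thm:projection} gives $\mathcal R_{\overline{\mathcal M}}(\bar\pi^\star)=\mathcal R_{\mathcal M}(T(\bar\pi^\star))=\mathrm{OPT}$; the bound $\mathcal C(\bar\pi^\star)\le d$ follows either from the same projection identity or, more directly, from Theorem~\ref{thm:safety} with $\Delta_b=0$ (as $Q_b=Q_b^\star$) and $x_0=d\ge Q_b^\star(s_0)/\gamma_c$, which is exactly the ``subsequently'' part.

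\textbf{The hard part.} The delicate point is the well-definedness of $\bar\pi^\star$ as a genuinely \emph{memoryless} policy: two branch-trajectories, or a single branch revisiting itself, may reach the same augmented state $(\bar s,\gamma_c\bar R)$ while prescribing different moves. I would resolve this by a fixed priority rule (branch~$1$ over branch~$2$ over the backup, first visit over later ones), which makes $\bar\pi^\star$ a definite function at the cost of realising $\alpha\hat\zeta_1+(1-\alpha)\hat\zeta_2$ for possibly re-routed paths $\hat\zeta_i$. One then checks that every forced re-routing happens at an augmented state whose risk coordinate equals the cost-to-go and is unchanged by the re-routing, so $\mathcal C(\hat\zeta_i)=\mathcal C(\zeta_i)$; and that reward-to-go is preserved as well, which is precisely where maximality of $\mathrm{OPT}$ enters — a re-routed continuation with strictly larger reward-to-go, spliced into the optimal mixture (legitimate, since the overall cost is unchanged), would beat $\mathrm{OPT}$, and symmetry forces equality — so $\mathcal R(\hat\zeta_i)=\mathcal R(\zeta_i)$ and the values computed in Step~2 are unaffected. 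This book-keeping at augmented-state collisions, rather than any single inequality, is where I expect the bulk of the work to lie.
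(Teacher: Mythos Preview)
Your Step~1 is correct and elegant: projecting onto the two-dimensional $(\mathcal R,\mathcal C)$-plane and using Carath\'eodory in $\mathbb R^2$ cleanly produces two paths whose mixture attains $\mathrm{OPT}$. The paper takes a genuinely different route: instead of a global $2$D polytope, it builds a \emph{local} polytope $D(s,x,y)\subseteq\mathbb R^{A(s)}$ at every augmented state, whose extreme points are automatically two-point distributions, packages these into an auxiliary MDP $\hat{\mathcal M}$ whose actions are (valuation, vertex) pairs, and then invokes an off-the-shelf existence theorem for optimal memoryless deterministic policies in lower-continuous MDPs (Corollary~9.17.2 of Bertsekas--Shreve). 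The two correspondences ``feasible $\pi$ in $\mathcal M$ $\leftrightarrow$ memoryful $\hat\pi$ in $\hat{\mathcal M}$'' and ``memoryless deterministic $\hat\pi$ $\leftrightarrow$ $Q_b^\star$-shielded flipping $\bar\pi$'' finish the job without ever touching explicit trajectories.

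The gap in your approach is exactly where you put it, but it is deeper than your sketch suggests. Your priority rule handles collisions of the form ``branch~$1$ at time $t$ meets branch~$2$ at time $t'$'' via the reward-to-go symmetry argument, and that part is fine. What it does not handle is a branch re-entering the \emph{flipping} augmented state $(s_\tau,\gamma_c(\alpha R_\tau^1+(1-\alpha)R_\tau^2))$ itself: nothing prevents $s_t^1=s_\tau$ and $R_t^1=\alpha R_\tau^1+(1-\alpha)R_\tau^2$ for some $t>\tau$. If that happens, your memoryless policy flips again there, so the process run from $(s_0,d)$ is no longer the mixture $\alpha\zeta_1+(1-\alpha)\zeta_2$ but an infinite tree of paths. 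Your reward-preservation argument does not cover this case, because the two continuations out of the split state generally have \emph{different} reward-to-go (they only agree on cost-to-go up to the mixture), so the additional flips can change $\mathcal R$. You still get a $Q_b^\star$-shielded policy, hence $\mathcal C\le d$ and $\mathcal R\le\mathrm{OPT}$ via projection, but you lose the lower bound $\mathcal R\ge\mathrm{OPT}$. This is precisely the obstacle the paper sidesteps by appealing to the abstract existence theorem rather than constructing $\bar\pi^\star$ by hand.
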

    \begin{proof}
        Let $\mathcal M=(S,A,s_{i},P,R,C,d)$ be a deterministic CMDP with positive costs, and let $C_{max}$ be equal to $\frac{c_{max}}{1-\gamma_c}$. For any state $(s,x)$ of $\mathcal M$, and any mapping $y:A(s)\mapsto [0;C_{max}]$ such that $y(a)\geq Q_b^\star(s,a)$, we let $D(s,x,y)$ be the set of points $z\in \mathbb R^{A(s)}$ such that $x_a\geq 0$ for all $a\in A(s)$, \begin{equation} \sum_{a\in A(s)}z_a y(a)\leq \frac{x}{\gamma_c},\label{eq-hyp-1}
    \end{equation}
    and
    \begin{equation} \sum_{a\in A(s)}z_a= 1.\label{eq-hyp-2}
    \end{equation}
    For any $s,x,y$, $D(s,x,y)$ is the intersection between the convex polytope whose extreme points are all points $z^a\in \mathbb R^{A(s)}$ such that for any $a,a'\in A(s)$, $z^a_{a'}=1$ if $a=a'$ and $0$ otherwise, and the (convex) half-space defined by Equation (\ref{eq-hyp-1}). Thus, for any $s,x,y$, $D(s,x,y)$ is a convex polytope, and its extreme points are of the form $\lambda z^a+(1-\lambda) z^{a'}$ where $a,a'\in A(s)$. Let $V(s,x,y)$ be the extreme points of $D(x,s,y)$. We let $V^\star_b(s)=\min_{a\in A(s)} Q_b^\star(s,a)$, and we define $\hat{\mathcal M}$ to be the MDP with
\begin{itemize}
\item States : $\hat{S}= \left\{(s, x) \mid s \in S, x \in \left[V_b^\star(s);C_{\max}\right]\right\}$, with initial state $\overline{s}_{i}=(s_{i},d)$;
\item Actions : $\hat{A}(s,x)$ equal to the set of all $(y,v)$ where $y:A(s)\mapsto [0;C_{max}]$ and $y(a)\geq Q_b^\star(s,a)$ for any $a\in A(s)$ and $v\in V(s,x,y)$;
\item Rewards : $\hat{r}((s,x),(a,y))=r(s,a)$;
\item Transition Probability Function $\hat{P}$: for $\hat{s}=(s,x)$,  $\hat{s}'=(s',x')$ in $\hat{S}$, and $\hat{a}=(y,v)\in \hat{A}(\hat{s})$, we have 
\[\hat{P}\left( \hat{s},\hat{v},\hat{s}' \right) = 
    \begin{cases}
        0\text{ if } x'\neq y-Q^\star_b(s,a)+Q^\star_b(s') \text{ for all }a\in A(s)\\
        \sum_{a\mid x'= y-Q_b(s,a)+Q_b(s')}v_a P(s,a,s') \text{ otherwise .}
    \end{cases}
\]
\end{itemize}
First, notice that the MDP $\hat{\mathcal M}$ satisfies the hypothesis of the lower-continuous model (Definition 8.7 of \cite{BSSstochastic}), and thus admits an optimal memoryless policy $\hat{\pi}^\star$ (Corollary 9.17.2 of \cite{BSSstochastic}). Second, any memoryfull stochastic policy $\pi$ of $\mathcal M$ such that $\mathcal C(\pi)\leq d$ can be transformed into a memoryfull policy $\hat{\pi}$ of $\hat{\mathcal M}$ with the same cost and reward by letting $P_{\hat{\pi}}(s,a)=\sum_{v\in V(s,x,y)}\lambda_v (y,v)$, where $$y(a)=\mathbb E_{s_0,a_0,\ldots\sim \pi, (s_0,a_0)=(s,a)} \sum_{t} \gamma_c^t c(s_t,a_t),$$
and where the $\lambda_v$ are such that $\sum_{v\in V(s,x,y)}\lambda_v v=P_{\pi}$, $\lambda_v\in[0;1]$ and $\sum_{v\in V(s,x,y)} \lambda_v=1$. Conversely, similarly to Theorem \ref{thm:projection}, any memoryless deterministic policy of $\hat{\mathcal M}$ can be backwards projected to a memoryfull stochastic policy of $\mathcal M$ with the same cost and reward. Furthermore, by definition of $\hat{\mathcal M}$, any policy $\hat{\pi}$ of can be seen as a $Q_b^\star$-shielded policy of $\overline{\mathcal M}^{Q_b^\star}$, and we thus have by Theorem \ref{thm:safety} that $\mathcal C(\hat\pi)\leq d$. The lemma follows.
    \end{proof}

    \textbf{We now go on and provide a proof of Theorem \ref{thm:optimality}.}

    Using Lemma \ref{lem:polytyope}, we have the existence of a $Q_b^*$ policy $\bar \pi^*$ of the Risk-Augmented MDP $\overline{\mathcal{M}}^{Q_B^*}$, satisfying
    \[
    \left\{
    \begin{aligned}
        &\mathcal{R}(\bar \pi^*)= \max_{\pi \in \Pi,~\mathcal{C}(\pi)\leq d} \mathcal{R} (\pi),\\
        &x_0 = d,\\
        &\mathcal{C}(\bar \pi^*) \leq d.
    \end{aligned}
    \right.
    \]
    Moreover, $\bar \pi^*$ is $Q_b^*$-shielded, valued, and flipping, so we can write for any $(s,x)\in \overline{\mathcal{S}}$ such that for $x\geq Q_b^*(s)$,
    \[
    \bar \pi^*(s,x) = P_1 (a_1,y_1) + P_2 (a_2,y_2),
    \]
    where the risks satisfy
    \[
    \left\{
        \begin{aligned}
            &x = \gamma_c (P_1 y_1 + P_2 y_2),\\ 
            &y_1\geq Q_b^*(s,a_1),~y_2 \geq Q_b^*(s,a_2).
        \end{aligned}
    \right.
    \]
    Note that $\bar \pi^*$ is not defined yet on the set $\{(s,x),~x<Q_b^*(s) \}$. We show now that it does not matter, since this set can not be reached from the initial state.

    Define $D = \{ (s,x),~x \geq Q_b^*(s) \}$. Call $\overline{\mathcal{S}}_{\bar \pi^*,n}$ the set of states reached following $\bar \pi^*$ in $n$ steps with a non-zero probability. We have, $\overline{\mathcal{S}}_{\bar \pi^*,0} = \{ (s_0,d)\}\subseteq D$. By induction, assume now that $\overline{\mathcal{S}}_{\bar \pi^*,n}  \subseteq D$. For any $(s,x) \in D$, 
    \[
    \bar\pi^*(s,x) = P_1 (a_1,y_1) + P_2 (a_2,y_2),\quad y_j \geq Q_b^*(s,a_j).
    \]
    Hence, the state $\bar \pi^*$ can reach in one more step, using the definition of the augmented MDP, have risks satisfying
    \[
    x_j = y_j - Q_b^*(s,a_j) + Q_b^*(s_j) \geq Q_b^*(s_j),
    \]
    which means $\overline{\mathcal{S}}_{\bar \pi^*,n+1}  \subseteq D$.

    The challenge here is that $\bar \pi^*$ is not a $Q_b$-shielded policy, as $Q_b\neq Q_b^*$. We will now construct a policy $\bar \pi$, that will be $Q_b$-shielded, have a slightly increased cost, and yield the same reward.

    \textbf{Definition of $\bar \pi$.}

    The initial state of $\bar \pi$ is $(s_0,d+\rr)$. 
    
    We first define $\bar \pi$ on the set $\tilde D = \{ (s,x),~x \geq Q_b^*(s)+\rr \}$. We show later that it is enough. For $(s,x)\in \tilde D$, we define 
    \[
        \bar \pi(s,x) = P_1 (a_1,z_1) + P_2(a_2,z_2),\quad z_j = y_j + \rr + Q_b(s,a_j) - Q_b(s_j)-c(s,a_j),
    \]
    where $P_1$, $P_2$, $a_1$, $a_2$, $y_1$, $y_2$ are defined using the image of the optimal policy $\bar \pi_*$ at $(s,x-\rr)$ as in
    \[
        \bar \pi_*(s,x-\rr) = P_1 (a_1,y_1) + P_2 (a_2,y_2),
    \]
    and $s_1$ (resp. $s_2$) is the state reached when taking $a_1$ (resp. $a_2$) in $\mathcal{M}$, since $\mathcal{M}$ is assumed to be deterministic. Note that for $x\geq Q_b^*(s)+\rr$, we have $x-\rr\geq Q_b^*(s)$, so $\bar \pi_*$ is well defined.

   We first remark that the initial state $(s_0,\rr)$ is in $\tilde D$. Moreover, consider $\overline{S}_{n,\bar \pi}$, the set of states reachable by $\bar \pi$ in $n$ steps. We already have $\overline{S}_{0,\bar \pi} \subseteq \tilde D$. Assume now that $\overline{S}_{n-1,\bar \pi} \subseteq \tilde D$. We consider a path $\zeta = (s_0\dots s_n)$ of length $n$, and look at the last transition. Since $s_{n-1}\in \tilde D$, it is of the form $s_{n-1}=(s,x)$ $\bar \pi$ with $x \geq Q_b^*(s) + \rr$. $\bar \pi$ is defined as 
    \[
    \bar \pi(s,x) = P_1 (a_1,z_1) + P_2(a_2,z_2),\quad z_j = y_j + \rr + Q_b(s,a_j) - Q_b(s_j)-c(s,a_j), 
    \]
    where 
    \[
        \bar \pi_*(s,x-\rr) = P_1 (a_1,y_1) + P_2 (a_2,y_2).
    \]
    Using the definition of the augmented MDP for the Q-value $Q_b$, the state reached after taking the action $(a_j,z_j)$ is $(s_j,x_j)$ with
    \[
    x_j = z_j - Q_b(s,a_j) + Q_b(s_j) = y_j -c(s,a_j) + \rr.
    \]
    Since $y_j \geq Q_b^*(s,a_j) = c_j + Q_b^*(s_j)$ as $\bar \pi_*$ is $Q_b^*$-shielded, we have 
    \[
    x_j \geq Q_b(s_j) + \rr,
    \]
    so $s_n \in \tilde D$.

    \textbf{$\bar \pi$ is $Q_b$-shielded.}

        Now, we check if the policy is $Q_b$-shielded. Using the definition in the case where $x\geq Q_b(s)$, $\bar \pi$ is $Q_b$-shielded if and only if
    \begin{equation}\label{isqb}
    x \geq \gamma_c ( P_1 z_1 + P_2 z_2 ),\quad z_j = y_j + \rr + Q_b(s,a_j) - Q_b(s_j)-c(s,a_j).
    \end{equation}
    We use the fact that $\bar \pi^*$ is $Q_b^*$ shielded, so
    \[
    x-\rr \geq \gamma_c (P_1 y_1 + P_2 y_2).
    \]
    Plugging this in \eqref{isqb} gives
    \[
    \begin{aligned}
    \gamma_c ( P_1 y_1 + P_2 y_2  ) &+ \gamma_c ( P_1(-Q_b(s_1)+Q_b(s,a_1)-c(s,a_1)) \\
    &+ P_2 (-Q_b(s_2)+Q_b(s,a_2)-c(s,a_2)) ) +\gamma_c \rr \\
    \leq x-\rr +   \gamma_c &( P_1(-Q_b(s_1)+Q_b(s,a_1)-c(s,a_1)) \\
    &+ P_2 (-Q_b(s_2)+Q_b(s,a_2)-c(s,a_2)) ) +\gamma_c \rr \\
    \leq x + (\gamma_c-1) &\rr + 2 \gamma_c\Delta_b.
    \end{aligned}
    \]
    Here, we used that 
    \[
    \left\{
    \begin{aligned}
        &Q_b^*(s,a_j) = Q_b(s_j) + c(s,a_j),\\
        &|Q_b(s,a_j) - Q_b^*(s,a_j)| \leq \Delta_b,\\
        &|Q_b(s_j)-Q_b^*(s_j)| \leq \Delta_b,
    \end{aligned}
    \right.
    \]
    to obtain that $|-Q_b(s_2)+Q_b(s,a_2)-c(s,a_2))| \leq 2 \Delta_b$.
    
    Hence, $\bar \pi$ is $Q_b$-shielded when 
    \[
    2\gamma_c\Delta_b \leq (1-\gamma_c) \rr,
    \]
    so when $\rr \geq \frac{2\Delta_b\gamma_c}{1-\gamma_c}$.

    We also need to show that for everytime the policy $\bar \pi$ takes an action $(a,z)$, then $z \geq Q_b(s,a)$. We consider $(s,x)\in \tilde D$ and denote 
    \[
    \bar \pi(s,x) = P_1 (a_1,z_1) + P_2 (a_2,z_2).
    \]
    Since 
    \[
    z_j = y_j + Q_b(s_j) - Q_b(s,a_j) + c(s,a_j) + \rr,
    \]
    and 
    \[
    y_j \geq Q_b^*(s,a_j)
    \]
    since $\bar \pi_*$ is $Q_b^*$-shielded, we get 
    \[
        z_j \geq Q_b^*(s,a_j) - Q_b(s,a_j) + Q_b(s_j) + c(s,a_j) + \rr \geq Q_b(s_j) + c(s,a_j) -\Delta_b +\rr.
    \]
    Since $Q_b^*(s_j) + c(s,a_j) = Q_b^*(s,a_j)$ and $|Q_b(s_j)-Q_b^*(s_j)| \leq \Delta_b$, we finally obtain
    \[
    z_j \geq Q_b(s,a_j) - 2 \Delta_b + \rr.
    \]
    This is satisfied as long as $\frac{2\gamma_c}{1-\gamma_c} \geq 2$.
    
    \textbf{The rewards of $\bar \pi$ and $\bar \pi_*$ are equal.}

    We now define $\mathcal M_1$ and $\mathcal M_2$ the two Markov Chains as 
    \[
    \left\{
    \begin{aligned}
        &\mathcal{M}_1 = \overline{\mathcal{M}}^{Q_b^*}_{\bar \pi_*},\text{ the Markov Chained induced by $\bar \pi_*$ on } \overline{\mathcal{M}}^{Q_b^*},\\
        &\mathcal{M}_2 = \overline{\mathcal{M}}^{Q_b}_{\bar \pi},\text{ the Markov Chained induced by $\bar \pi$ on } \overline{\mathcal{M}}^{Q_b}.\\
    \end{aligned}
    \right.
    \]

    Since $\mathcal{M}$ is deterministic, $\overline{\mathcal{M}}^{Q_b^*}$ and $\overline{\mathcal{M}}^{Q_b}$ are also deterministic. $\mathcal{M}_1$ and $\mathcal{M}_2$ however are not deterministic, since the policies $\bar \pi^*$ and $\bar \pi$ are flipping. 

    \textbf{The Markov Chain $\mathcal{M}_1$:} Has one initial state, $(x_0,d)$. For any $(s,x)$ reachable by $\bar \pi_*$, $x \geq Q_b^*(s)$ so $\bar \pi_*$ is defined as 
    \[
    \bar \pi_* (s,x) = P_1 (a_1,y_1) + P_2 (a_2,y_2),
    \]
    for some $P_1$, $P_2$, $a_1$, $a_2$, $y_1$ and $y_2$. Call $s_1$ (resp. $s_2$) the state reached when taking $a_1$ (resp. $a_2$) in $\mathcal{M}$ from $s$. Then,
    \[
    P_{\bar \pi_*}((s_j,y_j-c_j)|(s,x)) = P_j,~c_j = c(s,a_j),
    \]
    and $P_{\bar \pi_*}(\bar z|(s,x)) =0$ for any other $\bar z$. Hence, the Markov Chain $\mathcal{M}_1$ has two transitions starting from $(s,x)$:
    \begin{itemize}
        \item $(s,x) \rightarrow (s_1,y_1-c_1)$ with probability $P_1$,
        \item $(s,x) \rightarrow (s_2,y_2-c_2)$ with probability $P_2$.
    \end{itemize}    

    \textbf{The Markov Chain $\mathcal{M}_2$:} Has one initial state, $(x_0,d+D)$. For any $(s,x)$ reachable by $\bar \pi_*$, $x \geq Q_b^*(s)$ so $\bar \pi_*$ is defined as 
    \[
    \bar \pi (s,x+D) = P_1 (a_1,z_1) + P_2 (a_2,z_2),~z_j = y_j-Q_b(s_j)+Q_b(s,a_j)+D,
    \]
    where $P_1$, $P_2$, $a_1$, $a_2$, $y_1$, $y_2$ are defined with $\bar \pi_*$ as 
    \[
    \bar \pi_*(s,x) = P_1 (a_1,y_1) + P_2 (a_2,y_2).
    \]
    and where $s_1$ (resp. $s_2$) is the state reached when taking $a_1$ (resp. $a_2$) in $\mathcal{M}$ from $s$. Then,
    \[
    P_{\bar \pi}((s_j,y_j-c_j+D)|(s,x)) = P_j,~c_j = c(s,a_j),
    \]
    and $P_{\bar \pi}(\bar z|(s,x)) =0$ for any other $\bar z$. Hence, the Markov Chain $\mathcal{M}_2$ has two transitions starting from $(s,x)$:
    \begin{itemize}
        \item $(s,x+D) \rightarrow (s_1,y_1-c_1+D)$ with probability $P_1$,
        \item $(s,x+D) \rightarrow (s_2,y_2-c_2+D)$ with probability $P_2$.
    \end{itemize}  

    With $\phi:\overline{\mathcal{S}}^{Q_b^*}\to \overline{\mathcal{S}}^{Q_b}$, defined as 
    \[
    \phi(s,x) = (s,x+D).
    \]
    We have that 
    \[
    \left\{
    \begin{aligned}
        &\phi(s_0,d) = (s_0,d+D),\\
        &P_{\mathcal{M}_1}( (s',x') | (s,x)) = P_{\mathcal{M}_2}( \phi(s',x') | \phi(s,x)),\\
        &r_{\mathcal{M}_1}((s,x)\rightarrow (s',x')) = r_{\mathcal{M}_2}(\phi(s,x)\rightarrow \phi(s',x')).
    \end{aligned}
    \right.
    \]

    Hence, we have for the expected discounted reward, as it is defined as an expectation, that 
    \[
    \mathcal{R}(\bar \pi) = \mathcal{R}(\bar \pi_*).
    \]

    \subsubsection{Proof of Remark \ref{rem:detopt}: better safety bound in the deterministic case.} 
    
    \begin{remark}\label{rem:detopt}
        In the deterministic case, one can choose instead, in the definition of the Risk-Augmented MDP, to take the risk $x_i = y_i - c(s,a)$ instead of the risk $x_i = y_i - Q_b(s,a) + Q_b(s')$ after taking the action $(a,y_i)$ and reaching $s'$. This is what we used in practice in that case, as the performances are slightly better. In that case, the optimality bound is improved, and we have
        \[
        \max_{\pi \in \Pi,~\mathcal{C}(\pi) \leq d} \mathcal{R}^{s_0}(\pi) \leq \max_{\bar \pi \in \overline{\Pi},~x_0 \leq d + 2 \Delta_b} \mathcal{R}^{(s_0,x_0)} (\bar \pi).
        \]
        The safety results are identical.
    \end{remark}

    \begin{proof}(Of Remark \ref{rem:detopt})

    The proof is very similar to the proof of \ref{thm:optimality}.     Using Lemma \ref{lem:polytyope} again, we have the existence of a $Q_b^*$ policy $\bar \pi^*$ of the Risk-Augmented MDP $\overline{\mathcal{M}}^{Q_B^*}$, satisfying
    \[
    \left\{
    \begin{aligned}
        &\mathcal{R}(\bar \pi^*)= \max_{\pi \in \Pi,~\mathcal{C}(\pi)\leq d} \mathcal{R} (\pi),\\
        &x_0 = d,\\
        &\mathcal{C}(\bar \pi^*) \leq d.
    \end{aligned}
    \right.
    \]
    Moreover, $\bar \pi^*$ is $Q_b^*$-shielded, valued, and flipping, so we can write for any $(s,x)\in \overline{\mathcal{S}}$ such that for $x\geq Q_b^*(s)$,
    \[
    \bar \pi^*(s,x) = P_1 (a_1,y_1) + P_2 (a_2,y_2),
    \]
    where the risks satisfy
    \[
    \left\{
        \begin{aligned}
            &x = \gamma_c (P_1 y_1 + P_2 y_2),\\ 
            &y_1\geq Q_b^*(s,a_1),~y_2 \geq Q_b^*(s,a_2).
        \end{aligned}
    \right.
    \]

    \textbf{Definition of $\bar \pi$.}
    
    We define $\tilde D = \{ (s,x),~x\geq Q_b^*+2\Delta_b \}$, and define the policy $\bar \pi$ on the augmented MDP $\overline{\mathcal M}^{Q_b}$ as follows. 
    \begin{itemize}
    \item The starting state of $\bar \pi$ is $(s_0,d+2\Delta_b)$.  
    \item For every $(s,x)\in \overline{S}\cap \tilde D$,
    \[
    \bar \pi(s,x) = P_1(a_1,z_1) + P_2(a_2,z_2),~ z_j =y_j + 2 \Delta_b,
    \]
    where $P_1$, $P_2$, $a_1$, $a_2$, $y_1$, and $y_2$ are defined via the image of the optimal policy $\bar \pi_*$ as 
    \[
    \bar \pi_*(s,x-2\Delta_b) = P_1 (a_1,y_1) + P_2(a_2,y_2).
    \]
    \end{itemize}
    First, Note that again, by a quick induction, all the states reachable from the initial state belong to $\tilde D$. This means that $\bar \pi$ is well defined despite having been only defined on $\tilde D$. 

    \textbf{$\bar \pi$ is $Q_b$-shielded.}

    We verify now that $\bar \pi$ is $Q_b$-shielded. For every $(s,x)\in \tilde D$, $x \geq Q_b^*(s) + 2 \Delta_b \geq Q_b(s)$. So the shielding conditions become
    \begin{enumerate}
        \item For every $(a_j,z_j)$ that the policy takes, we have $z_j \geq Q_b(s,a_j)$.
        \item The weights satisfy
        \[
        x \geq \gamma_c ( P_1 z_1 + P_2 z_2  ).
        \]
    \end{enumerate}
    For the first one, the optimal policy $\pi_*$ is $Q_b^*$ shielded, so that we have 
    \[
    y_j \geq Q_b^*(s,a_j).
    \]
    Hence,
    \[
    z_j \geq Q_b^*(s,a_j) + 2\Delta_b \geq Q_b(s,a_j) + \Delta_b.
    \]
    For the second one, we have, again because $\bar \pi_*$ is $Q_b^*$-shielded
    \[
    x-2\Delta_b \geq \gamma_c (P_1 y_1 + P_2 y_2),
    \]
    so
    \[
    \gamma_c(P_1 z_1 + P_2 z_2 ) = \gamma_c (P_1 y_1 + P_2 y_2 + 2\Delta_b) \leq (x - 2 \Delta_b) + 2 \gamma_c \Delta_b \leq x.
    \]

    So $\bar \pi$ is $Q_b$-shielded. Again, we prove that the rewards are identical by showing that the Markov Chain are identical, the details are identitcal to the proof of theorem \ref{thm:optimality}.
    
    \end{proof}

    \subsubsection{Proof of Theorem \ref{thm:noise}: Safety preserved with additional noise.}

    \textbf{Theorem 5.}(Safety of the shielding with noise)
    \emph{We consider a MDP $\mathcal{M}$ and any two policies $\bar \pi_1$ and $\bar \pi_2$, and $\xi \in [0,1]$ a small number. Then, with $\bar \pi$ the policy defined as }
    \[
    \bar \pi(s) = (1-\xi) \bar \pi_1(s) + \xi \bar \pi_2(s)
    \]
    \emph{for all $s\in \mathcal{S}$, the discounted cost of the policy $\bar \pi$ satisfies}
    \[
    C(\bar \pi) \leq C(\bar \pi_1) + \frac{\xi c_{\max}}{1-\gamma_c} \frac{1}{1-(1-\xi)\gamma_c}.
    \]

    \begin{proof}
    We let $C_n = \gamma_c D_{n-1}$, where $D_{n}$ is the discounted cost of the $n$ first steps. We will use an induction to show 
    \[
    C_{n}(\bar \pi,s) \leq C_{n}(\bar \pi_1,s) + \xi \frac{\gamma c_{\max}}{1-\gamma_c} \sum_{k=1}^n (1-\xi)^k \gamma_c^k.
    \]

    For $n=1$, we have 
    \[
    \begin{aligned}
    C_{1}(\bar \pi,s) =& (1-\xi) \sum_{a \in A(s)} P_{\bar \pi_1} (a|s) P(s'|a) \gamma_c (c(s,a) + C_0(\bar \pi,s')) \\
    &+ \xi \sum_{a \in A(s)} P_{\bar \pi_2} (a|s) P(s'|a) \gamma_c (c(s,a) + C_0(\bar \pi,s')) \leq (1-\xi) \gamma_c c_{\max}.
    \end{aligned}
    \]
    It is in particular smaller than the induction formula for $n=1$.

    Now, assuming the property holds for $k\leq n-1$. We write
    \[
    \begin{aligned}
    C_{n}(\bar \pi,s) =& (1-\xi) \sum_{a \in A(s)} P_{\bar \pi_1} (a|s) P(s'|a) \gamma_c (c(s,a) + C_{n-1}(\bar \pi,s')) \\
    &+ \xi \sum_{a \in A(s)} P_{\bar \pi_2} (a|s) P(s'|a) \gamma_c (c(s,a) + C_{n-1}(\bar \pi,s')),
    \end{aligned}
    \]
    so     
    \[
    \begin{aligned}
    C_{n} \leq & (1-\xi) \sum_{a \in A(s)} P_{\bar \pi_1} (a|s) P(s'|a) \gamma_c (c(s,a) + C_{n-1}(\bar \pi,s')) + \xi \gamma_c \frac{c_{\max}}{1-\gamma_c} \\
    \leq& (1-\xi) \sum_{a \in A(s)} P_{\bar \pi_1} (a|s) P(s'|a) \gamma_c \left( c(s,a) + C_{n-1}(\bar \pi_1,s') + \xi \frac{\gamma c_{\max}}{1-\gamma_c} \sum_{k=1}^n (1-\xi)^k \gamma_c^k \right) \\
    &+ \xi \gamma_c \frac{c_{\max}}{1-\gamma_c},
    \end{aligned}
    \]
    Now, we remark that 
    \[
    \sum_{a \in A(s)} P_{\bar \pi_1} (a|s) P(s'|a) \gamma_c \left( c(s,a) + C_{n-1}(\bar \pi_1,s') \right) = C_n(\bar \pi_1,s),
    \]
    so the inequality becomes
    \[
    \begin{aligned}
    C_{n}(\bar \pi,s)
        \leq& (1-\xi) C_n(\bar \pi_1,s) + (1-\xi) \xi \gamma_c \frac{\gamma_c c_{\max}}{1-\gamma_c} \sum_{k=1}^{n-1} (1-\xi)^k \gamma_c^k + \xi \gamma_c \frac{c_{\max}}{1-\gamma_c}\\
        \leq & C_n(\bar \pi_1,s) + \frac{\xi \gamma_c c_{\max}}{1-\gamma_c} \sum_{k=1}^n (1-\xi)^k \gamma_c^k,
    \end{aligned}
    \]
    which concludes the induction and the proof.
    
\end{proof}

\newpage

\end{document}